\documentclass[10pt]{article}

\usepackage{arxiv}

\usepackage{natbib}
\bibliographystyle{plainnat}

\bibpunct{(}{)}{;}{a}{,}{,}

\usepackage[utf8]{inputenc} 
\usepackage[T1]{fontenc}    
\usepackage{hyperref}       
\usepackage{url}            
\usepackage{booktabs}       
\usepackage{amsfonts}       
\usepackage{microtype}      
\usepackage{xcolor}         

\usepackage{caption}
\usepackage{subcaption}
\usepackage{wrapfig}

\usepackage{thm-restate}

\usepackage{macros}

\usepackage{tikz, pgfplots}
\usetikzlibrary{arrows.meta, calc, positioning}

\newcommand{\Lintervals}{\mathcal L}
\newcommand{\Rintervals}{\mathcal R}
\newcommand{\I}{\mathcal I}
\newcommand{\J}{\mathcal J}
\newcommand{\tL}{\widetilde L}
\newcommand{\tR}{\widetilde R}
\newcommand{\tI}{\widetilde I}
\newcommand{\tJ}{\widetilde J}
\newcommand{\tK}{\widetilde K}
\newcommand{\tO}{\widetilde O}
\newcommand{\Union}{\bigcup}

\newcommand{\union}{\cup}
\newcommand{\intersection}{\cap}
\newcommand{\reals}{\mathbb{R}}
\newcommand{\xcf}{x^\textnormal{cf}}   
\newcommand{\domain}{\mathcal{X}}
\newcommand{\funcs}{\mathcal{F}}

\usepackage{enumitem} 
\setlist{noitemsep, topsep=0cm}

\title{Attribution-based Explanations that Provide Recourse \linebreak Cannot be 
Robust}
\author{%
    \bf{Hidde Fokkema} \\
    Korteweg-de Vries Institute for Mathematics \\
    University of Amsterdam  \\
    Amsterdam, The Netherlands \\
    \texttt{h.j.fokkema@uva.nl}\\
    \And
    \bf{Rianne de Heide} \\
    Department of Mathematics \\
    Vrije Universiteit Amsterdam \\
    Amsterdam, The Netherlands \\
    \texttt{r.de.heide@vu.nl}\\
    \And
    \bf{Tim van Erven} \\
    Korteweg-de Vries Institute for Mathematics \\
    University of Amsterdam \\
    Amsterdam, The Netherlands\\
    \texttt{tim@timvanerven.nl}\\
}
\begin{document}
\maketitle

\begin{abstract}%
Different users of machine learning methods require different
explanations, depending on their goals. To make machine learning
accountable to society, one important goal is to get actionable options
for \emph{recourse}, which allow an affected user to change the decision
$f(x)$ of a machine learning system by making limited changes to its
input $x$. We formalize this by providing a general definition of
recourse sensitivity, which needs to be instantiated with a utility
function that describes which changes to the decisions are relevant to
the user. This definition applies to local attribution methods, which
attribute an importance weight to each input feature. It is often argued
that such local attributions should be robust, in the sense that a small
change in the input $x$ that is being explained, should not cause a
large change in the feature weights. However, we prove formally that it
is in general impossible for any single attribution method to be both
recourse sensitive and robust at the same time. It follows that there
must always exist counterexamples to at least one of these properties.
We provide such counterexamples for several popular attribution methods,
including LIME, SHAP, Integrated Gradients and SmoothGrad. Our results
also cover counterfactual explanations, which may be viewed as
attributions that describe a perturbation of $x$. We further discuss
possible ways to work around our impossibility result, for instance by
allowing the output to consist of sets with multiple attributions, and 
we provide sufficient conditions for specific classes of continuous functions 
to be recourse sensitive.
Finally, we strengthen our impossibility result for the restricted case
where users are only able to change a single attribute of $x$, by
providing an exact characterization of the functions $f$ to which
impossibility applies.%
\end{abstract}
\begin{keywords}
    explainability, interpretability, algorithmic recourse, theory
\end{keywords}

\section{Introduction}

As machine learning (ML) is changing science and society in many ways,
its trustworthiness and interpretability are coming under increasing
scrutiny \citep{Varshney2022,molnar2022}. Since most ML systems are not
inherently interpretable, there have been many proposals to generate
explanations that communicate relevant aspects of their internal
workings \citep{linardatos2020explainable, samek2019explainable}. 
Which particular aspects are relevant, depends on the target
audience and its goals \citep{arrieta2020explainable,Varshney2022}. We consider
here the case that the target audience consists of users with
corresponding feature vectors $x \in \reals^d$, who are affected by the decisions
$f(x)$ of an ML system. It is assumed that each user has some limited
ability to change (a subset of) their features in $x$, and the goal of
the users is to use this ability to influence the resulting decision of
$f$ in a way that increases their utility by a sufficient amount. In
order to achieve this goal, an explanation for a given input $x$ should
provide actionable options for \emph{recourse}, i.e.\ changes to $x$
that both provide sufficient utility \emph{and} lie within the ability
of the user to realize.
\paragraph{Attribution Methods and Counterfactual Explanations}

We consider explanations that take the form of local attributions
$\phi_f(x) \in \reals^d$, which are vectors that assign a weight to each
feature in~$x$ that indicates its importance. Many explanation methods
produce such attributions. For instance, well-known methods like LIME
\citep{ribeiro2016should}, SHAP \citep{lundberg2017unified}, 
Integrated Gradients \citep{sundararajan2017axiomatic} and
SmoothGrad \citep{smilkov2017smoothgrad} are attribution methods. When applied to image
classification, where the features are pixels, the attribution vector
$\phi_f(x)$ is called a \emph{saliency map} and can be visualized as a
picture that highlights the most important pixels. An approach closely
related to attributions, which is often considered in the context of
recourse, is to provide counterfactual explanations
\citep{karimi2021survey, verma2020counterfactual, keane2021if}. Methods
of this type include those by
\citet{poyiadzi2020face, karimi2020model,wachter2017counterfactual, mothilal2020explaining, 
dandl2020multi, dhurandhar2018explanations}. 
These methods generate an
alternative (counterfactual) input $\xcf$ that is both similar to~$x$
and provides sufficient utility. For example, if $f$ is a classifier,
then $f(\xcf)$ might be a more desirable class for the user than $f(x)$.
The differences between $\xcf$ and $x$ are then interpreted as the
changes needed to flip the class, so that $\phi_f(x) = \xcf - x$ can again
be regarded as an attribution vector.

\paragraph{Robustness to Changes in the Inputs} 

Developing precise design criteria for explainability methods
has proven to be difficult
\citep{jacovi2020towards,zhou2021evaluating,hooker2019benchmark}. In the
absence of these, a way forward is to consider desirable criteria that
should be satisfied. One such criterion, which is commonly proposed in
the context of recourse, is for explanations to be robust to changes in
the inputs \citep{karimi2021survey, alvarez2018robustness}: if we reason
that similar users should get similar options for recourse, then small
changes in the input $x$ should not cause large jumps in the explanation
$\phi_f(x)$. This can either be formalized to mean that $\phi_f$ should
be continuous or, more restrictively, that it should be (locally)
Lipschitz continuous. We will adopt the weaker of these two, because
that strengthens our main theoretical results: a function that violates
continuity automatically also violates Lipschitz continuity. Thus, we
settle on the following definition of robustness:  
\begin{defn}\label{def:robustness}
    An attribution method $\varphi_f \colon \mathcal{X} \to \mathbb{R}^{d}$ is 
    called \emph{robust} if it is continuous. 
\end{defn}
Robustness can be seen as a measure of coherence
\citep{jacovi2022diagnosing}. It appears to be difficult to achieve, however,
because a sequence of empirical counterexamples have been found in which
several methods like LIME, SHAP and Integrated Gradients  are not robust
\citep{ghorbani2019interpretation,slack2020fooling,dombrowski2019explanations,
alvarez2018robustness}.\footnote{Integrated Gradients is continuous by
design, so in this case we can take the empirical results to mean that
it is not Lipschitz continuous. In practice, the two notions are
difficult to distinguish: if we only have sample access to a function,
it is difficult to discern whether a sudden jump between two samples is
caused by an actual discontinuity, or occurs because the function is
very steep.} On the other hand, it has been established that SmoothGrad
and C-LIME (a continuous variant of LIME) \emph{are} provably robust,
because they produce attributions $\phi_f$ that are always Lipschitz
continuous \citep{agarwal2021towards}. 

\paragraph{Main Contributions}

We take a more abstract look at why existing attribution methods may
fail at being robust to changes in the inputs. Our explanation is that
there is in fact a fundamental contradiction between robustness and
providing recourse. This is established by our main result,
Theorem~\ref{thm:impossibility_result} in
Section~\ref{sec:impossibility_results}, which shows that:

\begin{center}
  \begin{minipage}{0.9\textwidth}
  \centering
  \emph{For any way of measuring utility, there exists a (continuous)
  machine learning model $f$ for which no attribution method $\phi_f$
  can be both recourse sensitive and continuous.}
  \end{minipage}
\end{center}
Our result captures many possible variations of how recourse may be
defined via a permissive property we call \emph{recourse sensitivity}.
Recourse sensitivity is introduced in
Section~\ref{sec:recourse_sensitivity}. In particular, it allows
attributions $\phi_f(x)$ to be scaled arbitrarily (which makes it easier
for a suitable $\phi_f$ to exist), as long as the vector $\phi_f(x)$
points in any direction that would allow the user to obtain sufficient
utility. We also do not restrict to the case where users want to flip
the class of a classifier $f$, but allow for a general utility function
$u_f$ that measures the user's utility. Finally, the contradiction
between recourse sensitivity and continuity of $\phi_f$ does not require
$f$ to be some obscure function, but already occurs, for instance, for quadratic
functions~$f$ (see Section~\ref{sec:proof_idea}). This implies
that most model classed used in practice are expressive enough to
exhibit the problem.
In Section~\ref{sec:impossibility_results}, we further illustrate our main
impossibility result with experiments and analytical examples that show
cases in which the well-known attribution methods SmoothGrad
\citep{smilkov2017smoothgrad}, Integrated Gradients
\citep{sundararajan2017axiomatic}, LIME \citep{ribeiro2016should} and
SHAP \citep{lundberg2017unified} fail to be recourse sensitive. We also
provide an analytical example in which counterfactual explanations fail
to be continuous. We then reflect on our impossibility result in
Section~\ref{sec:discussion}, and discuss possible ways around it. While
the impossibility result implies that some functions $f$ are
problematic, it is still possible that joint recourse sensitivity and
continuous attributions are possible under (necessarily restrictive)
conditions on $f$, for instance if $f$ is linear. We study this in
Section~\ref{sec:suff_cond_main}, where we provide sufficient conditions
on $f$ that do generalize beyond the linear case. Finally, in
Section~\ref{sec:suff_and_nec_conditions_in_one_dimension}, we
strengthen the impossibility result from
Theorem~\ref{thm:impossibility_result} and the sufficient conditions
from Section~\ref{sec:suff_cond_main} by providing an exact
characterization of the set of functions $f$ to which impossibility
applies for two restricted special cases: first we characterize
impossibility for dimension $d=1$; then we extend this result to any $d
\geq 1$ under the assumption that the user is only able to change a
single feature. 

\subsection{Related Work}

\paragraph{The Taxonomy of Explanation Methods}
Interpretability of machine learning methods can be achieved by training
inherently interpretable models $f$ or by providing \emph{post-hoc}
explanations of a model $f$ that has already been trained. Explanations
can be \emph{global}, explaining aspects of the full function $f$, or
\emph{local}, explaining the behavior of $f$ around a given point $x$ 
\citep{zhou2021evaluating, molnar2022, Varshney2022, das2020opportunities}.
Recourse fits into this taxonomy as a post-hoc, local type of
explanations \citep{linardatos2020explainable, samek2019explainable}. 

\paragraph{Distance Measures}

The survey by \citet{karimi2021survey} provides a unified view on
existing algorithms that provide recourse via counterfactual
explanations. In the simplest case, such methods measure the distance
between $x$ and $\xcf$ by the Euclidean distance, but more refined
distance measures have also been proposed
\citep{wachter2017counterfactual, karimi2020model, 
poyiadzi2020face, joshi2019towards, arvanitidis2020geometrically}.  
For simplicity, we
restrict attention to the Euclidean distance in our results, but we
expect that they can be generalized to many other distance measures as
well.

\paragraph{Consequential Recommendations}

\citet{karimi2021survey} further describe a generalization of
counterfactual explanations, called \emph{consequential
recommendations}, in which users are not able to change individual
features directly, but can only influence features indirectly via more
abstract actions. The effect of actions on features is described by a
causal model, and instead of the change in features the users are
restricted by the cost of taking particular actions. As will be
discussed in Section~\ref{sec:recourse_sensitivity}, our definition of
recourse sensitivity is sufficiently general that it can also express
consequential recommendations. Since the conditions of our main theorem are
very mild, they will then also apply to many, but not all, causal
models.

\paragraph{Other Notions of Robustness}

As mentioned already, robustness can mean multiple things. So far, we
discussed (local) Lipschitz continuity and ordinary continuity. These
notions both consider robustness with respect to the input variable $x$.
One other notable interpretation of robustness is robustness of a
counterfactual with respect to changes to the model $f$. For instance, a
model may be periodically retrained \citep{ferrario2022robustness} or it
may be updated when someone wants to be removed from the training set
\citep{pawelczyk2022trade}. There have been multiple methods developed
to generate counterfactuals that are still valid under these model
shifts \citep{black2021consistent, upadhyay2021towards,
hamman2023robust}. These types of robustness are orthogonal to the type
of robustness we consider in this work.

\paragraph{Solidifying the Foundations of Explainability}

Explainability is an exciting new research area that is witnessing a
flurry of new methods and ideas. But it is clear that no single
explanation method can exist that is good for all purposes \citep{guidotti2018survey}. As
limitations of existing explanation methods are being discovered
\citep{adebayo2018sanity, kindermans2019reliability, rudin2019stop},
this has lead to a desire to solidify the foundations and
practice of explainability research. For instance, there is a lively
debate on how to measure desirable properties like faithfulness,
fidelity, plausibility, etc. 
\citep{jacovi2020towards, ge2021counterfactual, guidotti2018survey}. In this context, it is
important to know which desirable properties can coexist in principle,
and our work contributes to this understanding by pointing out that
robustness is incompatible with providing recourse.

A series of recent works find fault with post-hoc attribution methods,
either by showing empirical examples in which they exhibit undesirable
behavior or by establishing theoretical results that identify fundamental
limitations. On the empirical side,
\citet{rudin2019stop} gives multiple arguments why post-hoc methods
should not be used for high-stake decisions, because they are often not
faithful or provide too little detail; \citet{laugel2019dangers} show
that post-hoc counterfactuals have a high risk of being far away from
any ground truth data point; \citet{slack2020fooling} show that post-hoc
methods can be used to hide biases in a model; and in an adversarial
setting it has been shown that post-hoc explanations can easily be
manipulated \citep{dombrowski2019explanations, bordt2022post}. On the
theoretical side, subsequent to a pre-print of our work,
\citet{impossibility2022bilodeau} derived additional impossibility
results, which apply to hypothesis testing between pairs of model
behaviors from the output of attribution methods. They are able to
obtain much more general conclusions than we do, because they restrict
attention to the more narrow class of attribution methods that are
complete and linear, whereas our impossibility results apply to any
attribution method in general. A significant limitation of requiring
completeness is that it excludes all counterfactual methods that are
commonly used in algorithmic recourse.
Since many of the problems with attribution methods are related to challenges in
describing exactly when and for what purpose explanation methods can be
trusted, one way forward may lie in the calls for greater rigor by
\citet{lipton2018mythos} and \citet{doshi2017towards}.
\citet{leavitt2020towards} even go as far as claiming that
``interpretability research suffers from an over-reliance on
intuition-based approaches that risk --- and in some case have caused
--- illusory progress and misleading conclusions''. These works plead
for the development of theory that may lead to provably better
interpretability methods, and we view our work as a contribution in that
direction.

\section{Recourse Sensitivity}\label{sec:recourse_sensitivity}

In this section we formally introduce recourse sensitivity, and show
that, on its own, it can always be satisfied, for instance by
counterfactual explanations. 

\paragraph{Setting}

We assume that $x$ takes values in some domain $\domain \subseteq
\reals^d$, and that the machine learning model $f$ is an element of the
set $\funcs$ of all functions from $\domain$ to $\reals$. An attribution
method for a given function $f \in \funcs$ is a function $\phi_f \colon
\domain \to \reals^d$.

\paragraph{Utility Functions}

To define recourse sensitivity, we describe the user's preferences for a
given model $f \in \funcs$ by a
\emph{utility function} $u_f\colon \domain \times \domain \to \reals$ with
the interpretation that $u_f(x,y)$ is the utility experienced by the user
if they succeed in changing their original features $x$ to new features
$y$, which implies that the decision of machine learning model $f$
changes from $f(x)$ to $f(y)$. We assume that the user is satisfied if
they achieve utility $u_f(x,y) \geq \tau$, for some threshold $\tau  \in
\mathbb{R}$.\footnote{Mathematically, it is always possible to reduce to
the case that $\tau = 0$ by subtracting $\tau$ from the utility
function, but for simplicity we allow general $\tau$.} For instance, if
$f$ represents the score of one of the classes in a classification task,
with the sign of $f$ indicating whether the class is chosen by the
classifier, then the user may have a preferred class they would like to
be classified in. For example, the preferred class could be the class
for which the score is positive. This objective, which is closely
related to finding counterfactual explanations or adversarial examples
\citep{karimi2021survey}, can be described by
\begin{equation}\label{eqn:utility_class}
  u_f(x,y) := f(y)\ge 0.
\end{equation}
Alternatively, if $f$
predicts a credit risk score for the user, they might care about
increasing their score by some amount~$\tau$, which can be represented
by 
\begin{equation}\label{eqn:utility_diff}
  u_f(x,y) := f(y) - f(x) \geq \tau.
\end{equation}
And, as a third example, if $f$ outputs the probability of some event
and the user would like to increase (or decrease) that probability by a
certain percentage $p \times 100 \%$, their goal could be expressed as
\begin{equation}\label{eqn:utility_frac}
  u_f(x,y) := \frac{f(y)}{f(x)} \geq 1 + p
\end{equation}
(or $u_f(x,y) = \frac{f(x)}{f(y)} \geq 1/(1 - p)$).

In some of our formal results we will restrict attention to utility
functions that depend on $x$ and $y$ only via the decisions $f(x)$ and
$f(y)$ of $f$. This is a very natural restriction, which is satisfied by
all examples \eqref{eqn:utility_class}, \eqref{eqn:utility_diff} and
\eqref{eqn:utility_frac} given above.

\paragraph{Recourse Sensitivity}

Informally, we call an attribution method $\phi_f$ recourse sensitive if
the user can always achieve sufficient utility by moving in the
direction of the vector $\phi_f(x)$. We aim for a very permissive
definition, which covers all methods that can reasonably be said to
provide recourse, so if there are multiple such directions at an input
$x$, then we allow any direction; and if there is no such direction at
$x$, then $\phi_f(x)$ is allowed to be anything.

Formally, we assume the user is able to change their input $x$ to
an alternative input $y$ over at most some distance $\delta \geq 0$. 
This can be used to express that a feature like income cannot double in
a reasonable period of time. As discussed in the introduction, we
restrict attention to Euclidean distance for simplicity. We further
allow for additional constraints on the alternatives via a constraint
set $C(x)$. Thus, the set of attainable points for a user with original
input $x$ is
\[
  A(x) = \{y\in \mathcal{X}  \mid \|x - y\| \le \delta, y \in
  C(x) \}.
\]
The constraints $C(x)$ may express that the user is unable to change
some features in $x$ that they have no control over, like gender, age
group or location \citep{poyiadzi2020face,mothilal2020explaining}, and
we assume throughout that $x \in C(x)$, which means that the user always
has the option of not changing their features. It could also be the case
that the user can only change the features in a particular way (e.g.\
age can only increase) or that features can only be changed together,
for instance as described by an underlying causal model as in
consequential recommendations (see the introduction)
\citep{karimi2021survey}. In most of our results, the possible choices
for $C(x)$ that we will focus on are:
\begin{enumerate}[label=(\alph*)]
\item $C(x) = \domain$: the unrestricted case;
\label{en:constraint_full}
\item $C(x) = \{y \in \domain \mid \|x -
y\|_0 \le k\}$ with $\|z\|_0$ denoting the number of coordinates in $z$
that are non-zero: the sparse case in which
changing each feature requires effort and it is assume that the user can
change at most $k$ features; and \label{en:constraint_sparse}
\item $C(x) = \{y \in \domain \mid y = x +
\alpha z, \alpha \ge 0, z \in D\}$ for some set of directions $D
\subseteq \reals^d$: the case that the user is only allowed to move
in a restricted set of directions $D$. \label{en:constraint_directions} 
\end{enumerate}

The points around $x$ that are both attainable by the user and provide
sufficient utility to reach a given threshold~$\tau$ then correspond
to
\[
    T(x) = \{y \in A(x)  \mid u_f(x, y) \ge \tau\}.
\]
Our definition of recourse sensitivity now states that any attribution
at $x$ should point in the direction of some~$y$ that is in the set
$T(x)$:
\begin{defn}[Recourse Sensitivity]\label{def:recourse_sensitivity}
    Given a threshold $\tau \in \mathbb{R}$, 
    constraint function $C$, and model
    $f \in \funcs$, an attribution method $\phi_f \colon \domain \to
    \reals^{d}$ is called \emph{recourse sensitive} if
    \[
      \phi_f(x) = \alpha(y - x) \qquad \text{for some $\alpha > 0$ and
      $y \in T(x)$,}
    \]
    for all $x \in \domain$ for which $T(x)$ is non-empty.
\end{defn}
The case that $T(x)$ is empty corresponds to a user who has no options
for recourse, so no explanation could possible help them. In this case
we allow $\phi_f(x)$ to be arbitrary. 

\begin{remark}[Satisfying Recourse Sensitivity]\label{rem:satisfying_recourse_sensitivity}
It is clear that, in the absence of further requirements on $\phi_f$,
recourse sensitivity can trivially be satisfied by setting $\phi_f(x) =
y-x$ for some arbitrary $y \in T(x)$ whenever $T(x)$ is non-empty, and
setting $\phi_f(x) = 0$ otherwise. It is also satisfied by any
counterfactual explanation $\xcf$ that minimizes $\|\xcf - x\|$ subject
to the constraint that $u_f(x,\xcf) \geq \tau$ and $\xcf \in C(x)$. To
see this, note that, if $\|\xcf - x\| \leq \delta$, then $\xcf \in
T(x)$, so $\phi_f(x) = \xcf - x$ is a recourse sensitive choice. And if
$\|\xcf - x\| > \delta$, then $T(x)$ is empty and $\phi_f(x) = \xcf - x$
is also allowed.
\end{remark}

\subsection{Profile Picture Example}\label{sec:profpic_intro}

We will now illustrate the concept of recourse sensitivity in a more
concrete setting. Consider the following use case: a user has to upload
an official profile picture of themselves $x$ to a website to obtain a
personalized card, which grants them access to some
service.\footnote{This example was inspired by the third author's
frustrating attempts to obtain a new transport card for the Dutch
railways. In reality he could not figure out why his picture was
rejected.} The receiving party performs an automated check to verify if
there is enough contrast between the brightness of the person and the
background of the image, which is implemented by a function $f$ that
computes the squared difference between the average pixel values of the
background and the average pixel values of the person. The picture is
then accepted only if $f(x) \ge \lambda_{\text{thresh}}$ for some
threshold parameter $\lambda_{\text{thresh}}$. Users whose picture is
rejected want to submit a correct picture that is accepted,
which would correspond to the
utility function $u_f(x,y) = f(y) -\lambda_{\text{thresh}}\ge 0$.  The amount by which the
user is able to increase the contrast may be described by a suitable
choice of $\delta$, and optionally by describing constraints on how the
user can manipulate the image via $C(x)$. Two examples with
corresponding saliency maps are shown in
Figure~\ref{fig:example_profile_pics}.  Negative values indicate that
a part of the picture should be darker, while positive values
indicate that a part should be lighter. It can be seen that both
saliency maps indicate the parts of the profile picture that have to be
adjusted to increase the contrast, which makes them recourse sensitive:
For the rejected picture, increasing the contrast is a direction that
points towards sufficient utility, and is therefore recourse sensitive.
For the accepted picture, increasing the contrast would only strengthen the 
classification of the preferred class.
Further details about Figure~\ref{fig:example_profile_pics} are provided
in Appendix~\ref{sec:experiments}, and
this example is continued in Section~\ref{sec:profpic_cont}.

\begin{figure}[t]
    \centering
    \begin{tikzpicture}
        \node[draw=black] (accepted_example) at (0, 0)
            {
                \includegraphics[width=0.2\textwidth]{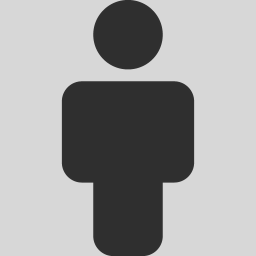}
                \includegraphics[width=0.2\textwidth]{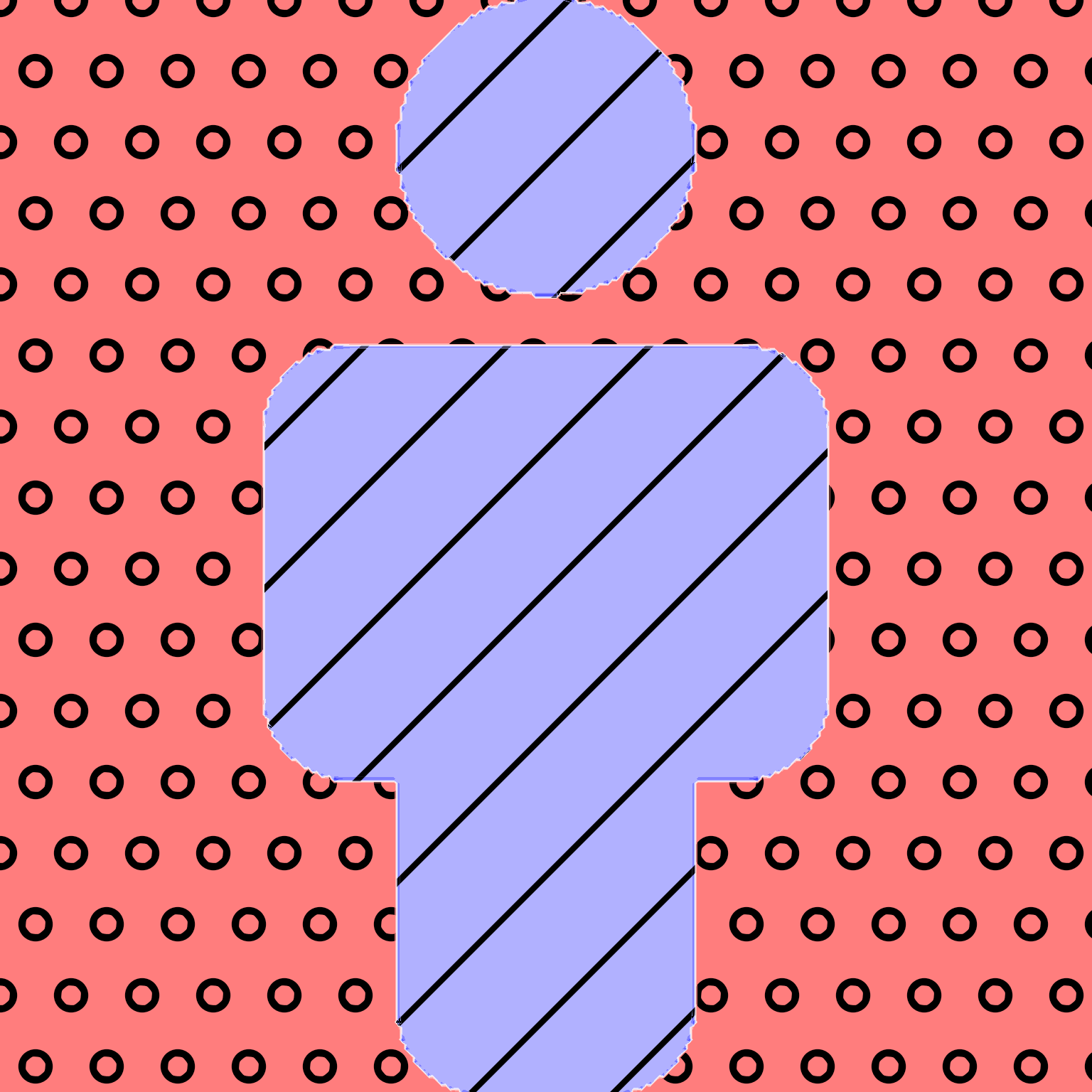}
            };
        \node[draw=black] (rejected_example) at (8, 0)
            {
                \includegraphics[width=0.2\textwidth]{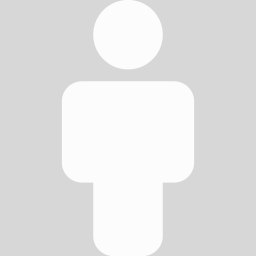}
                \includegraphics[width=0.2\textwidth]{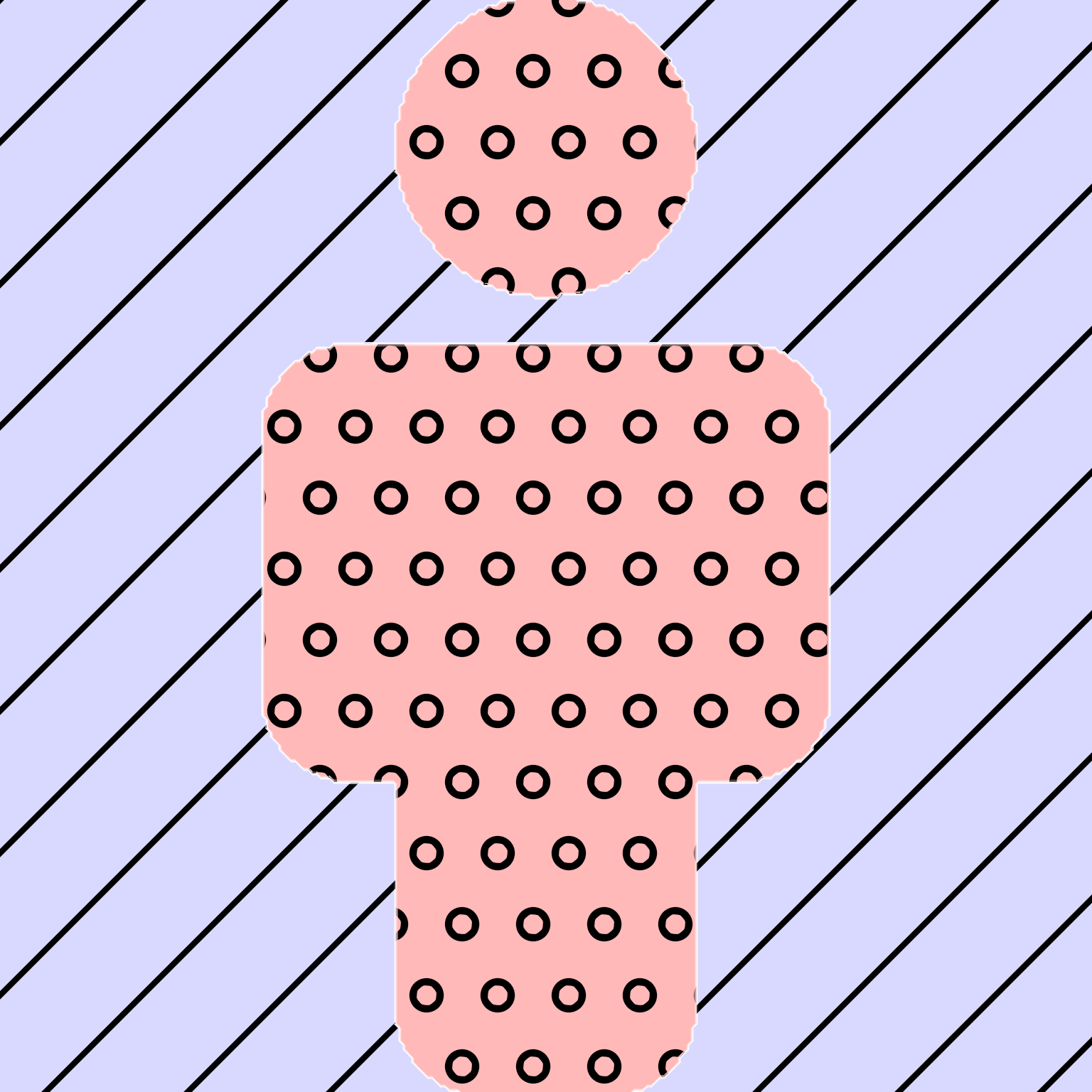}
            };

        \node[below=0.1 of accepted_example] (caption_1) {(a) Accepted
        profile picture};
        \node[below=0.1 of rejected_example] (caption_2) {(b) Rejected
        profile picture};
    \end{tikzpicture}
    \caption{Two examples of profile pictures (left in each subfigure) and their corresponding
    saliency maps (right in each subfigure). The red areas with circles
    indicate positive values, 
    and the blue areas with lines correspond to negative values. In both cases changing the picture in the direction of the
    saliency map will result in a larger contrast between the person and
    background, which indeed moves it further into/towards the accepted
    class.}
    \label{fig:example_profile_pics}
\end{figure}

\section{Impossibility of Recourse with Robustness in General}\label{sec:impossibility_results}

In this section we first present our main impossibility result, which
shows that no attribution method can be both recourse sensitive and
robust at the same time. Since no method can have both properties, it
follows that there must exist counterexamples for every existing
attribution method in which it violates at least one of the two. We
illustrate our main result by explicitly constructing such
counterexamples, both analytically for SmoothGrad, Integrated Gradients
and Counterfactual Explanations, and empirically for LIME, SHAP,
SmoothGrad and Integrated Gradients. At the end of the section we sketch
the idea behind the proof of our impossibility theorem. The full proof
is provided in Appendix~\ref{sec:proofs}.

\subsection{Main Impossibility Result}

We restrict our attention to utility functions that only depend on $x$ and
$y$ via the decisions of the machine learning model $f$, i.e.\ for which
there exists a function $\widetilde{u}$ such that $u_f(x, y) =
\widetilde{u}(f(x), f(y))$. This is a very natural restriction that
covers all examples from Section~\ref{sec:recourse_sensitivity}.

\begin{restatable}{theorem}{ImposResult}\label{thm:impossibility_result}
    Let $\delta > 0$ and $\tau \in \mathbb{R}$ be arbitrary, and let the constraint
    function $C(x)$ be any of the choices \ref{en:constraint_full},
    \ref{en:constraint_sparse} or \ref{en:constraint_directions} on
    p.\,\pageref{en:constraint_full}. Furthermore, assume the utility function
    $u_f$ is of the form $u_f(x, y) = \widetilde{u}(f(x), f(y))$, and
    that there exist $z_1, z_2 \in \reals$ for which
    $\widetilde{u}(z_1, z_2) \ge \tau$ and 
    $\widetilde{u}(z_1, z_1) < \tau$. Finally, assume that 
    $\domain\subseteq \reals^{d}$ contains a line segment $\ell$ of
    length strictly larger than $\delta$ and such that $\ell \subseteq
    C(x)$ for all $x \in \ell$. Then there exists a continuous function
    $f \colon \domain \to \reals$ for which no attribution method
    $\varphi_f$ can be both recourse sensitive and continuous.
\end{restatable}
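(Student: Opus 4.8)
The plan is to build, for the prescribed $\widetilde u$, $\tau$, $\delta$ and $C$, an explicit continuous $f$ that is constant except for two localized ``bumps'', and then to show that the $v$-component of any recourse-sensitive $\phi_f$ for this $f$ would be forced to change sign across a region on which it cannot vanish. First I set up the geometry: shrinking $\ell$ if necessary (a sub-segment still satisfies $\ell\subseteq C(x)$ for $x\in\ell$), I may assume $\ell$ has length $L$ with $\delta<L<2\delta$, and write $\ell=\{x(t):=x_0+tv:t\in[0,L]\}$ with $\|v\|=1$; for constraint \ref{en:constraint_full} any $v$ works, for \ref{en:constraint_sparse} I take $v$ to be a coordinate direction (so any two points of $\ell$ differ in a single coordinate, hence $\ell\subseteq C(x)$ when $k\ge 1$), and for \ref{en:constraint_directions} I simply use the segment assumed to exist. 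Fix $z_1,z_2$ with $\widetilde u(z_1,z_2)\ge\tau$ and $\widetilde u(z_1,z_1)<\tau$, put $p_1=x(0)$, $p_2=x(L)$ (so $\|p_1-p_2\|=L\in(\delta,2\delta)$), and for a small parameter $\epsilon>0$ let $\beta_p(x)=\max\{0,\,1-\|x-p\|^2/\epsilon^2\}$ and define $f(x)=z_1+(z_2-z_1)\max(\beta_{p_1}(x),\beta_{p_2}(x))$. This $f$ is continuous (and piecewise quadratic), $f(p_i)=z_2$, and $f\equiv z_1$ outside the open balls $B(p_1,\epsilon)\cup B(p_2,\epsilon)$.

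The two properties of $f$ that drive the argument are: (i) since $\widetilde u(z_1,z_1)<\tau$, the value $z_1$ lies outside $\{z:\widetilde u(z_1,z)\ge\tau\}$, so the ``success set'' $\{y:\widetilde u(z_1,f(y))\ge\tau\}$ is contained in $B(p_1,\epsilon)\cup B(p_2,\epsilon)$; and (ii) since $\widetilde u(z_1,z_2)\ge\tau$, that success set contains $p_1$ and $p_2$. (Note it is precisely the arbitrariness of $\widetilde u$ that forces this ``constant plus two bumps'' shape instead of a single quadratic: only the two prescribed values of $\widetilde u$ may be relied upon.) Call the parameters $t\in[\epsilon,L-\epsilon]$ the \emph{plateau}; there $x(t)\in\ell$ and $x(t)\notin B(p_i,\epsilon)$, so $f(x(t))=z_1$.

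Now suppose $\phi_f$ is recourse sensitive and continuous, and set $h(t):=\langle\phi_f(x(t)),v\rangle$, which is continuous in $t$. For plateau $t$ we have $x(t)\notin T(x(t))$ (because $\widetilde u(z_1,z_1)<\tau$), so recourse sensitivity forces $\phi_f(x(t))=\alpha(y-x(t))$ with $\alpha>0$ and $y\in T(x(t))\subseteq B(p_1,\epsilon)\cup B(p_2,\epsilon)$ by (i); since every point of $B(p_1,\epsilon)$ has $v$-coordinate below $\epsilon\le t$ and every point of $B(p_2,\epsilon)$ has $v$-coordinate above $L-\epsilon\ge t$, we get $\langle y-x(t),v\rangle\ne 0$, hence $h(t)\ne 0$ on the entire plateau. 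Next, $\|p_1-p_2\|<2\delta$ implies that from every plateau point at least one of $p_1,p_2$ lies within distance $\delta$, so by (ii) $T(x(t))\ne\emptyset$ there; and $\|p_1-p_2\|>\delta$ yields a non-degenerate interval $I_L=[\epsilon,\,L-\delta-\epsilon]$ of plateau points from which the ball $B(p_2,\epsilon)$ is entirely out of reach — there $T(x(t))\subseteq B(p_1,\epsilon)$, so $h(t)<0$ — and symmetrically an interval $I_R=[\delta+\epsilon,\,L-\epsilon]$ on which $h(t)>0$; for $\epsilon$ small ($2\epsilon<L-\delta$) these are non-degenerate, disjoint, and the open interval between them stays inside the plateau. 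Since $h$ is continuous, negative on $I_L$ and positive on $I_R$, the intermediate value theorem produces a $t_0$ strictly between them — hence in the plateau — with $h(t_0)=0$, contradicting the previous paragraph. So no recourse-sensitive, continuous $\phi_f$ exists for this $f$.

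I expect the main obstacle to be the quantitative bookkeeping: choosing $L\in(\delta,2\delta)$ and $\epsilon$ small so that three requirements hold simultaneously — both bumps reachable from the middle of the plateau (needs $\|p_1-p_2\|<2\delta$), only one bump reachable from each of $I_L$ and $I_R$ (needs $\|p_1-p_2\|>\delta$), and the gap between $I_L$ and $I_R$ contained in the plateau — which ultimately reduces to the single chain $\delta<\|p_1-p_2\|<2\delta$ together with ``$\epsilon$ small''. Two further points require only routine verification. First, in dimension $d\ge 2$ recourse sensitivity constrains only the $v$-component of $\phi_f$, not $\phi_f$ itself, which is exactly why everything is run through $h(t)=\langle\phi_f(x(t)),v\rangle$; in dimension $d=1$ the argument applies verbatim with $v=1$. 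Second, all three constraint choices \ref{en:constraint_full}--\ref{en:constraint_directions} are handled at once: adding a constraint only shrinks $T(x)$, so the ``forced direction'' conclusions are preserved, while the non-emptiness of $T(x(t))$ on the plateau is witnessed by $p_1$ or $p_2$, which lie on $\ell\subseteq C(x(t))$.
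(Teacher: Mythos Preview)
Your argument is correct. The core idea matches the paper's --- build a continuous $f$ that equals $z_1$ on a ``plateau'' along $\ell$ with $z_2$ attainable on both sides, so that recourse forces the attribution to point in opposite directions on two overlapping sub-intervals --- but the execution differs in two respects. First, the paper constructs a piecewise linear $f$ (equal to $z_1$ on a central interval and $z_2$ beyond $\pm 7\delta/8$) and then invokes Theorem~\ref{thm:2a}, its $L/R/O$ separation characterization, to derive the contradiction; you instead use two localized bumps and a direct intermediate-value argument on $h(t)=\langle\phi_f(x(t)),v\rangle$, which makes the proof self-contained and avoids forward reference to Theorem~\ref{thm:2a}. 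Second, the paper treats $d=1$, then $\domain=\reals^d$, then general $\domain$ separately, reducing the last two to the first via rotation and translation; your projection onto $v$ handles all dimensions at once and exploits the hypothesis $\ell\subseteq C(x)$ uniformly for the three constraint choices. What the paper's route buys is that it illustrates how the general characterization in Theorem~\ref{thm:2a} applies; what your route buys is brevity and independence from that later result. A minor remark: your opening comments about ``choosing $v$'' for constraints \ref{en:constraint_sparse} and \ref{en:constraint_directions} are unnecessary --- the segment $\ell$ (and hence $v$) is given as part of the hypothesis, and the only properties you actually use are $\ell\subseteq\domain$ and $\ell\subseteq C(x)$ for $x\in\ell$.
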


The required existence of $z_1$ and $z_2$ rules out the trivial case
that the user can never achieve sufficient utility or will already
receive sufficient utility without changing their input $x$. The
existence of the line segment $\ell$ is used to ensure there is enough
room within the domain $\domain$ and the constraints $C(x)$ to construct
the counterexample. Although already very mild, the conditions of
Theorem~\ref{thm:impossibility_result} can potentially be generalized,
as discussed in Section~\ref{sec:conclusion}. As an example of a setting
that is covered by the theorem, we may for instance consider the
classification setting in which the user wants to be classified in some
preferred class, with a full domain and without any constraints. The
conditions of the theorem are then all satisfied, for any $z_1 < 0 <
z_2$, and the result simplifies to:
\begin{cor}[Unconstrained Classification]
  Suppose $\domain = \reals^d$, $C(x) = \reals^d$, $u_f(x,y) =
  f(y)$, $\tau \in \mathbb{R}$ and $\delta > 0$.
  Then there exists a continuous function $f \colon \domain \to \reals$ for
  which no attribution method $\varphi_f$ can be both recourse sensitive
  and continuous.
\end{cor}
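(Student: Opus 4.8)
The plan is to obtain this corollary as a direct instantiation of Theorem~\ref{thm:impossibility_result}; the entire task is to check that its hypotheses are met in the present setting. First I would rewrite the utility function $u_f(x,y) = f(y)$ in the form required by the theorem, $u_f(x,y) = \widetilde{u}(f(x), f(y))$, which is achieved by taking $\widetilde{u}(z_1, z_2) := z_2$, so that $\widetilde{u}$ simply discards its first argument.

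Next I would supply the two points $z_1, z_2 \in \reals$ whose existence the theorem demands. With $\widetilde{u}(z_1,z_2) = z_2$, the requirement $\widetilde{u}(z_1, z_2) \ge \tau$ reduces to $z_2 \ge \tau$, and $\widetilde{u}(z_1, z_1) < \tau$ reduces to $z_1 < \tau$; hence the choice $z_1 := \tau - 1$ and $z_2 := \tau$ works for every $\tau \in \reals$. Concretely, $z_1$ plays the role of a model output that falls short of the threshold and $z_2$ one that meets it, so this just encodes that recourse is neither vacuous nor already attained.

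Then I would verify the geometric condition on the domain and constraints. Since $\domain = \reals^d$ and $C(x) = \reals^d$ for all $x$, the constraint function is exactly case~\ref{en:constraint_full} (the unrestricted case), and \emph{any} line segment $\ell \subseteq \reals^d$ satisfies $\ell \subseteq C(x)$ for every $x \in \ell$. Taking, say, $\ell := \{\, t e_1 : 0 \le t \le 2\delta \,\}$ gives a segment of length $2\delta > \delta$, as needed; and $\delta > 0$, $\tau \in \reals$ are given in the statement. All hypotheses of Theorem~\ref{thm:impossibility_result} therefore hold, and the theorem produces a continuous $f \colon \reals^d \to \reals$ admitting no attribution method that is both recourse sensitive and continuous, which is precisely the corollary.

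Here there is essentially no obstacle: all the substance sits inside Theorem~\ref{thm:impossibility_result}, and the corollary is a matter of bookkeeping. The only mild subtlety is keeping track of the strict versus non-strict inequalities so that the chosen $z_1, z_2$ remain valid for an arbitrary threshold $\tau$ rather than only for $\tau = 0$, which the choice above handles uniformly.
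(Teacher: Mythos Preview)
Your proposal is correct and follows exactly the paper's approach: the corollary is obtained by verifying the hypotheses of Theorem~\ref{thm:impossibility_result} in this specific setting. Your handling of the $z_1,z_2$ condition is in fact slightly more careful than the paper's remark (which suggests $z_1 < 0 < z_2$, implicitly for $\tau=0$), since your choice $z_1=\tau-1$, $z_2=\tau$ works uniformly for all $\tau\in\reals$ as the corollary requires.
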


\subsection{Analytical Examples}

We proceed to construct explicit analytical counterexamples $f$ and
utility functions $u_f$ for the SmoothGrad and Integrated Gradients
attribution methods. Since both attribution methods are always
continuous, it follows from Theorem~\ref{thm:impossibility_result} that
they cannot always be recourse sensitive, which is what the
counterexamples demonstrate. Both examples are in dimension $d=1$ and we
will assume that there are no constraints, i.e.\ $\domain = C(x) =
\reals$.

\begin{example}[SmoothGrad]\label{ex:x2smoothgrad}
\label{exmp:smoothgrad}
    Consider the function $f(x) = x^2$, the utility 
    $u_f(x, y) = f(y) - f(x)$ and $\tau \in \mathbb{R}$, which expresses that the user wants to
    increase $f$ by at least $\tau$. The attribution given 
    to each point by the SmoothGrad procedure will be
    \begin{align*}
        \varphi_{f}^{\text{SG}}(x)= \mathbb{E}_{a \sim N(0, \sigma^2)}
        \left[  f'(x + a)\right]  =  \mathbb{E}_{a \sim N(0, \sigma^2)}[2x + 2a] = 2 x
    .\end{align*}
    Here, $N(0, \sigma^2)$ denotes the normal distribution with mean $0$
    and some specified variance parameter $\sigma^2 > 0$. In almost all
    points $x$ this attribution indeed points in a direction that
    increases $f(x)$. However, in the point $x = 0$ the attribution is
    $0$, which does not provide meaningful recourse, because it does not
    tell the user that they can in fact increase $f(x)$ by changing $x$.
    Whenever $\delta \geq \sqrt{\tau}$ the user would be able to achieve
    a sufficient increase in utility by moving in any direction from
    $x=0$, and this violates recourse sensitivity.
\end{example}

\begin{example}[Integrated Gradients]
\label{exmp:integrated_gradients}
    We examine $f(x) = e^{- x^2}$ and $u_f(x, y) = \frac{f(x)}{f(y)} \ge \tau$ for some 
    threshold $\tau > 1$, which means the user wants to decrease $f(x)$
    by a factor of at least $\tau$. Also choose $\delta$ such that
    $\delta \ge \sqrt{\log \tau} $. If the user starts in $x=0$, then this 
    is possible by moving far away enough in both directions. 
    Indeed if you would move to some $y \in \mathbb{R}$ from $x=0$ with 
    $\delta \geq |y| \ge \sqrt{\log(\tau)}$, it is possible
    to decrease $f$ by the requested fraction $\tau$ within a
    $\delta$-distance of $x=0$, because then
    \begin{align}\label{eq:int_grad_x0}
        u_f(0, y) =  \frac{1}{e^{-y^2}}  = e^{y^2} \ge e^{\log(\tau)} = \tau
    .\end{align}

    We can explicitly calculate the attributions of the Integrated
    Gradients method, which depends on a baseline point $x^0$:
    \begin{align*}
        \varphi_{f}^{\text{IG}}(x) = (x - x^0) \int\limits_{0}^{1} 
            f'\left( x^0 +  \alpha (x - x^0)  \right) \, \mathrm{d}\alpha 
            = f(x) - f(x^0) = f(x) - 1,
    \end{align*}
    where we have chosen $x^0 = 0$ as the baseline. Note that
    $\varphi_f^{\text{IG}}(x) < 0$ for all $x \neq 0$ and
    $\varphi_f^{\text{IG}}(0)=0$. In this case, recourse is provided 
    for $x < 0$, as moving towards the negative side is the fastest way
    to decrease the
    output of $f$.
    However, recourse is not provided for $x=0$, for which $\varphi_f^{IG}(0) = 0$, but it
    should be non zero because of \eqref{eq:int_grad_x0}. 
    There are more points which do not get recourse in this example. The points with 
    $x>\delta$ can only decrease their output by moving to the right. To decrease the output
    by moving to the left would require moving a distance that is larger than $\delta$. 
    To see this, note that the function $f$ is symmetric and to decrease $f( x ) $ by moving
    to the left from some $x > \delta$ would at least require that $y < -x$, which is already
    a distance of $2\delta$ away from $x$. 
    As noted before, the attribution
    is always negative, so we find that $\varphi_f^{IG}$ is also not recourse sensitive 
    for the points $x > \delta$.
\end{example}

Let us look at one last example. We will show that in some cases for
binary classification, a counterfactual explanation cannot be
continuous. 

\begin{example}[Counterfactual Explanations]\label{exm:counterfactua}
 For this example
    set $\mathcal{X}=\mathbb{R}^{2},  u_f(x,y) = f(y), \tau = 0, \delta >1$ and 
    \begin{align*} 
        f(x) = \|x\| - 1
    .\end{align*}
    In this example, the points within the unit circle are classified in the negative 
    class and the points outside it as the positive class. The utility is such that if you are
    inside the circle, you want to move out of it. We can construct 
    a simple counterfactual method by setting
    \begin{align*}
        x^{\text{CF}}(x) = \argmin_{\|y\|\ge 1} \| x - y\| 
    .\end{align*}
    This optimization problem is well defined and has a unique solution for every point $x\neq 0$. 
    There, the problem is still well-defined, but no unique solution exists. For the points 
    with $\|x\| \ge 1$, the minimizer will be the point itself, and for $\|x\| < 1$, 
    it will be $x /\|x\|$. Using these counterfactuals we can build a recourse sensitive 
    attribution function by setting
    \begin{align*}
        \varphi_f(x) =
        \begin{cases}
            0 & \text{ if } \|x\| \ge 1, \\
            \frac{x}{\|x\|} - x & \text{ if $0 < \|x\| < 1$.}
        \end{cases}
    \end{align*}
    For the point $x=0$ we now have many options, as the attribution is
    allowed to point to any point on the unit circle. However, no choice
    we make can produce an attribution that is continuous at $x=0$,
    because the limit of $\varphi_f(x)$ is different when $x$ approaches
    $0$ along different lines.
\end{example}

\begin{figure}[t]
    \centering
    \begin{tikzpicture}
        \node[inner sep=0pt, draw=black] (orig_pic_1) at (0,0)
            {\includegraphics[width=0.15\textwidth]{figures/user-icon-edited-47.png}};
        \node[inner sep=0pt, draw=black] (attribution_1_1) at (3, 0)
            {\includegraphics[width=0.15\textwidth]{figures/vanilla_gradients_qd_user_icon_47.png}};
        \node[inner sep=0pt, draw=black] (attribution_1_2) at (5.75, 0)
            {\includegraphics[width=0.15\textwidth]{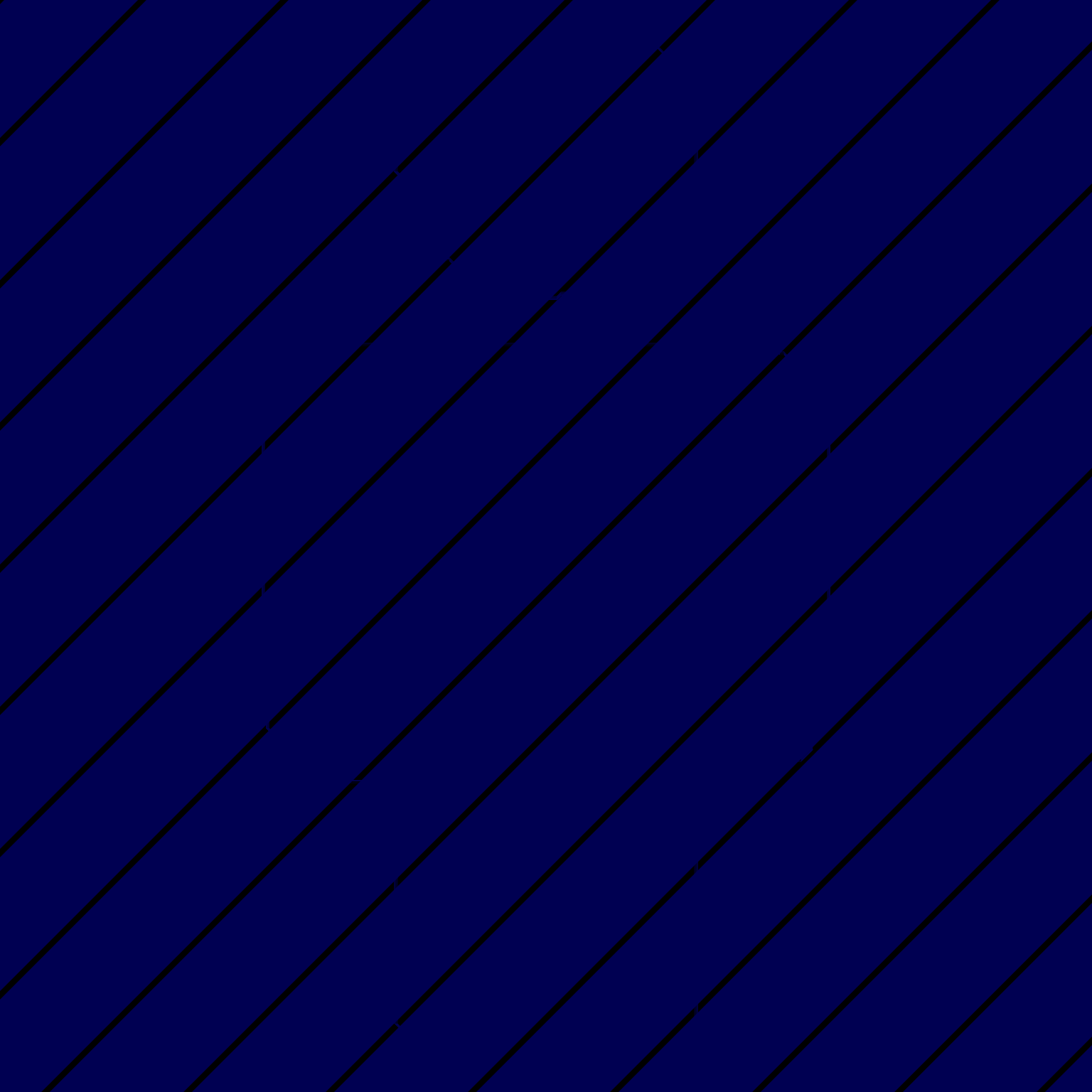}};
        \node[inner sep=0pt, draw=black] (attribution_1_3) at (8.5, 0)
            {\includegraphics[width=0.15\textwidth]{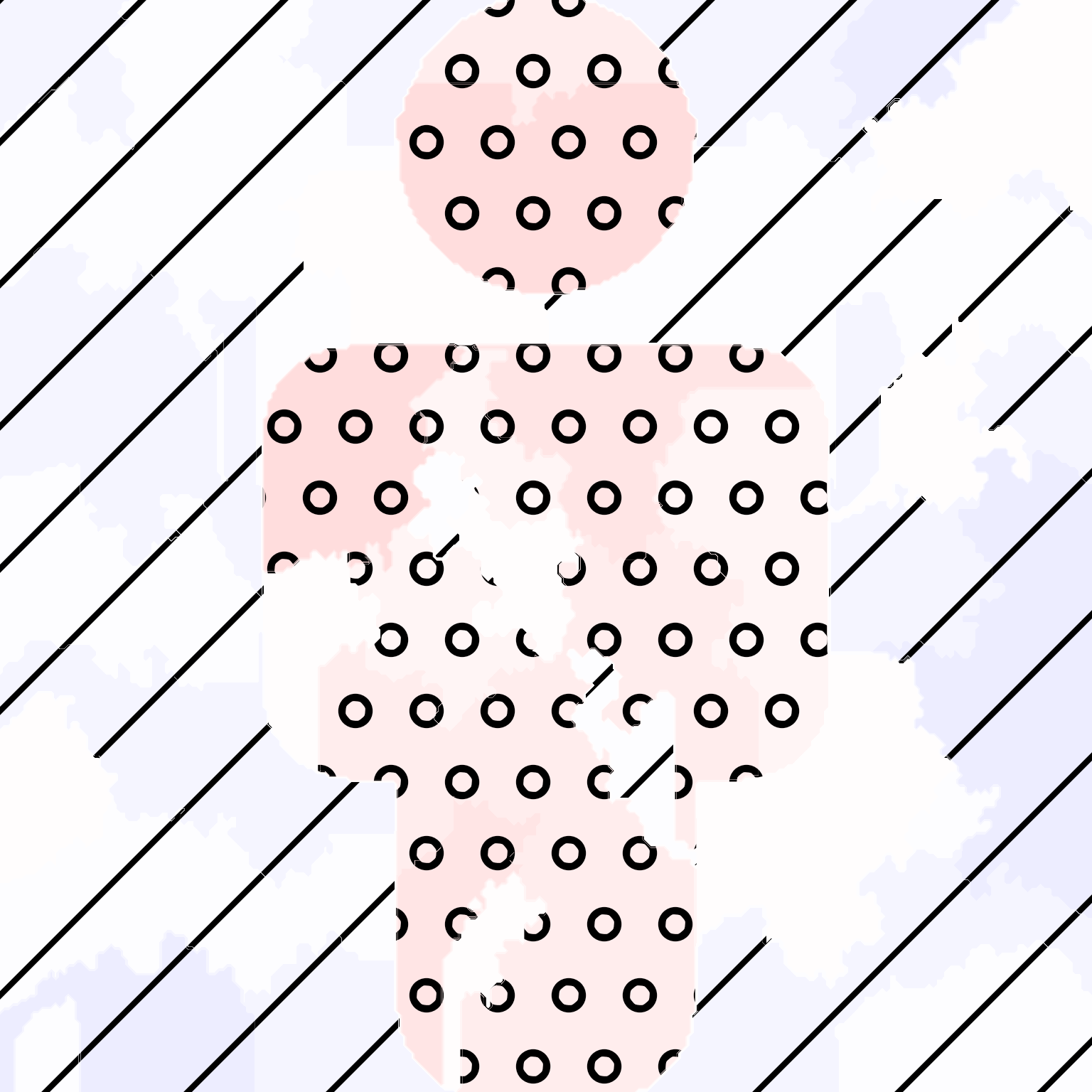}};
        \node[inner sep=0pt, draw=black] (attribution_1_4) at (11.25, 0)
            {\includegraphics[width=0.15\textwidth]{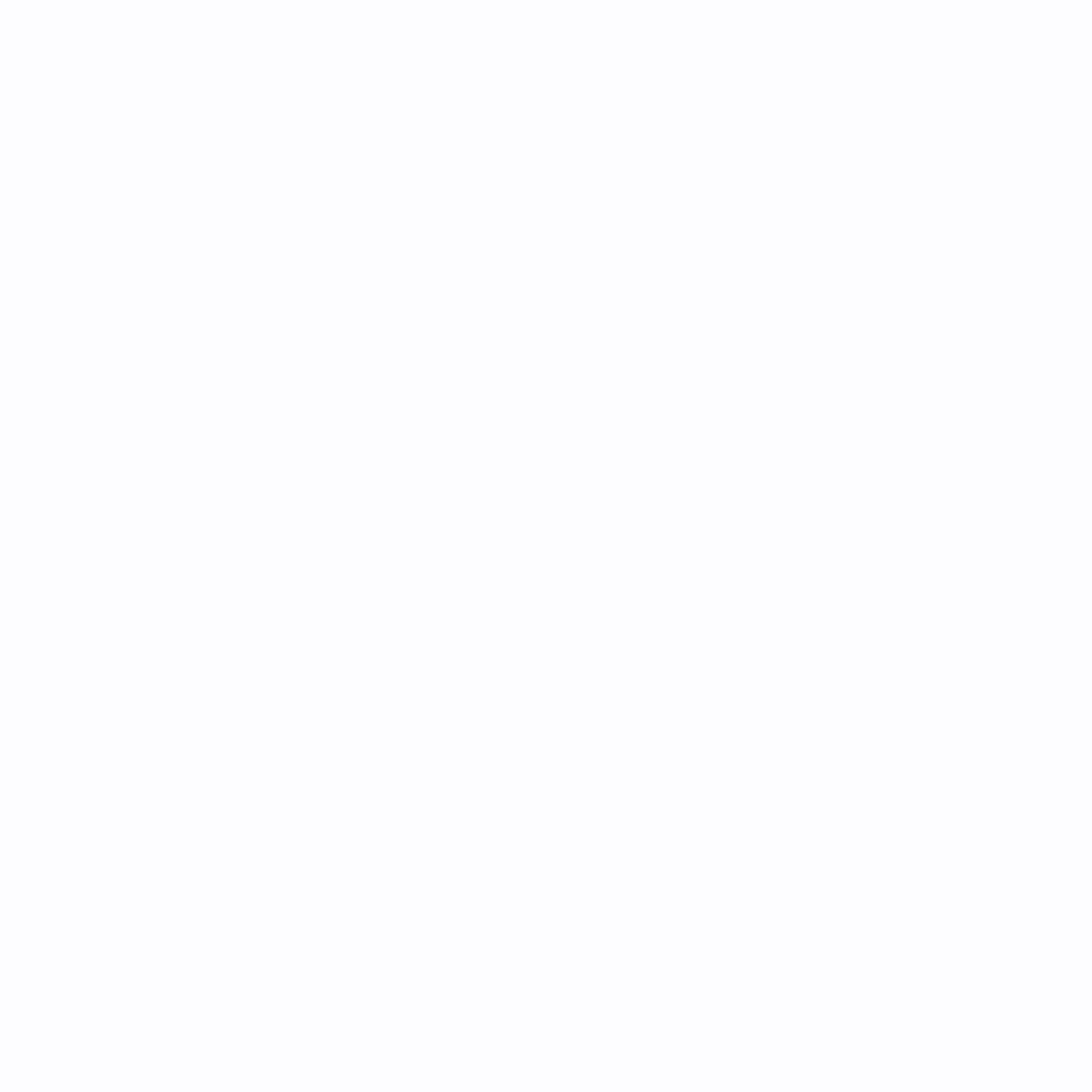}};

        \node[inner sep=0pt, draw=black] (orig_pic_2) at (0, -2.75)
            {\includegraphics[width=0.15\textwidth]{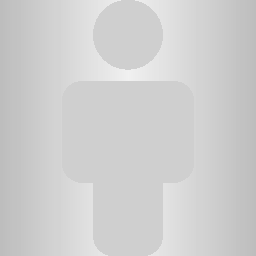}};
        \node[inner sep=0pt, draw=black] (attribution_2_1) at (3, -2.75)
            {\includegraphics[width=0.15\textwidth]{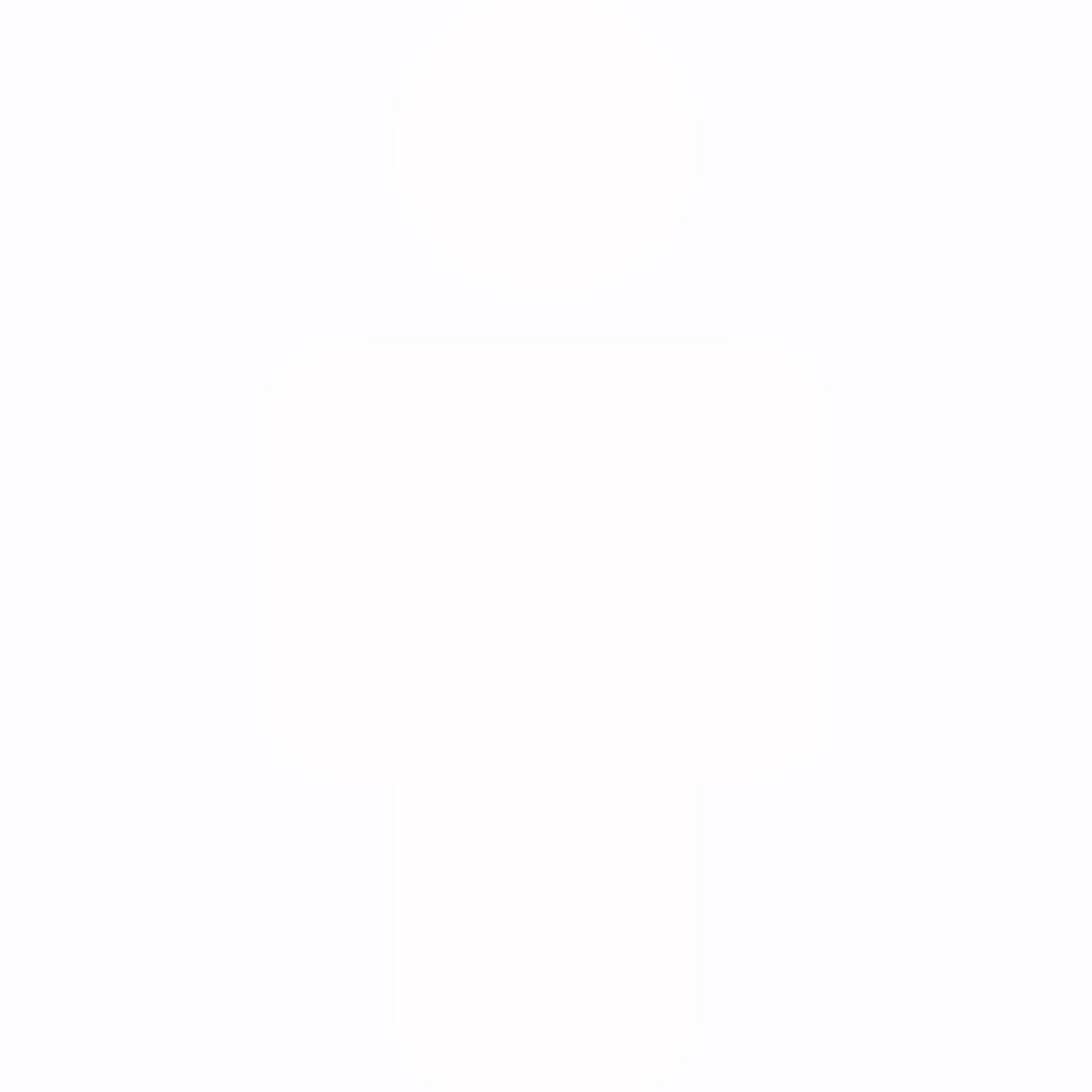}};
        \node[inner sep=0pt, draw=black] (attribution_2_2) at (5.75, -2.75)
            {\includegraphics[width=0.15\textwidth]{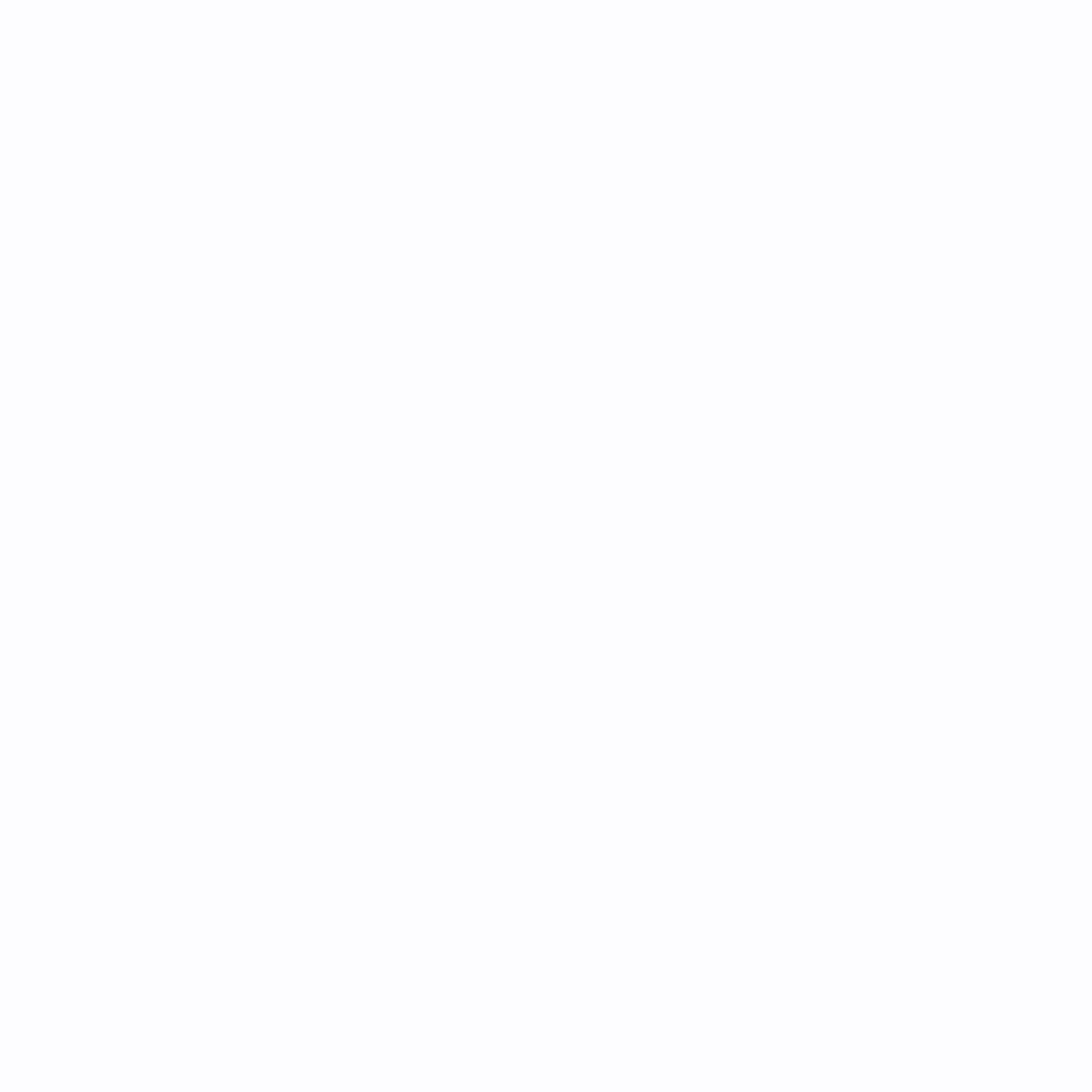}};
        \node[inner sep=0pt, draw=black] (attribution_2_3) at (8.5, -2.75)
            {\includegraphics[width=0.15\textwidth]{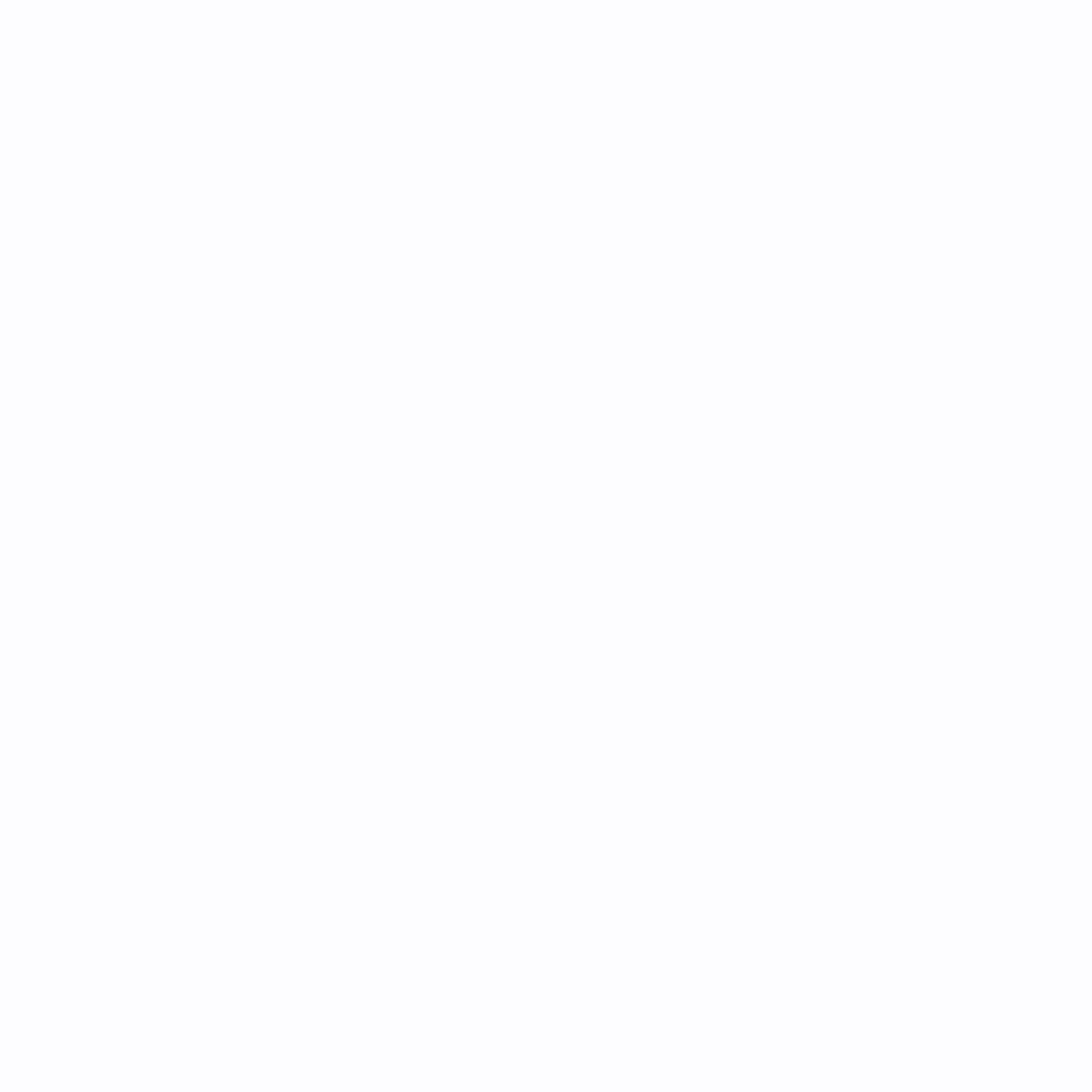}};
        \node[inner sep=0pt, draw=black] (attribution_2_4) at (11.25, -2.75)
            {\includegraphics[width=0.15\textwidth]{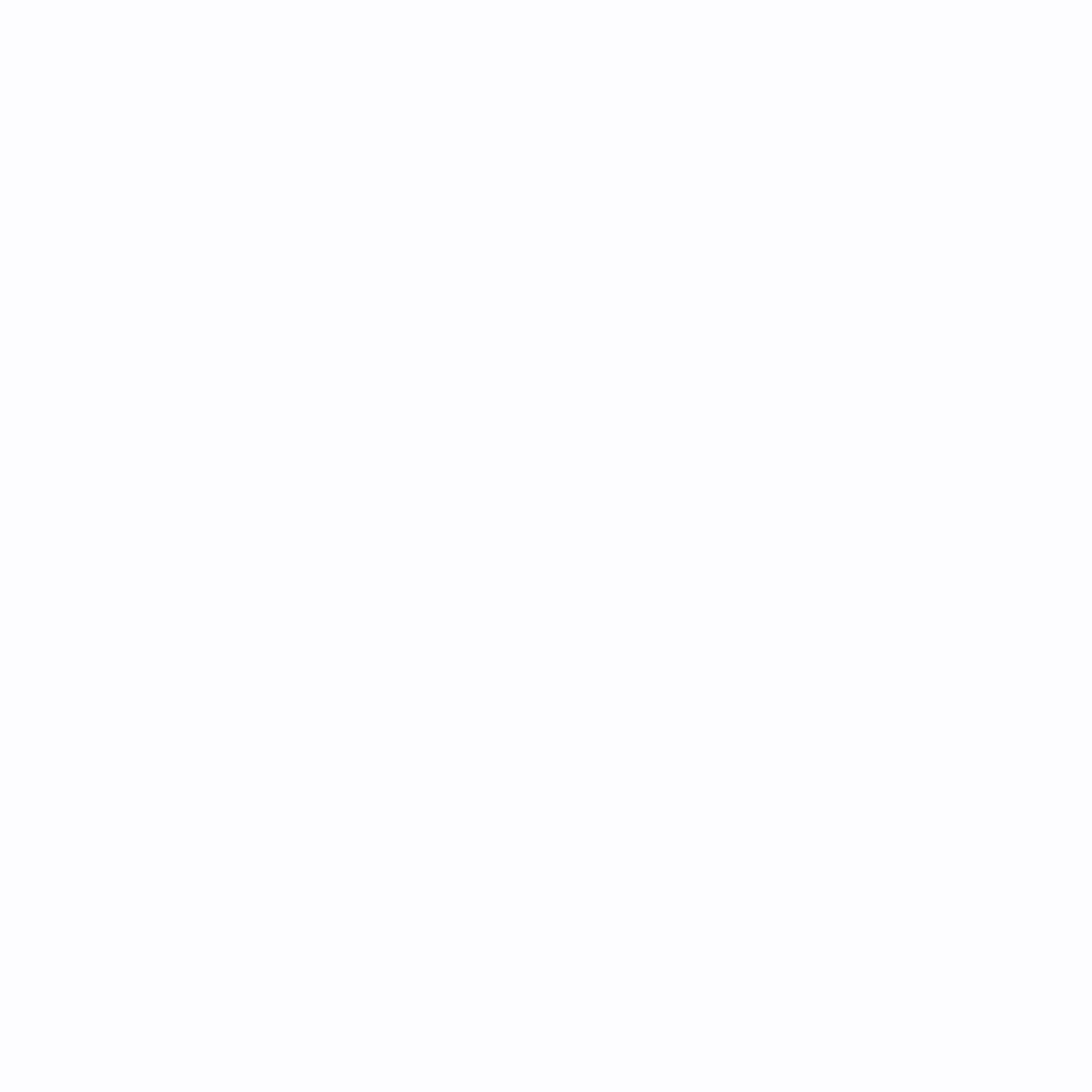}};

         \node[inner sep=0pt] (colorbar) at (13.4, -1.375) 
            {\includegraphics[width=0.06\textwidth]{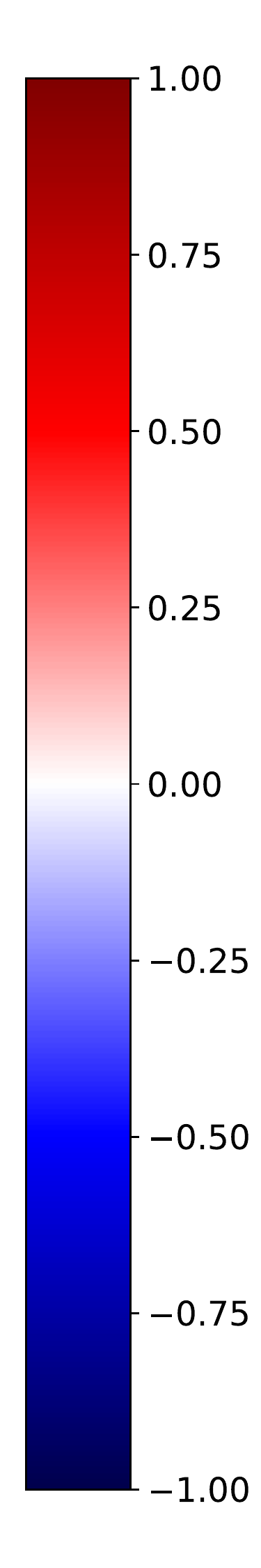}};

        \node (text_1) at (0, 1.5) {Profile Picture};
        \node (text_2) at (3, 1.5) {Gradient Methods};
        \node (text_3) at (5.75, 1.5) {LIME manual};
        \node (text_4) at (8.5, 1.5) {LIME auto};
        \node (text_5) at (11.25, 1.5) {SHAP};

        \draw (1.5, 1.15) -- (1.5, -3.9);
    \end{tikzpicture}

    \caption{Saliency maps by different methods for two profile
    pictures that are rejected by the model.}
    \label{fig:no_recourse_pictures}
\end{figure}

\subsection{Profile Picture Example (Continued)}\label{sec:profpic_cont}

In this section we complement our analytical examples from the previous
section with empirical examples in which well-known attribution methods
are recourse insensitive. We will demonstrate this for LIME, SHAP,
SmoothGrad and Integrated Gradients, which can all be used to generate
saliency maps. Our setup continues the profile picture example from
Section~\ref{sec:profpic_intro}. Since the gradient of $f$ is linear in
$x$ in this example, both SmoothGrad and Integrated Gradients simplify,
and both coincide with the Vanilla Gradients method $\phi_f(x) = \nabla
f(x)$ \citep{simonyan2014deep}. We will refer to all three together as
`Gradient Methods'. LIME for images depends on a partition of the image
into superpixels. We consider two variants. One variant is `LIME
manual', in which we provide the indices of the person as one
super-pixel, and the indices of the background for a second super-pixel.
The other variant is `LIME auto', which uses the default segmentation
algorithm of LIME to obtain the super-pixels. See
Appendix~\ref{sec:experiments} for further details about the
experimental set-up. 

Two example profile pictures with their corresponding saliency maps are
shown in Figure~\ref{fig:no_recourse_pictures}. Both examples are
rejected by the classifier. In both cases the user could change the
decision by making the background darker (provided that $\delta$ is
large enough). In the top row two of the methods, the Gradient Methods
and LIME auto, do provide recourse. For LIME auto it is difficult to
see, but the region of the person has a positive value and the
surrounding region a negative value. Changing the picture accordingly
would result in a darker background compared to the person, and
therefore satisfies the requirement of recourse sensitivity. In the
bottom example, the picture is also rejected. The attributions all show
a flat saliency map, for which moving in the direction $\phi_f(x)$ has
no effect on the value of $f$, so none of the attribution methods is
recourse sensitive. There is also significant disagreement between the
saliency maps: LIME manual assigns a large positive value to all pixels,
SHAP a very small almost unnoticeable negative value, and the remaining
methods attribute $0$ to all pixels.

\subsection{Proof Idea for Theorem~\ref{thm:impossibility_result}}\label{sec:proof_idea}

Having illustrated the implications of
Theorem~\ref{thm:impossibility_result} in the previous sections, we now
explain the idea behind its proof. We consider again the setting of
Example~\ref{ex:x2smoothgrad}, with $f(x) = x^2$, $u_f(x,y) = f(y) -
f(x)$ and $\delta \ge \sqrt{\tau} >0$. Let $L$ denote the interval of
points $x$ where recourse is possible by moving to the left, and let $R$
denote the interval of points where recourse is possible by moving to
the right. As illustrated in Figure~\ref{fig:x_squared_proof}, we then
find that neither interval fully contains the other, but they overlap on
$L \intersection R = \left[ \frac{\tau - \delta^2}{2\delta},
\frac{\delta ^2  - \tau}{2\delta} \right]$. Since the attribution has to
be negative to the left of this overlap and positive to the right of the
overlap, it has to go through zero somewhere inside the overlap (by
continuity and the intermediate value theorem), but this is not allowed
because the attribution is not allowed to be zero anywhere where
recourse is possible. Therefore, no attribution can be continuous and
recourse sensitive simultaneously.  In the actual proof of
Theorem~\ref{thm:impossibility_result}, we follow a similar argument,
but for a different function $f$, whose existence is guaranteed by the
assumptions of the Theorem. To generalize from one dimension to higher
dimensions, we then embed this function along the line segment~$\ell$. 

\begin{figure}[t]
    \centering
    \begin{tikzpicture}[decoration=brace, line width=0.3mm]
        \draw[->] (-2.5, 0) -- (2.5, 0); 
        \draw[->] (0, -0.2) -- (0, 2); 
        
        \draw[-{Bracket[width=4mm,line width=1pt,length=1.5mm]},
            color={rgb:red,34;green, 178;blue, 178}]
            (-2.5, -0.7) node[left] {$L$} -- (1, -0.7);
        \draw[{Bracket[width=4mm,line width=1pt,length=1.5mm]}-,
            color={rgb:red,178;green, 34;blue, 34}]
            (-1, -1.1) -- (2, -1.1) node[right] {$R$};
        
        \draw[dotted] (0, 0.4) -- (1, 0.4);
        \draw[dotted] (-1.5, 0.9) -- (0, 0.9);
        \draw[decorate, decoration={mirror}]  (0, 0.4) -- (0, 0.9) 
            node[pos=0.5, xshift=0.3cm] {$\tau$};
    
        \draw[decorate, decoration={aspect=0.6}] (-1.5, 2.1) -- (1, 2.1)
            node[pos=0.6, yshift=0.3cm] {$\delta$};
        
        \draw[dotted] (-1.5, 2.1) -- (-1.5, 0.9);
        \draw[dotted] (1, 2.1) -- (1, 0.4);

        \node at (-1.5, 0) {|};
        \node at (-1.5, -0.3) {$y$};
        
        \node at (1, 0) {|};
        \node at (1, -0.3) {$x$};

        \draw[domain=-2.5:2.5, smooth, variable=\x, black] 
            plot ({\x}, {0.4*\x*\x});

    \end{tikzpicture}
    \caption{$L$ and $R$ sets for $f(x)=x^2$.} 
    \label{fig:x_squared_proof}
\end{figure}
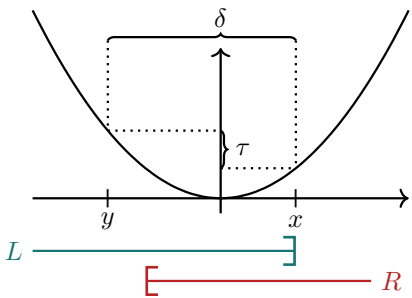

\section{Discussion}\label{sec:discussion}

In this section we pause to reflect on the implications of our
impossibility result, and suggest possible ways to circumvent it.

\paragraph{How Common is the Problem?}

As is clear from the proof sketch in Section~\ref{sec:proof_idea},
recourse sensitivity and robustness get into conflict already for simple
quadratic functions in 1D. And things are even worse: the same problem
arises for any other function with partially overlapping $L$ and $R$
sets, so that on one part of the line recourse is provided by pointing
to the left and on another part of the line recourse requires pointing
to the right. We should therefore be prepared to run into problematic
instances for most non-trivial learning models used in practice: e.g.\
linear models in which we add a quadratic feature, decision trees,
neural networks, etc. In Sections~\ref{sec:suff_cond_main} and
\ref{sec:suff_and_nec_conditions_in_one_dimension} we provide more
formal insights into the class of problematic functions $f$ by providing
conditions that are sufficient and/or necessary to combine recourse
sensitivity and robustness. 

\paragraph{Should We Prioritize Recourse Sensitivity or Robustness?}

In the context of algorithmic recourse, providing recourse is the
primary goal and robustness is a secondary consideration. Faced with a
choice between the two, it is therefore clear that we should prioritize
recourse. However, it may be possible to (partially) salvage robustness
in special cases, which seems important because it would be undesirable
to have explanations that jump around unnecessarily. We proceed to
explore this for counterfactual explanations. 

\subsection{Workarounds for Counterfactual Explanations}
\label{sec:workaroundsCounterfactuals}

As pointed out in Remark~\ref{rem:satisfying_recourse_sensitivity},
counterfactual explanations are always recourse sensitive. In
Section~\ref{sec:suff_cond_main} it will be shown that, under weak
conditions, their robustness fails only at points $x$ for which the
counterfactual projection $\xcf$ is not unique. This suggests two
natural, but ultimately unsatisfactory ways to work around our
impossibility result:

\paragraph{1. Robustness at Most Points}

If the points of non-uniqueness are sufficiently uncommon, then we may
simply ignore them and accept a lack of robustness in such exceptional
cases. It is indeed tempting to believe that non-uniqueness might be
rare enough. For instance, in the binary classification setting from
\eqref{eqn:utility_class}, the set of points with a non-unique
projection has measure 0 (see discussion below
Theorem~\ref{thm:classification_sufficient}), which means that it is in
a sense small compared to the ambient space $\domain$. But,
unfortunately, this is not sufficient to conclude that the set of
affected users would also be small: around any point of discontinuity of
$\phi_f$ there exists a whole neighborhood of users $x$ who can find a
nearby alternative $x'$ that would result in a very different
explanation. 
In extreme cases it is even possible that every user $x$ in
$\domain$ would be close to a point of discontinuity of $\phi_f$. We
would therefore need to have a stronger restriction on the set of
discontinuities before we can dismiss them as sufficiently uncommon.

\paragraph{2. Restrict to Very Simple Models}

A radical way to avoid discontinuities altogether is to restrict
attention to very simple models $f$ and constraint sets $C(x)$ for which
the projection $\xcf$ is always unique. For instance, if the user wants
to be classified to a preferred class and there are no constraints, then
this would be satisfied by functions $f$ that are linear in the original
features. Although effective, this approach is so restrictive that it
seems unworkable, because non-uniqueness will quickly reappear, for
instance if we add a quadratic feature as discussed at the start of the
section.

\subsection{Work-arounds by Changing the Explainability Task}

More appealing options to avoid impossibility become available if we
allow ourselves to change the explanation task. Some preliminary
thoughts in this direction are as follows:

\paragraph{1. Linearizing with Abstract Features}

It may sometimes be an option to provide explanations not in terms of
the original features $x$ but in terms of transformed (typically more
abstract) features $z = g(x)$ for some mapping $g$. If $f$ is linear in
$z$, then this would allow using a simple model after all.
Appendix~\ref{app:higher_order_attribution} provides a detailed example
to illustrate how this might work out.

\paragraph{2. Set-valued Explanations} 

As discussed in Section~\ref{sec:proof_idea}, attributions cannot be
continuous and recourse sensitive, because they may have to communicate
different options in adjacent regions and there is no continuous way to
transition between the options. A way around this may be to communicate
not one, but many or all possible directions that provide recourse as
a set $S_f(x)$. This is analogous to the definition of the
subdifferential of a convex function, which represents the set of all
possible tangents. Set-theoretic notions of continuity such as
hemi-continuity or continuity with respect to the Hausdorff metric
\citep{aubin2009set} could then be used to rephrase robustness in terms
of continuity of $S_f(x)$ instead of continuity of $\phi_f(x)$, making
robustness easier to satisfy. 

\section{Sufficient Conditions for Recourse with Robustness}\label{sec:suff_cond_main}

The impossibility result in Theorem~\ref{thm:impossibility_result}
implies that there exist continuous functions $f$ for which no
attribution method can both provide recourse and be robust. But this may
still be possible if we restrict attention to specific functions $f$
that are somehow nice enough. For instance, as mentioned in the
introduction, linear classifiers do allow robust and recourse sensitive
attribution functions when the goal is to move to a preferred class in
binary classification. In this section, we will first formalize a
generalization of this result to a slightly larger class of functions
for the binary classification setting. Then we will extend the result to
more general utilities. The proofs for this section can be found in
Appendix~\ref{sec:suff_conditions}. 

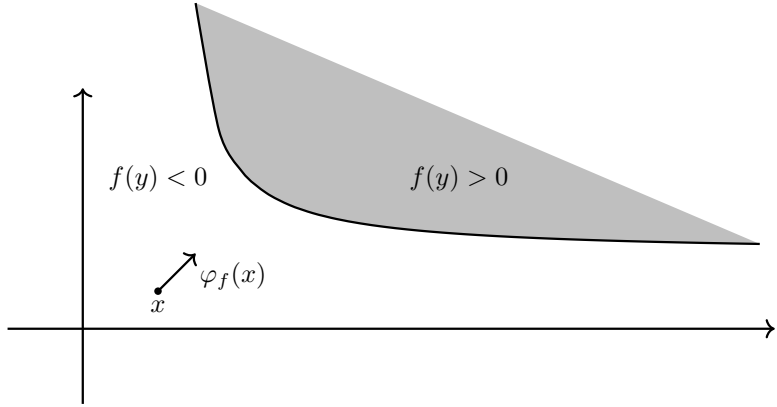
\begin{figure}[t]
    \centering
    \begin{tikzpicture}[line width=0.3mm]
        \draw[->] (-5, 0) -- (5.2, 0) node[right] {};
        \draw[->] (-4, -1) -- (-4, 3.2) node[above] {};
        
        \draw[domain=-2.5:5, smooth, variable=\x, black, fill=gray!50!white]
            plot ({\x}, {1 / (\x + 2.8) + 1});

        \node at (1, 2)  {$f(y) >0$};
        \node at (-3, 2) {$f(y) < 0$};

        \node at (-3, 0.5) [circle, fill, inner sep=1pt] {};
        \node at (-3, 0.3) {$x$};

        \draw[->] (-3, 0.5) -- (-2.5, 1) {};
        \node at (-2, 0.7) {$\varphi_f(x)$};

    \end{tikzpicture}
    \caption{Assumptions of Theorem~\ref{thm:classification_sufficient}
        illustrated.}
    \label{fig:thm_6_assumption}
\end{figure}

The following result shows that counterfactual explanations are both
recourse sensitive and robust for binary classification with a preferred
class if the counterfactual projections $\xcf$ are always uniquely
defined:
\begin{restatable}{theorem}{SuffClass}\label{thm:classification_sufficient}
    Consider the binary classification setting without constraints with
    $u_f(x,y) = f(y), \tau=0$ and $C(x) = \domain$, let $\delta> 0$ be
    arbitrary and take $f\colon \mathcal{X} \to \mathbb{R}$ to be any
    continuous function. If the set $U = \{y \in \mathcal{X}  \mid f(y) \ge 0\}$
    is convex, then the attribution method 
    \begin{align*}
        \varphi_f(x) \coloneqq \argmin_{y \in U}\|y - x\| - x = P_U(x) - x
    \end{align*}
    is well defined, and it is both recourse sensitive and continuous.
\end{restatable}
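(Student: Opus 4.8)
The plan is to exploit the fact that, since $U = \{y : f(y) \ge 0\}$ is convex and (by continuity of $f$) closed, the Euclidean projection $P_U$ onto $U$ is a well-defined single-valued map, and moreover it is $1$-Lipschitz — this is the classical nonexpansiveness of metric projection onto a closed convex set in a Hilbert space. From this, continuity of $\varphi_f(x) = P_U(x) - x$ is immediate, so the bulk of the work is to verify recourse sensitivity.

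First I would reduce the statement to a clean geometric claim. Recall $T(x) = \{y \in A(x) : u_f(x,y) \ge \tau\}$ with $u_f(x,y) = f(y)$, $\tau = 0$, and $A(x) = \{y : \|x - y\| \le \delta\}$ since $C(x) = \domain$. So $T(x) = U \cap \bar B(x,\delta)$, the intersection of $U$ with the closed $\delta$-ball around $x$. Recourse sensitivity then requires: whenever $T(x)$ is non-empty, $\varphi_f(x) = \alpha(y - x)$ for some $\alpha > 0$ and some $y \in T(x)$. I would split into two cases. If $x \in U$, then $P_U(x) = x$, so $\varphi_f(x) = 0$; but in that case I need to check that $T(x)$ being non-empty does not force a non-zero attribution — and indeed, if $x \in U$ then the constant alternative $y = x$ already satisfies $f(y) \ge 0$, but wait, Definition~\ref{def:recourse_sensitivity} demands $\varphi_f(x) = \alpha(y-x)$ with $\alpha > 0$, which is impossible if $\varphi_f(x) = 0$ and every $y \in T(x)$... here is where I must be careful: I expect that when $x \in U$ one argues that $T(x)$, while non-empty, causes no violation because the \emph{definition} of recourse sensitivity is vacuous or satisfiable — more likely the intended reading is that for $x \in U$ the attribution $0$ is acceptable, perhaps because $x \in T(x)$ makes the "direction" degenerate; I would double-check the precise convention in the paper (possibly recourse sensitivity is only required to be non-trivial when $x \notin U$, since a user already receiving sufficient utility needs no recourse). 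The substantive case is $x \notin U$: then $P_U(x) \ne x$, and I claim $y := P_U(x) \in T(x)$, i.e. $\|P_U(x) - x\| \le \delta$. This is the heart of the argument.

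To establish $\|P_U(x) - x\| \le \delta$ when $x \notin U$ but $T(x) \ne \emptyset$: non-emptiness of $T(x)$ means there exists $y_0 \in U$ with $\|y_0 - x\| \le \delta$. But $P_U(x)$ is by definition the point of $U$ closest to $x$, so $\|P_U(x) - x\| \le \|y_0 - x\| \le \delta$. Hence $P_U(x) \in U \cap \bar B(x,\delta) = T(x)$, and $\varphi_f(x) = (P_U(x) - x) = 1 \cdot (y - x)$ with $y = P_U(x) \in T(x)$ and $\alpha = 1 > 0$, so recourse sensitivity holds. Conversely, if $T(x) = \emptyset$, nothing is required. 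This closes the argument modulo the well-definedness of $P_U$ (closedness plus convexity plus the fact that a closest point exists — here one uses that $U$ is a non-empty closed convex subset of $\reals^d$, finite-dimensional, so closest points exist even without completeness subtleties; if $U$ could be empty, $\varphi_f$ as written is undefined, but then $T(x)$ is always empty and one can set $\varphi_f(x) = 0$, so I would add a remark handling $U = \emptyset$).

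The main obstacle I anticipate is purely a matter of matching conventions rather than mathematics: pinning down exactly what recourse sensitivity demands at points $x \in U$ (where the natural attribution is $0$ but the literal definition wants $\alpha > 0$), and confirming that $U$ convex implies it is also closed so that $P_U$ is genuinely well-defined and single-valued (this uses continuity of $f$ for closedness, and strict convexity of the Euclidean norm together with convexity of $U$ for uniqueness). The Lipschitz/continuity half is standard and I would cite the nonexpansiveness of projection onto a convex set, so no real difficulty there. Everything else is the two-line observation that the projection, when it lands within distance $\delta$, \emph{is} a valid recourse direction, and that it lands within distance $\delta$ precisely when some valid recourse point exists at all.
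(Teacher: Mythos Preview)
Your proposal is correct and follows essentially the same approach as the paper's proof: well-definedness from closedness (via continuity of $f$) plus convexity of $U$, continuity of $\varphi_f$ from continuity of the metric projection, and recourse sensitivity by taking $y = P_U(x)$ with $\alpha = 1$ and using that the projection minimizes distance to $U$. Your flagged worry about the case $x \in U$ resolves cleanly under the paper's Definition~\ref{def:recourse_sensitivity}: since $x \in C(x)$ and $\|x-x\| = 0 \le \delta$, we have $x \in T(x)$, so choosing $y = x$ gives $\alpha(y-x) = 0 = \varphi_f(x)$ for any $\alpha > 0$ --- the zero attribution is therefore a valid recourse-sensitive choice, exactly as the paper argues.
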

Theorem~\ref{thm:classification_sufficient} covers linear functions $f$,
but applies more broadly. For instance, any concave and continuous
function $f$ will satisfy its requirements. However, these requirements
are still very restrictive, as only very simple classifiers will be
concave.

To understand the conditions of
Theorem~\ref{thm:classification_sufficient}, we note that continuity of
$f$ is used to ensure that $U$ is closed, so that projections always
exist. In this case the convexity condition on $U$ is equivalent to the
assumption that projections onto $U$ are always unique, provided that
$\domain$ is convex. Thus the conditions of the theorem may also be
understood as requirements to ensure unique projections. In light of the
discussion in Section~\ref{sec:workaroundsCounterfactuals}, we further
remark that, without the convexity assumption on $U$, the set of points
$x$ with a non-unique projection onto $U$ still has measure $0$, as is
shown by \citet{Erdos1945} for general closed sets and $\domain =
\reals^d$. 

To extend Theorem~\ref{thm:classification_sufficient} to more general
utility functions and arbitrary constraints $C(x)$, we need to look at sets of the form 
\begin{align*}
    U(x) \coloneqq \{y  \in \mathcal{X}\mid  u_f(x, y) \ge \tau \}
    \cap C(x)
\end{align*}
and consider counterfactual explanations that project onto these sets.
Now the set that is projected on changes with $x$, so we will need more complicated assumptions to ensure that the projections still change
 continuously with respect to $x$. Intuitively, this will be the case if
 the sets $U(x)$ themselves vary continuously with $x$.
It turns out that the right type of set-valued continuity is
\emph{hemi-continuity}, which ensures that the sets $U(x)$ cannot
explode, implode or shift suddenly, when varying $x$. The definition and
relevant properties of hemi-continuity are reviewed in
Appendix~\ref{sec:suff_conditions}. Compared to
Theorem~\ref{thm:classification_sufficient} we are further able to
weaken the convexity 
assumption on $U(x)$ to the requirement that a unique projection 
exists for $x$ instead of for all points. This leads to the following
result, also proved in Appendix~\ref{sec:suff_conditions}:
\begin{restatable}{theorem}{SuffGen}\label{thm:general_sufficient}
    Let $\delta>0, \tau \in \mathbb{R}$, $f \colon \mathcal{X} \to \mathbb{R}$ be
    a function, $C(x)$ constraint sets and $u_f(x, y) $ a
    utility function with the following properties:
    \begin{enumerate}
        \item The set-valued map $U(x)$ is hemi-continuous; and 
        \item $U(x)$ is a closed set for every $x \in \domain$.
    \end{enumerate}
    Then there exists at least one attribution method
    \begin{align*}
        \varphi_f(x) \in \big(\argmin_{y \in U(x)}\|y - x\|\big) -x,
    \end{align*}
    and any such method is recourse sensitive. Moreover, $\varphi_f$ will
    be continuous on the restriction of $\domain$ to points $x$ for
    which $P_{U(x)}(x) := \argmin_{y \in U(x)} \|y - x\|$ is unique.
\end{restatable}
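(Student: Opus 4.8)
The plan is to split the statement into three parts: (i) nonemptiness of the proposed attribution set, (ii) recourse sensitivity of any selection from it, and (iii) continuity at points of unique projection. Part (ii) is essentially immediate from the definitions, so the real content is (i) and (iii).

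For part (i), I would fix $x \in \domain$ and argue that $\argmin_{y \in U(x)} \|y-x\|$ is nonempty. The natural approach is the standard existence-of-projection argument: take a minimizing sequence $y_n \in U(x)$ with $\|y_n - x\| \to \inf_{y \in U(x)} \|y - x\|$; this sequence is bounded, so by Bolzano--Weierstrass it has a convergent subsequence with limit $y^\star$; since $U(x)$ is closed (hypothesis~2), $y^\star \in U(x)$, and by continuity of the norm $y^\star$ attains the infimum. One caveat: this needs $U(x)$ to be nonempty to begin with — I would note that if $U(x)$ is empty then $T(x)$ is empty, so recourse sensitivity imposes no constraint at $x$ and we may define $\varphi_f(x)$ arbitrarily; hence the attribution method exists globally. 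A second caveat is that $\domain$ need not be all of $\reals^d$, so $y^\star$ must be checked to lie in $\domain$; but $U(x) \subseteq C(x) \subseteq \domain$ takes care of that since $y_n \in U(x)$ for all $n$. Here hypothesis~2 (closedness) is exactly what makes this work, and I would remark that closedness of $U(x)$ is typically inherited from continuity of $f$ and closedness of the sublevel-type sets defining $C(x)$.

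For part (ii), recourse sensitivity: whenever $T(x)$ is nonempty, I claim any $y^\star \in \argmin_{y \in U(x)}\|y-x\|$ with $\varphi_f(x) = y^\star - x$ works. The key observation is that $T(x) \subseteq U(x)$ (since $T(x) = \{y \in A(x) : u_f(x,y)\ge\tau\}$ and $A(x) \subseteq C(x)$), so $T(x)$ nonempty forces the minimal distance $\|y^\star - x\|$ to be at most $\delta$ (because some $y \in T(x)$ has $\|y-x\|\le\delta$); then $y^\star \in U(x)$ together with $\|y^\star - x\| \le \delta$ gives $y^\star \in A(x)$ and hence $y^\star \in T(x)$. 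Since $u_f(x,x)$ may well satisfy the threshold or not, one should be slightly careful: if $x \in U(x)$ then $y^\star = x$, the attribution is the zero vector, and one must check this is still fine — indeed if $x \in U(x)$ and $\|x - x\| = 0 \le \delta$ then $x \in T(x)$, so this is a legitimate recourse-sensitive choice (a zero attribution when the user already has sufficient utility). Writing $\varphi_f(x) = \alpha(y^\star - x)$ with $\alpha = 1$ matches Definition~\ref{def:recourse_sensitivity}.

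For part (iii), continuity: this is the hard part, and it is where hemi-continuity enters. I would invoke a maximum-theorem-style argument. Consider the function $(x, x') \mapsto \|x' - x\|$ minimized over $x' \in U(x)$; I want to show that at any $x_0$ where the minimizer $P_{U(x_0)}(x_0)$ is unique, the map $x \mapsto P_{U(x)}(x)$ is continuous at $x_0$. The standard route: take $x_k \to x_0$ and set $y_k := P_{U(x_k)}(x_k)$; first show $(y_k)$ is bounded (using lower hemi-continuity of $U$ near $x_0$ to produce, for each $k$, a point $\tilde y_k \in U(x_k)$ close to some fixed $\bar y \in U(x_0)$, which bounds $\|y_k - x_k\|$ hence $\|y_k\|$); then pass to any convergent subsequence $y_{k_j} \to y^\star$; use upper hemi-continuity (with closed values) to conclude $y^\star \in U(x_0)$; use the bound $\|y_{k_j} - x_{k_j}\| \le \|\tilde y_{k_j} - x_{k_j}\|$ together with lower hemi-continuity to get $\|y^\star - x_0\| \le \|y - x_0\|$ for every $y \in U(x_0)$, i.e.\ $y^\star$ is a minimizer; finally, uniqueness forces $y^\star = P_{U(x_0)}(x_0)$, so every subsequential limit is the same point and the whole sequence converges. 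This yields continuity of $x \mapsto P_{U(x)}(x)$, and subtracting the (continuous) identity gives continuity of $\varphi_f$. The main obstacle is getting the two hemi-continuity conditions to do exactly the right jobs — lower hemi-continuity for the ``$\le$'' side of optimality and for boundedness, upper hemi-continuity with closed values for membership of the limit in $U(x_0)$ — and handling the boundedness step cleanly without assuming $\domain$ compact. I expect the cleanest exposition cites a version of Berge's maximum theorem from the hemi-continuity preliminaries in Appendix~\ref{sec:suff_conditions} rather than redoing the subsequence argument by hand.
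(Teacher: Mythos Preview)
Your proposal is correct and matches the paper's strategy; the paper packages your part~(iii) as an application of a non-compact version of Berge's maximum theorem (verifying compactness of the level sets $D_v(\lambda;K)$, then invoking that an upper-hemicontinuous single-valued correspondence is continuous), which is precisely the cleaner route you anticipate at the end of your sketch. Your direct subsequence argument is essentially what underlies Berge's theorem, so the two proofs are the same in substance.
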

To see that Theorem~\ref{thm:classification_sufficient} follows from
Theorem~\ref{thm:general_sufficient}, we note that, if $U(x) = U$ is the
same for all $x$, then $U(x)$ is always hemi-continuous. In addition,
$U$ is the pre-image of a closed set under $f$, and will therefore be
closed if $f$ is continuous. Finally, uniqueness of all projections (and
therefore continuity) is a consequence of convexity of $U$.

Theorem~\ref{thm:general_sufficient} also covers other cases of
interest. For instance, it is sufficient to require the following:
\begin{enumerate}
    \item The function $u_f(x,y)$ is continuous;
    \item For each $x \in \mathcal{X}$, the function $y \mapsto u_f(x,y)$ is concave;
    \item The domain $\mathcal{X}$ is compact.
\end{enumerate}
These conditions imply that all $U(x)$ are convex and compact (i.e.\
closed and bounded), and consequently all projections are unique. It can
also be checked that they ensure that $U$ is hemi-continuous. All
requirements of Theorem~\ref{thm:general_sufficient} are therefore
satisfied and a continuous and recourse sensitive attribution function
exists. 

Another example, which is covered by
Theorem~\ref{thm:general_sufficient}, but not by the previous sufficient
conditions, is the following: suppose the user wants to at least double
the outcome of the model, which can be expressed by taking $u_f(x,y) =
\frac{f(y)}{f(x)}$ and $\tau = 2$. Then the following model $f$
satisfies the conditions:
\begin{cor}\label{cor:suff_example}
    Let $\delta > 0$, $\tau=2$, $\mathcal{X}= \mathbb{R}^{d} \setminus
    \{0\}$, $C(x) = \domain$, $f(x) = e^{b\|x\|}$ for some $b >0$  and
    $u_f(x,y) = \frac{f(y)}{f(x)}$. Then $\varphi_f$ as defined in
    Theorem~\ref{thm:general_sufficient} is uniquely defined, recourse
    sensitive and continuous.
\end{cor}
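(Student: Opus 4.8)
The plan is to verify the two hypotheses of Theorem~\ref{thm:general_sufficient} for this specific choice of $f$, $u_f$, $\tau$ and $C(x)$, and then invoke the uniqueness claim to upgrade continuity. First I would rewrite the relevant set explicitly. Since $C(x) = \domain = \reals^d \setminus \{0\}$, we have
\[
  U(x) = \Big\{ y \in \reals^d \setminus \{0\} \;\Big|\; \frac{e^{b\|y\|}}{e^{b\|x\|}} \ge 2 \Big\}
        = \Big\{ y \neq 0 \;\Big|\; \|y\| \ge \|x\| + \tfrac{\ln 2}{b} \Big\}.
\]
Writing $r(x) := \|x\| + \frac{\ln 2}{b} > 0$, the set $U(x)$ is the exterior of the open ball of radius $r(x)$ centered at the origin (and, since $r(x) > 0$, the excluded origin is irrelevant). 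This is a closed subset of $\domain$, which immediately gives hypothesis~2. Note also that $U(x)$ is nonempty for every $x$.

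Next I would establish hemi-continuity of $x \mapsto U(x)$. Because $U(x)$ depends on $x$ only through the continuous scalar $r(x) = \|x\| + \frac{\ln 2}{b}$, and because the map sending a radius $r$ to the closed exterior $\{y : \|y\| \ge r\}$ is itself continuous in a suitable set-valued sense, hemi-continuity should follow directly. Concretely, I would check the two one-sided conditions: for upper hemi-continuity, if $x_n \to x$ and $y_n \in U(x_n)$ with $y_n \to y$, then $\|y_n\| \ge r(x_n) \to r(x)$ forces $\|y\| \ge r(x)$, so $y \in U(x)$; for lower hemi-continuity, given $x_n \to x$ and any $y \in U(x)$, I would exhibit $y_n \in U(x_n)$ with $y_n \to y$ — e.g.\ take $y_n = \max\{1, r(x_n)/\|y\|\}\, y$ (or simply $y_n = (r(x_n)/\|y\|)\,y$ when $\|y\| < r(x_n)$, else $y_n = y$), which scales $y$ radially just enough to land in $U(x_n)$ and converges to $y$ since $r(x_n) \to r(x) \le \|y\|$. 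This verifies hypothesis~1.

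Finally, uniqueness of the projection $P_{U(x)}(x) = \argmin_{y \in U(x)} \|y - x\|$: projecting the point $x$ onto the exterior of a ball centered at the origin is a classical computation. If $\|x\| \ge r(x)$ — which never happens here since $r(x) = \|x\| + \frac{\ln 2}{b} > \|x\|$ — the projection would be $x$ itself; since in fact $\|x\| < r(x)$ always, the projection is the point $\frac{r(x)}{\|x\|}\, x$ when $x \neq 0$, which is the unique closest point on the sphere of radius $r(x)$. (The troublesome case $x = 0$, where every point of the sphere is equidistant, is excluded from $\domain$ by construction — this is precisely why $\reals^d \setminus \{0\}$ appears in the statement.) Hence $P_{U(x)}(x)$ is unique for every $x \in \domain$, so Theorem~\ref{thm:general_sufficient} gives that $\varphi_f(x) = \frac{r(x)}{\|x\|}x - x = \big(\frac{\|x\| + \ln 2/b}{\|x\|} - 1\big) x$ is uniquely defined, recourse sensitive, and continuous on all of $\domain$.

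The main obstacle is the careful verification of lower hemi-continuity, since this is the direction of set-valued continuity that typically fails and requires explicitly constructing the approximating sequence $y_n$; the radial-scaling construction above handles it, but one must be slightly careful to treat the two regimes ($\|y\|$ already $\ge r(x_n)$ versus not) uniformly and to confirm $y_n \to y$. Everything else — closedness, the explicit projection formula, and the role of deleting the origin — is routine.
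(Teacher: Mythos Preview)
Your approach is exactly the paper's: compute $U(x)=\{y:\|y\|\ge \|x\|+\ln 2/b\}$, check closedness and hemi-continuity, observe that excluding the origin forces a unique radial projection, and invoke Theorem~\ref{thm:general_sufficient}. The paper only sketches this, so your explicit projection formula and LHC construction actually fill in details the paper omits.

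One technical caveat: what you verify under the name ``upper hemi-continuity'' (namely, $x_n\to x$, $y_n\in U(x_n)$, $y_n\to y$ $\Rightarrow$ $y\in U(x)$) is the \emph{closed graph} property, which is strictly weaker than UHC in the sense of Definition~\ref{def:hemi} when the values $U(x)$ are non-compact, as they are here. To establish UHC proper you need: for any open $B\supseteq U(x_0)$, a neighbourhood of $x_0$ maps into $B$. This does hold --- since $B^c$ is a closed subset of the open ball $\{\|y\|<r(x_0)\}$, it is compact, hence $\sup_{y\in B^c}\|y\|=s<r(x_0)$, and then any $x$ with $r(x)>s$ satisfies $U(x)\subseteq B$ --- but your sequential check alone does not show it. The fix is short, and everything else in your argument is sound.
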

To prove Corollary~\ref{cor:suff_example}, it can be shown that $U(x) =
\{y \in \mathcal{X} \mid \|y\|\ge \|x\| + \frac{\ln(2)}{b}\} $ is
hemi-continuous. The sets $U(x)$ are also closed, and the projections
onto $U(x)$ are unique for every $x$, because we excluded $0$ from our
domain. Thus, all the requirements of
Theorem~\ref{thm:general_sufficient} hold and a continuous and recourse
sensitive attribution function exists. 

\section{Single Feature Recourse Sensitivity: Exact Characterization}
\label{sec:suff_and_nec_conditions_in_one_dimension}

In this section, we provide an in-depth study of the case that users are
only able to change a single feature, which corresponds to the sparse
constraint \ref{en:constraint_sparse} from
Section~\ref{sec:recourse_sensitivity} with $k=1$. In this case, we are
able to provide an exact characterization of the functions $f$ for which
a continuous, recourse sensitive attribution function can exist. We
first restrict ourselves to the one-dimensional case $d=1$, and then
generalize the result to the multi-dimensional case $d \geq 1$. All
proofs of the results in this section can be found in
Appendix~\ref{sec:proofs}. To formulate our results,
we will need the concept of separated sets, which corresponds to the
intuition that the sets are disjoint and have at least one point in
between them everywhere:
\begin{defn}
    \label{def:separated_sets}    
    Two sets $A, B \subseteq \mathcal{X}$ are called \emph{separated} if 
    $\cl(A) \cap B = \varnothing$ and $A \cap \cl(B) = \varnothing$. 
\end{defn}
An equivalent definition is that there exist open sets $U, V \subseteq
\mathcal{X}$ such that $A \subseteq U, B \subseteq V$ and $U \cap V =
\varnothing$. 

\subsection{One Dimension}
In this subsection we assume that $\mathcal{X}\subseteq \mathbb{R}$ is
one-dimensional. Define the following three sets
\begin{align*}
    L &= \{x \in \mathcal{X} \mid \text{ there exists some }y \in 
        [x-\delta, x) \cap C(x) \text{ with }u_f( x, y) \ge \tau\},   \\
    R &= \{x \in \mathcal{X} \mid \text{ there exists some }y \in 
        (x, x + \delta] \cap C(x)\text{ with }u_f( x, y) \ge \tau\}, \\
    O &= \{x \in \mathcal{X} \mid x \in C(x) \text{ and } u_f(x,x) \ge \tau\} 
.\end{align*}
Similarly to Section~\ref{sec:proof_idea}, these sets are the feasible points for
which recourse is obtainable by moving to the left, to the right or by
doing nothing, respectively. The following result now tells us that a
continuous recourse sensitive attribution function can exist if and only
if we can decompose $L, R$ and $O$ in a particular way: 
\begin{restatable}{theorem}{oneDimRes}\label{thm:2a}
  Let $\delta > 0, \tau \in \mathbb{R}, f \colon \mathcal{X} \to \mathbb{R}$  and $C(x)$ 
  be arbitrary, 
  then there exists a continuous
  recourse sensitive attribution function $\varphi_f$ for $f$ if and only if
  there exist $\widetilde{L} \subseteq L, \widetilde{R} \subseteq
  R$  and $\widetilde{O} \subseteq O$ such that 
  \begin{enumerate}
      \item $\widetilde{L} \cup \widetilde{R} \cup \widetilde{O} = L \cup R \cup O$;
      \item $\widetilde{L}$ and $\widetilde{R}$ are separated;
      \item $\cl( \widetilde{O} ) \cap \tL = \varnothing$ and 
          $\cl(\widetilde{O}  ) \cap \tR = \varnothing $.
        \label{cond:2aO}
  \end{enumerate}
\end{restatable}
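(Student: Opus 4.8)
The plan is to prove the two directions separately, exploiting the one-dimensional structure throughout. For the ``only if'' direction, suppose a continuous recourse sensitive $\varphi_f$ exists. I would set $\tL = \{x \in L \mid \varphi_f(x) < 0\}$, $\tR = \{x \in R \mid \varphi_f(x) > 0\}$, and $\tO = \{x \in O \mid \varphi_f(x) = 0\}$. Recourse sensitivity forces that for every $x \in L \cup R \cup O$ the attribution must point toward some feasible utility-achieving $y$; since $d=1$, the sign of $\varphi_f(x)$ dictates the direction (left if negative, right if positive), and $\varphi_f(x) = 0$ is only admissible when doing nothing already suffices, i.e.\ $x \in O$. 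This gives condition~1, that $\tL \cup \tR \cup \tO = L \cup R \cup O$. For condition~2, note $\tL = \varphi_f^{-1}((-\infty,0)) \cap L$ is contained in the open set $\varphi_f^{-1}((-\infty,0))$ and $\tR$ in the disjoint open set $\varphi_f^{-1}((0,\infty))$; by the equivalent open-set formulation of Definition~\ref{def:separated_sets}, $\tL$ and $\tR$ are separated. For condition~3, $\tO \subseteq \varphi_f^{-1}(\{0\})$, which is closed and disjoint from the open set $\varphi_f^{-1}((-\infty,0)) \supseteq \tL$, so $\cl(\tO) \cap \tL = \varnothing$, and symmetrically $\cl(\tO) \cap \tR = \varnothing$.

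For the ``if'' direction, suppose the decomposition $\tL, \tR, \tO$ exists. The goal is to build a continuous $\varphi_f$ that is negative on $\tL$, positive on $\tR$, zero on $\tO$, and takes an arbitrary (recourse-consistent) value elsewhere. The natural construction is via a signed-distance-type function: using that $\tL$ and $\tR$ are separated and that $\cl(\tO)$ is disjoint from both $\tL$ and $\tR$, I would define
\[
  \varphi_f(x) = \frac{d(x, \cl(\tR) \cup \cl(\tO))}{d(x, \cl(\tR) \cup \cl(\tO)) + d(x, \cl(\tL))} \cdot (-1) + \frac{d(x, \cl(\tL) \cup \cl(\tO))}{d(x, \cl(\tL) \cup \cl(\tO)) + d(x, \cl(\tR))}
\]
or, more cleanly, a single Urysohn-style function $g$ with $g \le 0$ on a closed neighborhood structure around $\tL$, $g \ge 0$ near $\tR$, and $g = 0$ on $\cl(\tO)$; continuity of distance functions gives continuity of $\varphi_f$. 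The subtlety is that I need $\varphi_f$ strictly negative on all of $\tL$ (not just on $\cl(\tL) \setminus \cl(\tR)$) — this is where separatedness of $\tL$ and $\tR$ is essential, since it guarantees $\tL \cap \cl(\tR) = \varnothing$ so $d(x,\cl(\tR)) > 0$ for $x \in \tL$; and I need $\cl(\tO) \cap \tL = \cl(\tO) \cap \tR = \varnothing$ so that forcing $\varphi_f = 0$ on $\cl(\tO)$ does not conflict with the required signs on $\tL, \tR$. For points $x \notin \tL \cup \tR \cup \tO$ there is no constraint from recourse sensitivity (either $T(x)$ is empty, or $x \in L \cup R \cup O$ is already covered), so whatever value the formula produces is acceptable.

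The main obstacle I anticipate is the bookkeeping around \emph{closures} in the ``if'' direction. Recourse sensitivity at a point $x \in L$ only requires $\varphi_f(x)$ to be negative \emph{at that point}, but a distance-based construction naturally produces a function that is negative on an open set containing $\tL$ and one must check this open set does not accidentally swallow a point of $\tR$ where the attribution is then forced to be positive — precisely prevented by $\tL, \tR$ separated. A secondary point is verifying that a point $x$ which lies in, say, $L$ but was placed in $\tR$ by the decomposition (allowed, since $L$ and $R$ may overlap) indeed has a valid rightward recourse option; but this is immediate because $\tR \subseteq R$ by hypothesis. One should also double-check the edge case $x \in O$ assigned to $\tL$ or $\tR$: such $x$ has a nonzero attribution pointing left or right, which is still recourse sensitive as long as $x \in L$ or $x \in R$ respectively — and since $\tL \cup \tR \cup \tO$ covers $L \cup R \cup O$ with $\tL \subseteq L$, $\tR \subseteq R$, $\tO \subseteq O$, this containment is automatic. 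Everything else — continuity of $x \mapsto d(x,A)$, the intermediate-value-theorem-free nature of the forward construction — is routine.
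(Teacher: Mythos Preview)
Your approach is essentially the same as the paper's: the ``only if'' direction defines $\tL,\tR,\tO$ exactly as you do, and the ``if'' direction builds $\varphi_f$ from distance functions. There is, however, a technical gap in your explicit formula: separated sets can have intersecting closures (e.g.\ $\tL=(-1,0)$, $\tR=(0,1)$ share the boundary point $0$), and condition~3 only gives $\cl(\tO)\cap\tL=\varnothing$, not $\cl(\tO)\cap\cl(\tL)=\varnothing$; at any such boundary point both of your denominators vanish and the formula is undefined. The paper avoids this by first passing to disjoint \emph{open} neighborhoods $U\supseteq\tL$ and $V\supseteq\tR$ (with $U,V$ also disjoint from $\tO$, using condition~3 to take $U,V\subseteq\mathcal{X}\setminus\cl(\tO)$) and then setting
\[
\varphi_f(x)=\frac{d(x,\mathbb{R}\setminus V)}{1+d(x,\mathbb{R}\setminus V)}-\frac{d(x,\mathbb{R}\setminus U)}{1+d(x,\mathbb{R}\setminus U)},
\]
whose denominators never vanish. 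Your ``Urysohn-style'' remark points in the right direction; just make the open neighborhoods explicit first and the rest of your argument goes through.
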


\paragraph{Sufficient $\tL$ and $\tR$ sets}

Theorem~\ref{thm:2a} refers to the existence of any $\tL, \tR$ and $\tO$
that satisfy its conditions, but we will proceed to show that it
sufficient to check separatedness only for a restricted number of
choices for $\tL, \tR$ and $\tO$, which may even reduce to a single
case. For simplicity, we will assume that $O = \varnothing$, so
Condition~\ref{cond:2aO} is automatically satisfied, but the result can
be generalized to general $O$ as well. To describe the choices for $\tL$
and $\tR$, it will be helpful to decompose $L$ and $R$ into the sets
$\Lintervals = \{L_i \mid i \in \I\}$ and $\Rintervals = \{R_j \mid j
\in \J\}$ of (maximal) intervals they contain, where these intervals may
be open or closed on either side. Thus $L = \Union_{i \in \I} L_i$ and
$R = \Union_{j \in \J} R_j$, and every two distinct intervals $A,B \in
\Lintervals$ (or $A,B \in \Rintervals$) are separated; otherwise they
could be joined into a single larger interval $A \union B$. We further
note that in general the number of intervals in $L$ or $R$ may be
uncountable, so the index sets $\I$ and $\J$ may be uncountable as well.
Since splitting an interval in two will never result in two separated
sets, we only need to make decisions about whole intervals. Moreover,
most of these choices are forced, and we can define the remaining
possibilities in terms of the following index sets:
\begin{alignat*}{4}
  \tI &= \{i \in \I &&\mid \neg &&\exists j \in \J &&\text{ such that } L_i
  \subseteq R_j\},\\
  \tJ &= \{j \in \J &&\mid \neg &&\exists i \in \I &&\text{ such that } R_j
  \subseteq L_i\},\\
  \tK &= \{i \in \I &&\mid &&\exists j \in \J &&\text{ such that } L_i = R_j
  \}.
\end{alignat*}

\begin{restatable}{theorem}{SuffSets}\label{thm:2b}
  Let $\delta > 0, \tau  \in \mathbb{R}, f \colon \mathcal{X} \to \mathbb{R}$ and $C(x)$ 
  be arbitrary,
  and let $u_f$ be any utility function with  
  $u_f(x,x) < \tau$ for all  $x \in \mathcal{X}$. Then there exists a continuous
  recourse sensitive attribution function $\varphi_f$ for $f$ if and only if
  there exists a partition $\{\tK_1,\tK_2\}$ of $\tK$ such that the sets
  \begin{equation}\label{eqn:2b}
    \tL = \Big(\Union_{i \in \tI} L_i\Big) \union \Big(\Union_{i \in
    \tK_1} L_i\Big)
    \quad \text{and} \quad
    \tR = \Big(\Union_{j \in \tJ} R_j\Big) \union \Big(\Union_{i \in
    \tK_2} L_i\Big)
  \end{equation}
  are separated.
\end{restatable}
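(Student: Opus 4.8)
The plan is to derive Theorem~\ref{thm:2b} from Theorem~\ref{thm:2a} by showing that, under the assumption $u_f(x,x) < \tau$ (so $O = \varnothing$ and Condition~\ref{cond:2aO} of Theorem~\ref{thm:2a} is vacuous), the \emph{only} decompositions $\tL \subseteq L$, $\tR \subseteq R$ with $\tL \cup \tR = L \cup R$ that have any chance of being separated are the ones described in \eqref{eqn:2b}, indexed by a partition $\{\tK_1,\tK_2\}$ of $\tK$. First I would argue that splitting a maximal interval is pointless: if $A \in \Lintervals$ is cut into two nonempty pieces $A_1, A_2$ with $A_1$ assigned to $\tL$ and (some of) $A_2$ assigned to $\tR$, then since $A$ is an interval the closures of $A_1$ and $A_2$ meet, so $\tL$ and $\tR$ fail to be separated; hence in any valid decomposition each maximal interval $L_i$ of $L$ is assigned \emph{entirely} to $\tL$ or \emph{entirely} to $\tR$, and likewise each $R_j$. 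So a valid decomposition is just a choice, for every maximal interval, of a side.

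Next I would show most of these choices are forced. If $i \in \I \setminus \tK$ is such that $L_i \subseteq R_j$ for some $j$ but $L_i \neq R_j$ (i.e.\ $i \notin \tI$), then $L_i$ lies strictly inside $R_j$, so $R_j$ has points on both sides of $L_i$ arbitrarily close to it; assigning $L_i$ to $\tL$ and (the rest of) $R_j$ to $\tR$ would make the two non-separated, and assigning $L_i$ to $\tR$ is consistent with putting $R_j$ in $\tR$, so $L_i$ \emph{must} go to $\tR$. Symmetrically, every $R_j$ with $j \notin \tJ$ must go to $\tR$-side reasoning — wait, rather: every $R_j$ that is strictly contained in some $L_i$ (i.e.\ $j \notin \tJ$) must be assigned to $\tL$. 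That accounts for all indices except those in $\tI$, $\tJ$ and $\tK$: indices in $\tI$ have $L_i$ not contained in any $R_j$, so $L_i$ has a point outside $\Union_j R_j$ that is only covered if $L_i \in \tL$ — forcing it to $\tL$; symmetrically $R_j$ with $j \in \tJ$ is forced to $\tR$; and the indices in $\tK$, where $L_i = R_j$ for some $j$, are genuinely free — each such interval can go to either side, which is exactly the partition $\{\tK_1,\tK_2\}$. (One must check $\tK$, $\tI$, $\tJ$ interact cleanly — e.g.\ that $\tK \subseteq \I \setminus \tI$ and that the forced assignments are mutually consistent, which follows because $L_i = R_j$ means neither is strictly contained in the other.) Collecting the forced and free assignments yields precisely the formula \eqref{eqn:2b}, and one checks $\tL \cup \tR = L \cup R$ using that every $L_i$ and every $R_j$ is covered (those not in $\tI,\tJ,\tK$ are contained in an interval of the opposite type that \emph{is} covered).

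Finally, with this correspondence in hand, Theorem~\ref{thm:2a} gives a continuous recourse-sensitive $\varphi_f$ iff \emph{some} valid $(\tL,\tR)$ is separated, and by the above every valid pair is of the form \eqref{eqn:2b} for some partition $\{\tK_1,\tK_2\}$ of $\tK$; hence such a $\varphi_f$ exists iff some partition makes \eqref{eqn:2b} separated, which is the claim. The main obstacle I anticipate is the bookkeeping around containments when intervals are half-open or share an endpoint: e.g.\ an interval $L_i = (a,b]$ could satisfy $L_i \subseteq R_j$ for $R_j = (a,b]$ making $i \in \tK$, but one must be careful that "strictly contained" is interpreted via set inclusion (not via endpoints), and that $\cl$ is taken relative to $\mathcal{X}$, so that the separatedness computations in the forced-assignment step are valid even near the boundary of $\domain$. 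A secondary subtlety is that $\I$ and $\J$ may be uncountable, so all the reasoning must be done interval-by-interval without invoking any enumeration — which it does, since each step is a local statement about a single interval and its overlaps.
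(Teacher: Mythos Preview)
Your plan is correct and matches the paper's approach: reduce to Theorem~\ref{thm:2a} (with $O=\varnothing$), show that maximal intervals cannot be split across $\tL$ and $\tR$ without destroying separatedness, and then argue that every assignment is forced except for the intervals indexed by $\tK$. One minor fix for the write-up: when $L_i \subsetneq R_j$, the interval $R_j$ need not have points on \emph{both} sides of $L_i$ (take $L_i=[0,1]$, $R_j=[0,2]$); the clean argument, which the paper uses, is that $R_j \setminus L \neq \varnothing$ (else $R_j$ would lie in some maximal $L_{i^*}$, giving $L_i \subsetneq L_{i^*}$ and contradicting maximality), so $R_j$ is forced into $\tR$, and then $L_i \subseteq R_j \subseteq \tR$ cannot also lie in $\tL$.
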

Thus, the only choice in selecting $\tL$ and $\tR$ is how to divide the
intervals indexed by $\tK$, which appear both in $L$ and $R$. In
particular, if $\tK$ is empty, then so are $\tK_1$ and $\tK_2$, and we
only need to check separatedness for a single choice of $\tL$ and $\tR$.

\subsection{Higher Dimensions}
It is possible to extend Theorem~\ref{thm:2a} to higher dimensions, i.e.\
$\mathcal{X} \subseteq \mathbb{R}^{d}$, whenever
the user has control over only one feature at the same time. The constraint set
in this case becomes ${C(x) = \{y \in \mathcal{X} \mid \|x - y\|_0\le 1\}}$.  
Analogously with the one dimensional case, we first define sets on which the attribution 
is allowed to be positive or negative in the $i$'th feature. These sets are
\begin{align*}
    L^{i} &= \{x \in \mathcal{X} \mid \text{ there exists some } 
        \mathcal{X} \ni y = x - \alpha e_i,
        \alpha \in (0, \delta], \text{ such that } u_f( x, y) \ge \tau\},  \\
    R^{i} &= \{x \in \mathcal{X} \mid \text{ there exists some } 
        \mathcal{X}\ni y = x + \alpha e_i, 
        \alpha \in (0, \delta], \text{ such that } u_f( x, y) \ge \tau\}, \\
    O &= \{x \in \mathcal{X}  \mid  u_f(x,x) \ge \tau\} 
.\end{align*}

Where $e_i$ denotes the $i$'th basis vector of the standard basis. If the sets
$L^{i}$, $R^{i}$ and $O$ can be decomposed in a way similar to the decomposition
in Theorem~\ref{thm:2a}, then a continuous recourse sensitive
attribution function 
exists in the higher dimensional case. 

\begin{restatable}{theorem}{HigherDim}\label{thm:higher_dim_2}
  Let $\delta > 0, \tau \in \mathbb{R}$, $C(x) = \{y \in \mathcal{X}  \mid \|x - y\|_0 \le 1\} $ 
  and $f \colon \mathcal{X} \to \mathbb{R}$ be arbitrary.
  Then there exists a continuous
  recourse sensitive attribution function $\varphi_f$ for $f$ if and only if
  there exist $\widetilde{L}^{i} \subseteq L^{i}$ and $\widetilde{R}^{i} \subseteq
  R^{i}$ for all $i=1, \ldots, d$ and $\widetilde{O} \subseteq O$
  such that 
  \begin{enumerate}
      \item $\widetilde{O} \cup \bigcup_{i=1}^{d} (\tL^{i} \cup \tR^{i})
          = O \cup \bigcup_{i=1}^{d} (L^{i} \cup R^{i})$;
      \item All sets in $\{\widetilde{L}^{1}, \widetilde{R}^{1}, \ldots, \tL^{d}, \tR^{d}\}$
        are pairwise separated;
      \item $\cl( \widetilde{O} ) \cap \tL^{i} = \varnothing$ and 
          $\cl(\widetilde{O}  ) \cap \tR^{i} = \varnothing $ for all $i=1,\ldots, d$.
  \end{enumerate}
\end{restatable}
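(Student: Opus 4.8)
The plan is to prove both implications directly, using one structural observation that makes the single-feature case behave like $d$ coupled one-dimensional problems (mirroring the one-dimensional Theorem~\ref{thm:2a}). Under the constraint $C(x) = \{y \in \domain \mid \|x-y\|_0 \le 1\}$, the attainable set $A(x)$ consists of $x$ together with those points $x \pm \alpha e_i$ ($i \le d$, $\alpha \in (0,\delta]$) that lie in $\domain$; hence $T(x) \neq \varnothing$ exactly on $F := O \cup \bigcup_i (L^i \cup R^i)$, and for any recourse sensitive $\varphi_f$ and any $x \in F$ we have $\varphi_f(x) = \alpha(y-x)$ with $y - x \in \{0\} \cup \bigcup_i \reals e_i$, so $\varphi_f(x)$ is either $0$ or a nonzero scalar multiple of a single basis vector $e_i$.

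For the direction ``$\Rightarrow$'', suppose $\varphi_f$ is continuous and recourse sensitive, and partition $F$ according to where $\varphi_f$ points: let $\tO := \{x \in F \mid \varphi_f(x) = 0\}$, $\tR^i := \{x \in F \mid \varphi_f(x) = c e_i \text{ for some } c > 0\}$ and $\tL^i := \{x \in F \mid \varphi_f(x) = c e_i \text{ for some } c < 0\}$. Reading off the witness $y$ from recourse sensitivity shows $\tO \subseteq O$ (since $\varphi_f(x)=0$ forces $y = x \in T(x)$, hence $u_f(x,x) \geq \tau$), $\tR^i \subseteq R^i$ and $\tL^i \subseteq L^i$; together with the observation above this gives Condition~1, and these sets are pairwise disjoint because a vector determines uniquely whether it is $0$ or a positive/negative multiple of a specific $e_i$. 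For Conditions~2 and~3, continuity gives $\varphi_f(\cl S) \subseteq \cl(\varphi_f(S))$ for every set $S$; since $\varphi_f$ maps each $\tR^i$, $\tL^i$ into an open half-axis and $\tO$ into $\{0\}$, and distinct closed half-axes meet only in $\{0\}$ while an open half-axis excludes $0$, the images of the closures are disjoint from the other (open) pieces in exactly the pattern required, and pulling back through $\varphi_f$ yields the claimed separatedness.

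For the direction ``$\Leftarrow$'', suppose $\tL^i, \tR^i, \tO$ satisfy Conditions~1--3. We build $\varphi_f(x) = \sum_{i=1}^d g_i(x)\, e_i$ with each $g_i \colon \domain \to \reals$ continuous, $g_i > 0$ on $\tR^i$, $g_i < 0$ on $\tL^i$, $g_i = 0$ on $\tO$, and $g_i = 0$ on $\tL^j \cup \tR^j$ for all $j \neq i$. Since the $2d$ sets $\tL^1, \tR^1, \dots, \tL^d, \tR^d$ are pairwise separated (Condition~2), the nearest-set construction $W_S := \{x \mid \mathrm{dist}(x,\cl S) < \mathrm{dist}(x,\cl S') \text{ for all } S' \neq S\}$ produces pairwise disjoint open sets in the metric space $\domain$, one containing each set. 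Letting $h_i^{\pm}$ be a bounded continuous function positive precisely on the open neighbourhood of $\tR^i$, resp.\ $\tL^i$, and vanishing off it, and $\rho_O$ a bounded continuous function vanishing precisely on $\cl(\tO)$ (take $\rho_O \equiv 1$ if $\tO = \varnothing$), we set $g_i := \rho_O \cdot (h_i^+ - h_i^-)$. Disjointness of the neighbourhoods makes $g_i$ vanish on every $\tL^j \cup \tR^j$ with $j \neq i$, while $\rho_O$ makes it vanish on $\tO$; Condition~3, which says $\cl(\tO)$ misses $\tR^i$ and $\tL^i$, guarantees $\rho_O > 0$ there, so $g_i$ keeps the correct sign on $\tR^i$ and $\tL^i$. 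Then Condition~1 puts every $x$ with $T(x)\neq\varnothing$ in exactly one of $\tO, \tL^i, \tR^i$: on $\tO$ we get $\varphi_f(x) = 0 = 1\cdot(x-x)$ with $x \in T(x)$; on $\tR^i$ (resp.\ $\tL^i$) we get $\varphi_f(x) = g_i(x) e_i$, and the definition of $R^i$ (resp.\ $L^i$) supplies $y = x + \alpha e_i \in T(x)$ (resp.\ $y = x - \alpha e_i$) with $\varphi_f(x) = (|g_i(x)|/\alpha)(y-x)$ and a positive coefficient. Hence $\varphi_f$ is recourse sensitive, and it is continuous as a finite sum of products of continuous functions.

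I expect the backward direction to be the main obstacle, and within it the treatment of $\tO$: Condition~3 is deliberately one-sided (it controls $\cl(\tO)$ but not $\cl(\tL^i), \cl(\tR^i)$), so $\tO$ may lie in the closure of some $\tR^i$, and the factor $\rho_O$ is exactly what lets $g_i$ decay continuously to $0$ as $x$ approaches $\tO$ while remaining strictly positive on $\tR^i$ itself. The remaining work is bookkeeping: ensuring every auxiliary function is bounded and well defined in degenerate cases ($\tO = \varnothing$, some $\tL^i$ or $\tR^i$ empty, or a neighbourhood equal to all of $\domain$), and verifying that the ``separated $\Rightarrow$ pairwise disjoint open sets'' construction still behaves when some of the sets are empty. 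The forward direction is essentially routine once the single-axis structure of recourse sensitive attributions is noted.
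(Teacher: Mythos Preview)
Your proposal is correct and follows essentially the same strategy as the paper: in the forward direction you partition $F$ by the sign and axis of $\varphi_f$ and deduce the separation conditions from continuity; in the backward direction you build $\varphi_f$ componentwise from continuous bump functions supported on pairwise disjoint open neighborhoods of the $\tL^i,\tR^i$. The only noteworthy difference is how $\tO$ is handled in the construction: the paper shrinks each neighborhood $U^i,V^i$ so that it is already disjoint from $\tO$ (using $\tL^i \subseteq (\cl \tO)^c$), while you instead keep the nearest-set neighborhoods and multiply by the extra factor $\rho_O$ vanishing on $\cl(\tO)$; both devices achieve the same effect and are equivalent in difficulty.
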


\section{Conclusion}\label{sec:conclusion}

We showed that there are machine learning models for which it is
impossible for any attribution method to be both recourse sensitive and
continuous (i.e.\ robust). This was illustrated by examples exhibiting
the problem for specific attribution methods, and we gave an exact
characterization of the set of problematic models for the case where the
user is only able to make sparse changes that affect a single feature.
It was further shown how, by making restrictive assumptions on $f$ that
satisfy certain sufficient conditions, it is possible to circumvent our
impossibility result.

We view our work as a contribution to establishing solid foundational
definitions for explainable machine learning. To obtain these in the
context of providing recourse, it would be of particular interest to
follow up on possible solutions to work around our impossibility result,
for instance along the lines discussed in Section~\ref{sec:discussion}.
Another direction for future work would be to extend the
characterizations from
Section~\ref{sec:suff_and_nec_conditions_in_one_dimension} to the case
where the user can change multiple features. This would pose significant
new technical challenges, because, in contrast to the single-feature case, there are then an
infinite number of directions that an attribution can point to. In
addition, the very general definition of the utility function results in
very unstructured spaces of possible directions. It may therefore be
needed to specialize to particular utility functions to make progress.
Finally, we remark that we defined recourse sensitivity using the
Euclidean distance, but our proofs hardly use any of its special properties.
It should therefore be possible to extend our result to other distances,
such as (weighted) combinations of $\ell_p$ norms or weighted Manhattan
distance \citep{karimi2021survey}, or to a setting where the norm is
replaced by a (possibly asymmetrical) cost mechanism or causal
mechanism. 

\newpage

\acks{%
  The authors would like to thank Joris Bierkens for suggesting to add
  Theorem~\ref{thm:classification_sufficient} and Royi Jacobovic for
  pointing out Berge's Maximum Theorem, which is key to the proof of
  Theorem~\ref{thm:general_sufficient}. De Heide and Van Erven were
  supported by the Netherlands Organization for Scientific Research
  (NWO) under grant numbers 019.202EN.004 and VI.Vidi.192.095,
  respectively.
}

\bibliography{eaibib}


\newpage

\appendix

\section{Details of Experimental Set-up}\label{sec:experiments}
All the code to reproduce the experiments and figures in this paper can be found 
in a GitHub
repository\footnote{\href{https://github.com/HiddeFok/recourse-robust-explanations-impossible}{github.com/HiddeFok/recourse-robust-explanations-impossible}}.
All experiments were run locally on an Apple MacBook Pro M1 13", 2020
with 8GB of RAM. 

\subsection{Profile Picture Toy Dataset}
A total of $53$ gray scale figures were created from the \textbf{User Icon}
picture, found on \url{www.iconarchive.com}.
\footnote{The icon is provided for free for non-commercial 
use.} Each figure consists of two components, the person and 
a background. The figures have varying contrasts between these two components. 
We labeled each figure by hand according to this contrast. A figure with high 
enough contrast is labeled as ``Accepted'', while low contrast results in 
a ``Rejected'' label. The labeling was done in such a way that a perfect classifier 
exists, which is based on the quadratic difference between the mean pixel value 
of the person and the background. In the following expressions, $x \in \mathbb{R}^{N}$ 
denotes the vectorized version of a picture of size $N = wh$, where $w$ and $h$ 
are the width and height of the picture. The classification function is given by
\begin{align*}
    f(x) = \left( \frac{1}{|I_{\text{per}}|}\sum_{i \in I_{\text{per}}} x_i -
    \frac{1}{|J_{\text{back}}|}\sum_{j \in J_{\text{back}}} x_j \right)^2 
.\end{align*}
Where, $I_{\text{per}}$ denotes the indices of the pixels belonging to the person, 
and $J_{\text{back}}$ contains the indices of the  background. A figure is accepted 
if $f(x) \ge \lambda_{\text{thresh}}$ for some threshold parameter 
$\lambda_{\text{thresh}}$. By increasing the threshold from the minimum value of all quadratic 
differences to the maximum value, the parameter with the highest accuracy was chosen. 
This lead to the choice
$\lambda_{\text{thres}}=5961.34$, which achieved perfect accuracy across both classes.  

Several attribution methods were applied to each figure with this $f$. The methods 
used were Vanilla Gradients \citep{simonyan2014deep}, SmoothGrad 
\citep{smilkov2017smoothgrad}, Integrated Gradients \citep{sundararajan2017axiomatic}, 
LIME \citep{ribeiro2016should} and SHAP \citep{lundberg2017unified}. 
In Figure~\ref{fig:experiment_results} six example pictures with their attributions 
are displayed. 
The attribution methods based on gradients were calculated
analytically.  
The attributions for the Vanilla Gradients, SmoothGrad and Integrated Gradients
are given by
\begin{align*}
    \varphi_{f}^{\text{VG}}(x)_k &= 
        \begin{cases}
            \frac{2}{|I_{\text{per}}|}\left( \frac{1}{|I_{\text{per}}|}
                \sum_{i \in I_{\text{per}}} x_i - \frac{1}{|J_{\text{back}}|}
            \sum_{j \in J_{\text{back}}} x_j \right) & \text{ if } k \in I_{\text{per}},  \\
            \frac{-2}{|J_{\text{back}}|}\left( \frac{1}{|I_{\text{per}}|}
                \sum_{i \in I_{\text{per}}} x_i - \frac{1}{|J_{\text{back}}|}
            \sum_{j \in J_{\text{back}}} x_j \right) & \text{ if } k \in J_{\text{back}}, 
        \end{cases} 
    \\
    \varphi_{f}^{\text{SG}}(x)_k &= \mathbb{E}_{a \sim N(x, \sigma^2 I_d)}
        \left[ \nabla f(a)_k\right]  = \varphi_{f}^{\text{VG}}(x)_k,
    \\
    \varphi_{f}^{\text{IG}}(x)_k &= (x_k - x^{0}_k) \int_{0}^{1} \nabla 
        f\left( x^{0} + t (x - x^{0}) \right)_k \, \text{d}t  
        = \varphi_{f}^{\text{VG}}(\frac{1}{2}(x + x^{0}))_k
,\end{align*}
where $x^{0}$ is some baseline picture. We choose $x^{0}$ to be the picture with all 
pixel values set to~0.

For LIME \citep{ribeiro2016should} and SHAP \citep{lundberg2017unified},
the libraries provided by their respective authors were
used\footnote{Library for LIME: \url{https://github.com/marcotcr/lime},
library for SHAP: \url{https://github.com/slundberg/shap}}.
The LIME package is provided under the BSD 2-Clause License and SHAP is provided 
under the MIT License. For LIME we used version 0.2.0.1 and for SHAP version 0.40.0.
The default parameters were used, unless specified otherwise. For the LIME method, 
superpixels are needed to create the attribution. We used two different methods to
find these superpixels. In the `LIME manual' method we manually supplied
two superpixels: the first superpixel consists of the person and the
second superpixel is the background. In the `LIME auto' method we used
the default segmentation algorithm. Finally, for some of the picture
manipulation we used the scikit-image \citep{scikit-learn} package,
version 1.0, under the BSD 3-Clause License\footnote{Scikit-image
package: \url{https://github.com/scikit-image/scikit-image}}.

\begin{figure}[ht]
    \centering
    \begin{tikzpicture}
        \node[inner sep=0pt, draw=black] (orig_pic_1) at (0,0)
            {\includegraphics[width=0.15\textwidth]{figures/user-icon-edited-5.png}};
        \node[inner sep=0pt, draw=black] (attribution_1_1) at (3, 0)
            {\includegraphics[width=0.15\textwidth]{figures/vanilla_gradients_qd_user_icon_5.png}};
        \node[inner sep=0pt, draw=black] (attribution_1_2) at (5.75, 0)
            {\includegraphics[width=0.15\textwidth]{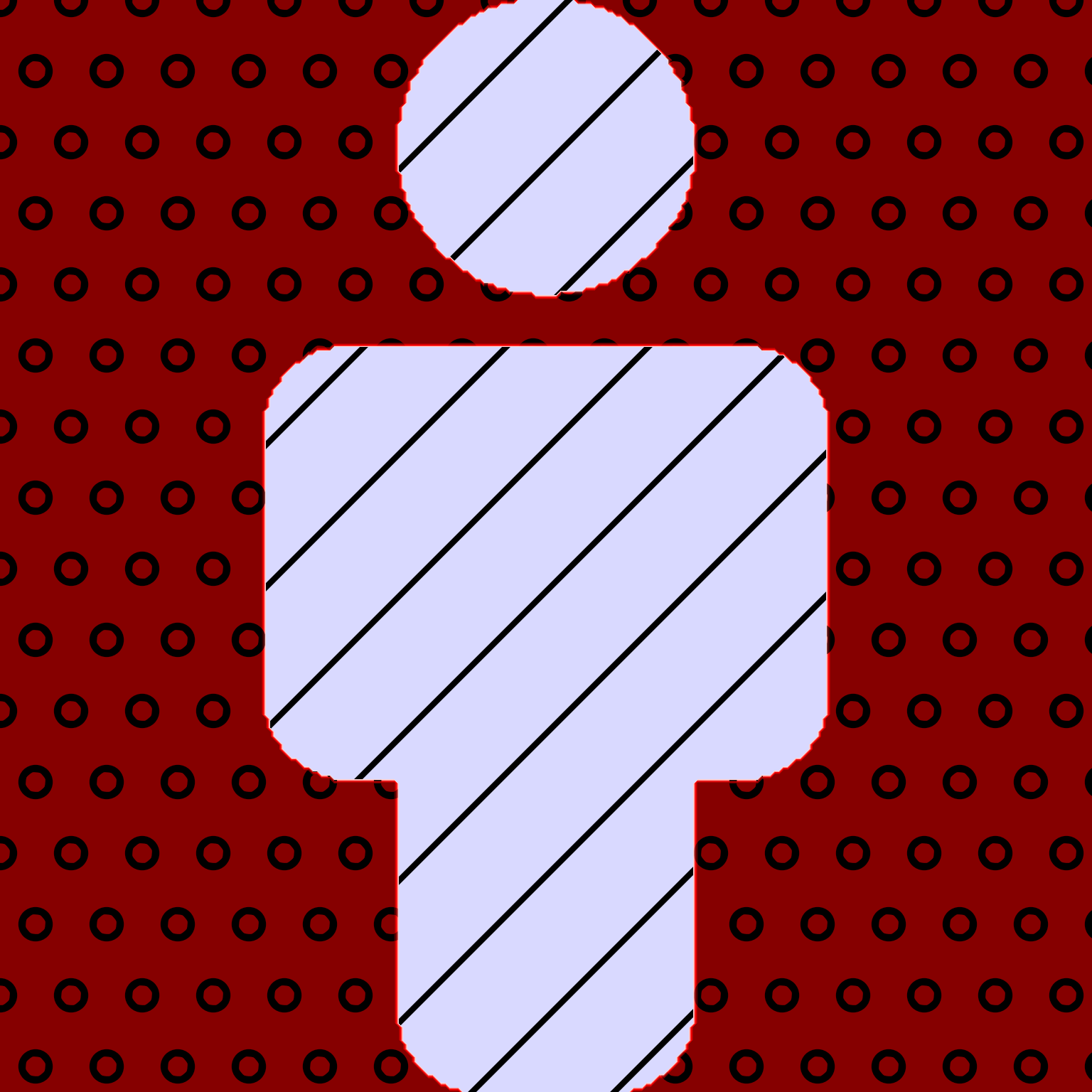}};
        \node[inner sep=0pt, draw=black] (attribution_1_3) at (8.5, 0)
            {\includegraphics[width=0.15\textwidth]{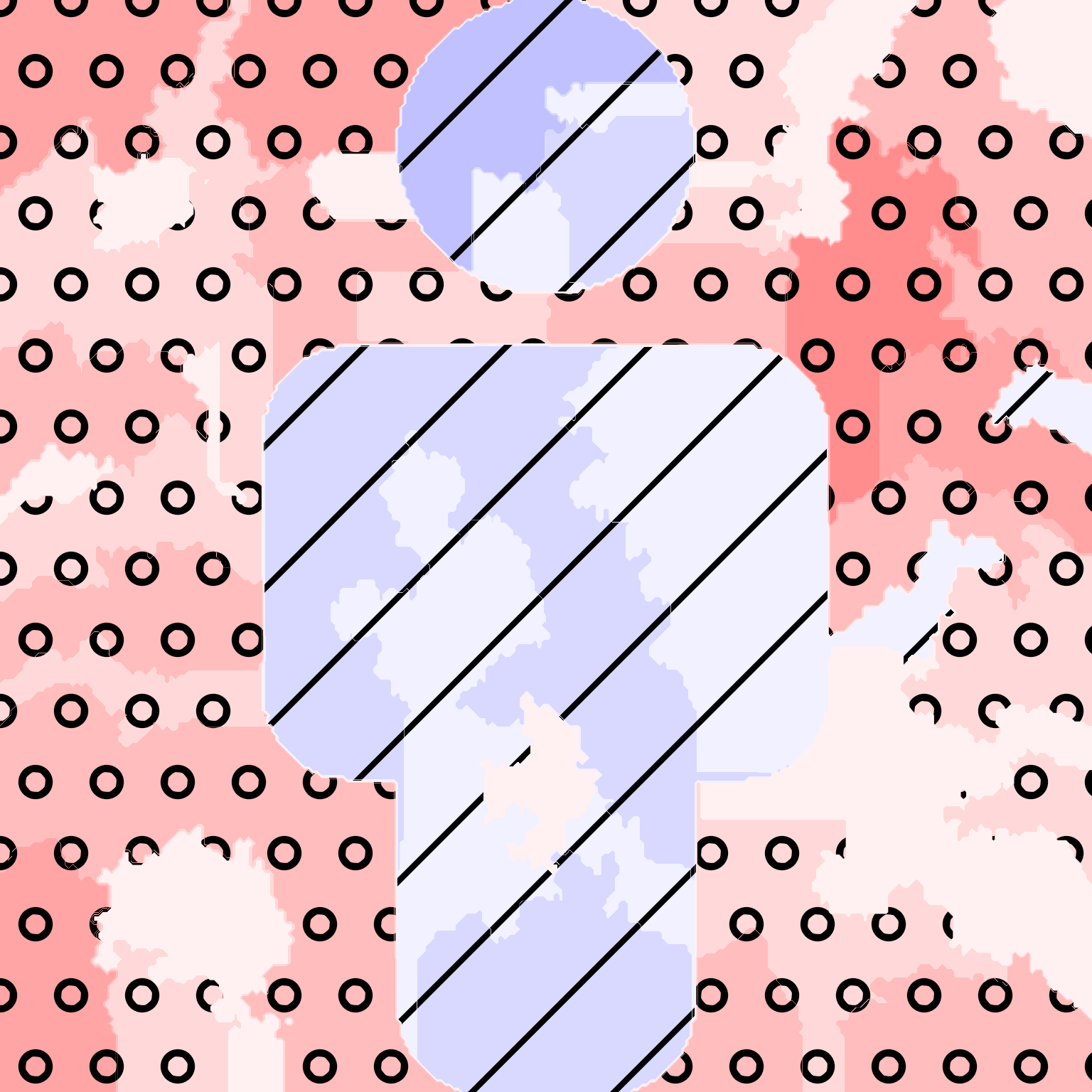}};
        \node[inner sep=0pt, draw=black] (attribution_1_4) at (11.25, 0)
            {\includegraphics[width=0.15\textwidth]{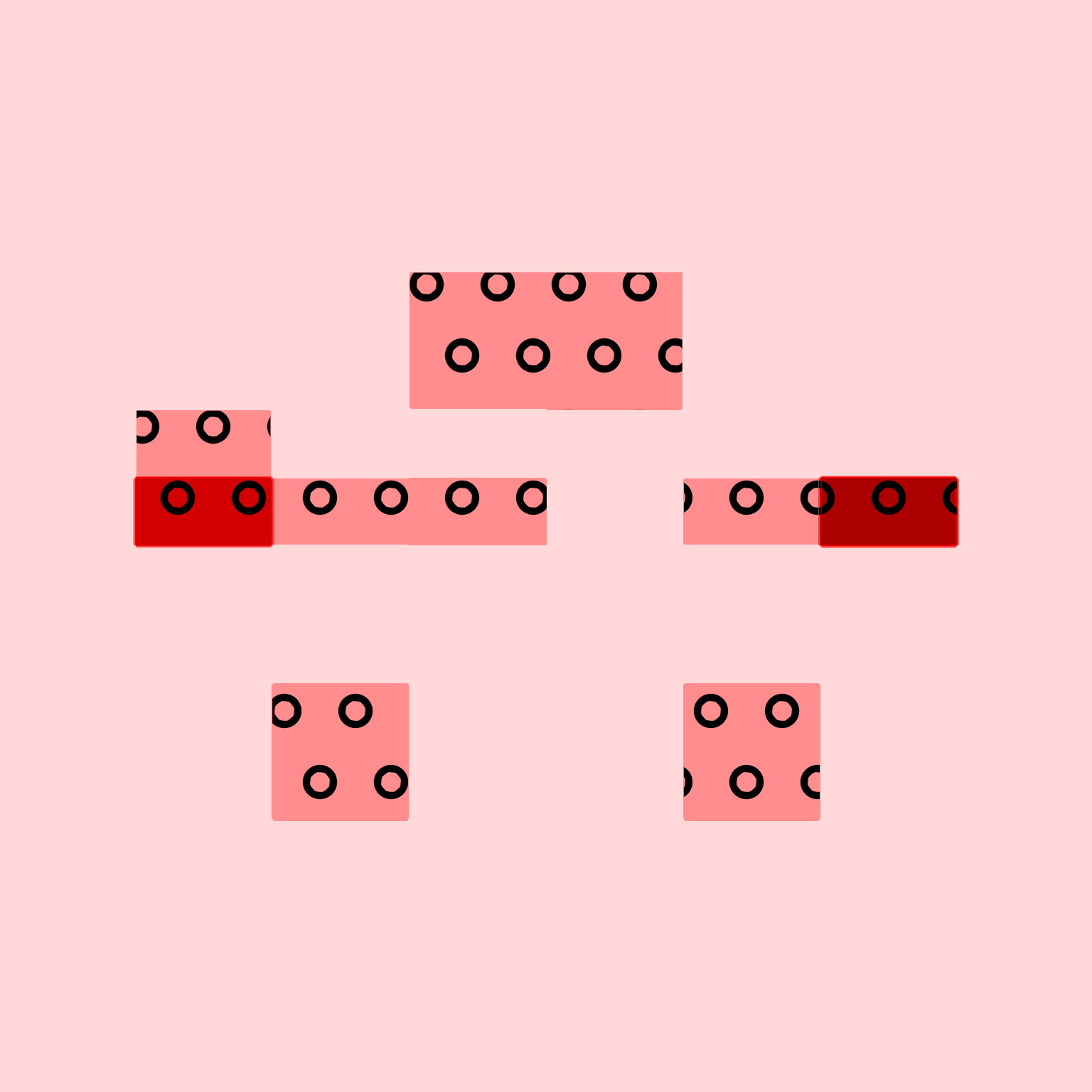}};

        \node[inner sep=0pt, draw=black] (orig_pic_2) at (0, -2.75)
            {\includegraphics[width=0.15\textwidth]{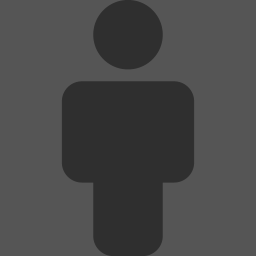}};
        \node[inner sep=0pt, draw=black] (attribution_2_1) at (3, -2.75)
            {\includegraphics[width=0.15\textwidth]{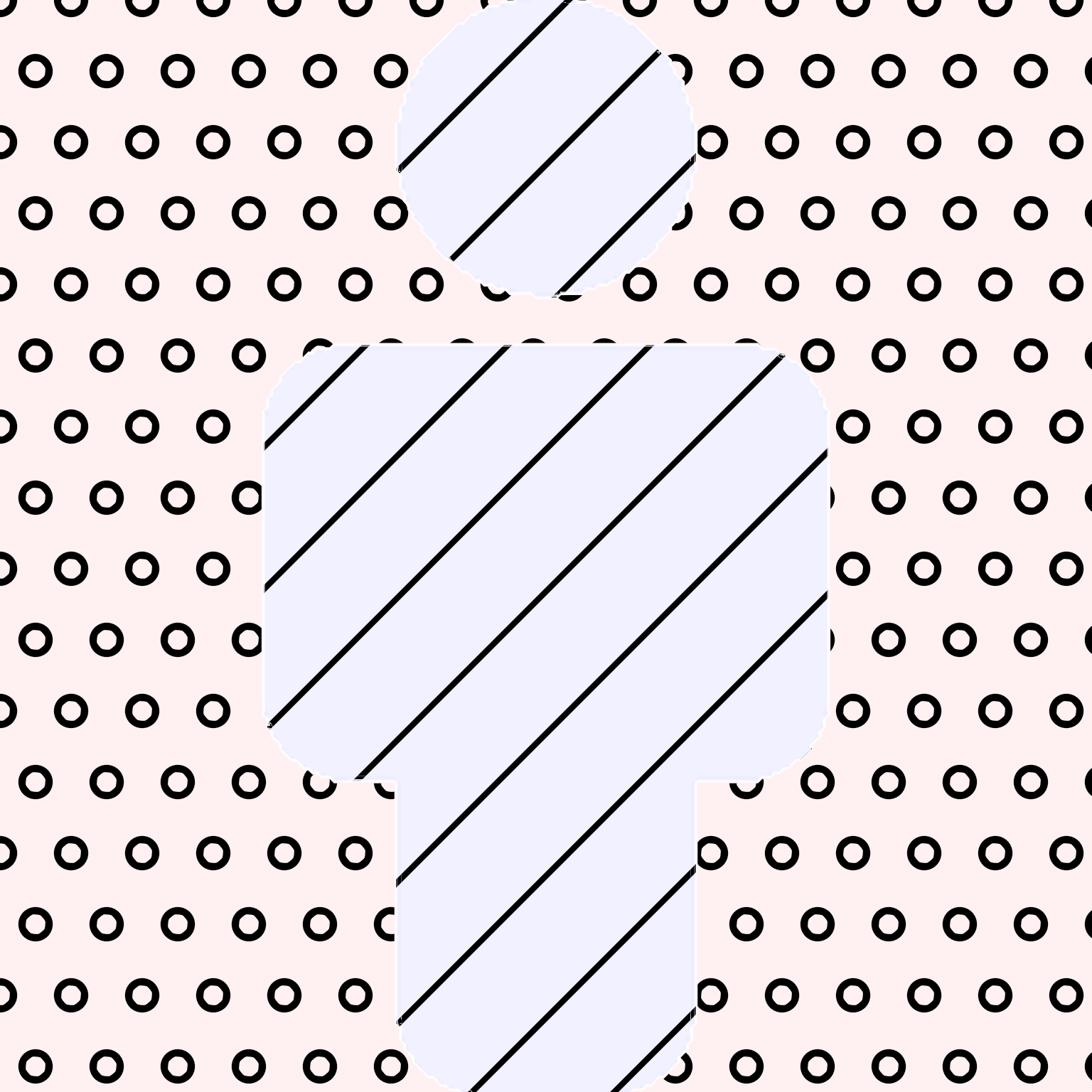}};
        \node[inner sep=0pt, draw=black] (attribution_2_2) at (5.75, -2.75)
            {\includegraphics[width=0.15\textwidth]{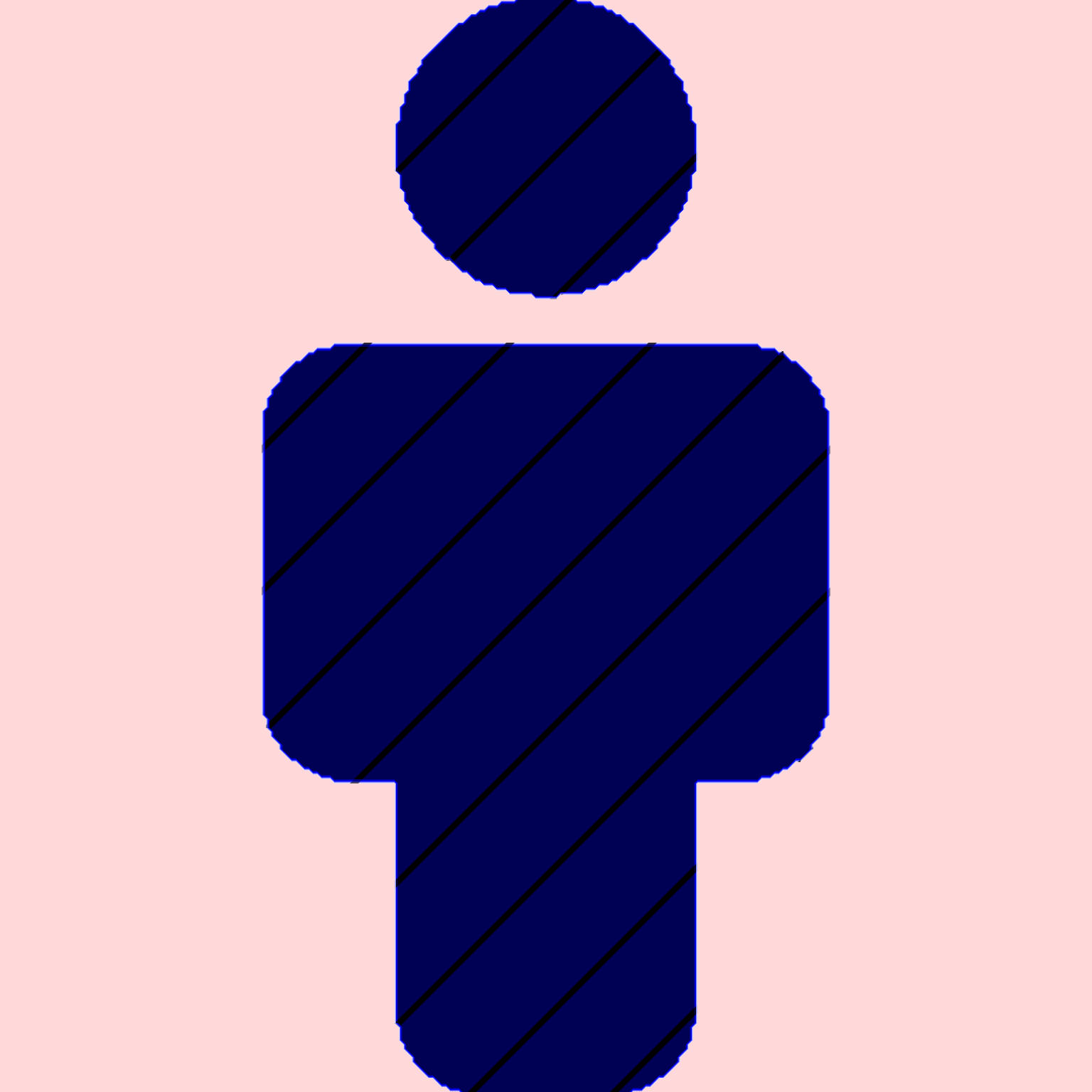}};
        \node[inner sep=0pt, draw=black] (attribution_2_3) at (8.5, -2.75)
            {\includegraphics[width=0.15\textwidth]{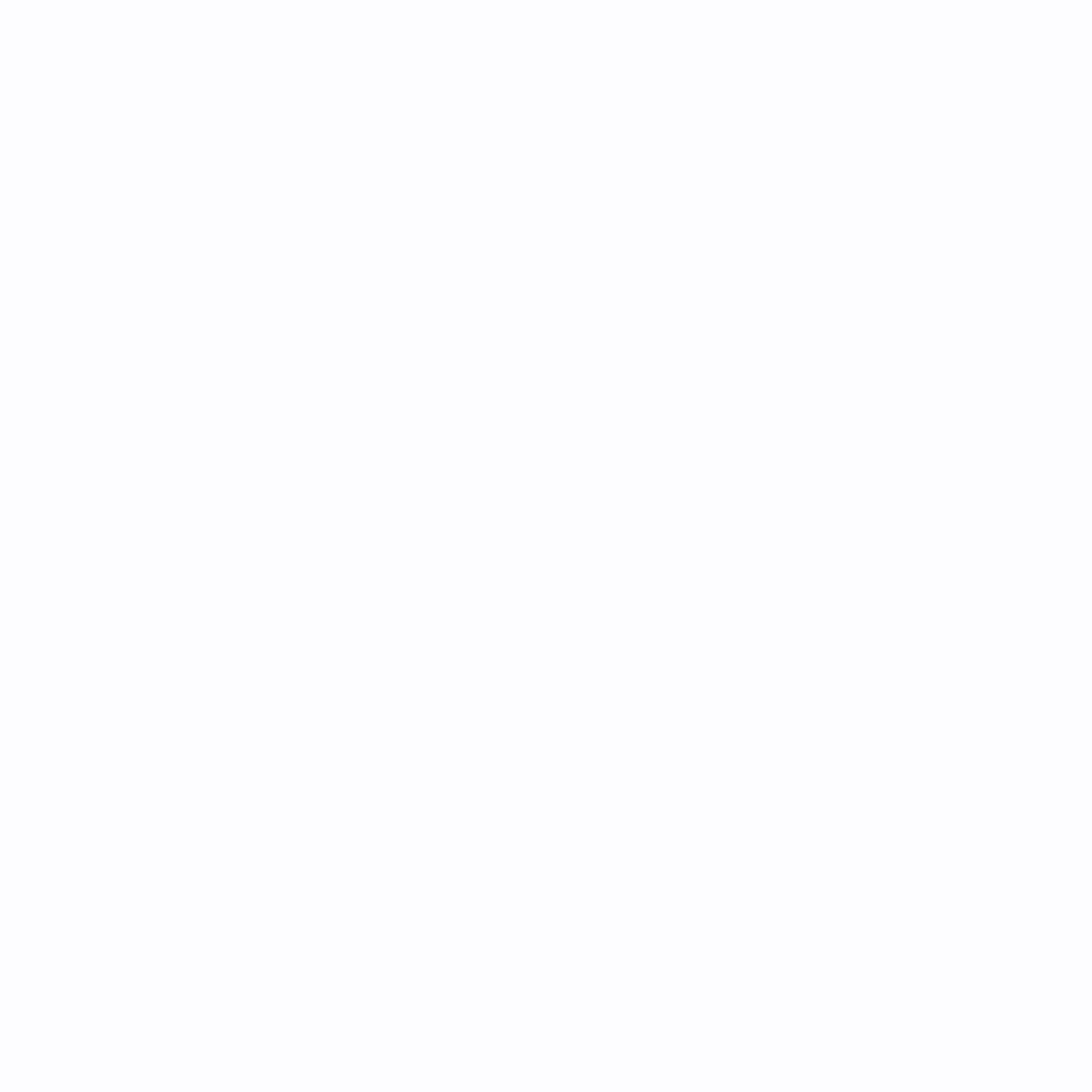}};
        \node[inner sep=0pt, draw=black] (attribution_2_4) at (11.25, -2.75)
            {\includegraphics[width=0.15\textwidth]{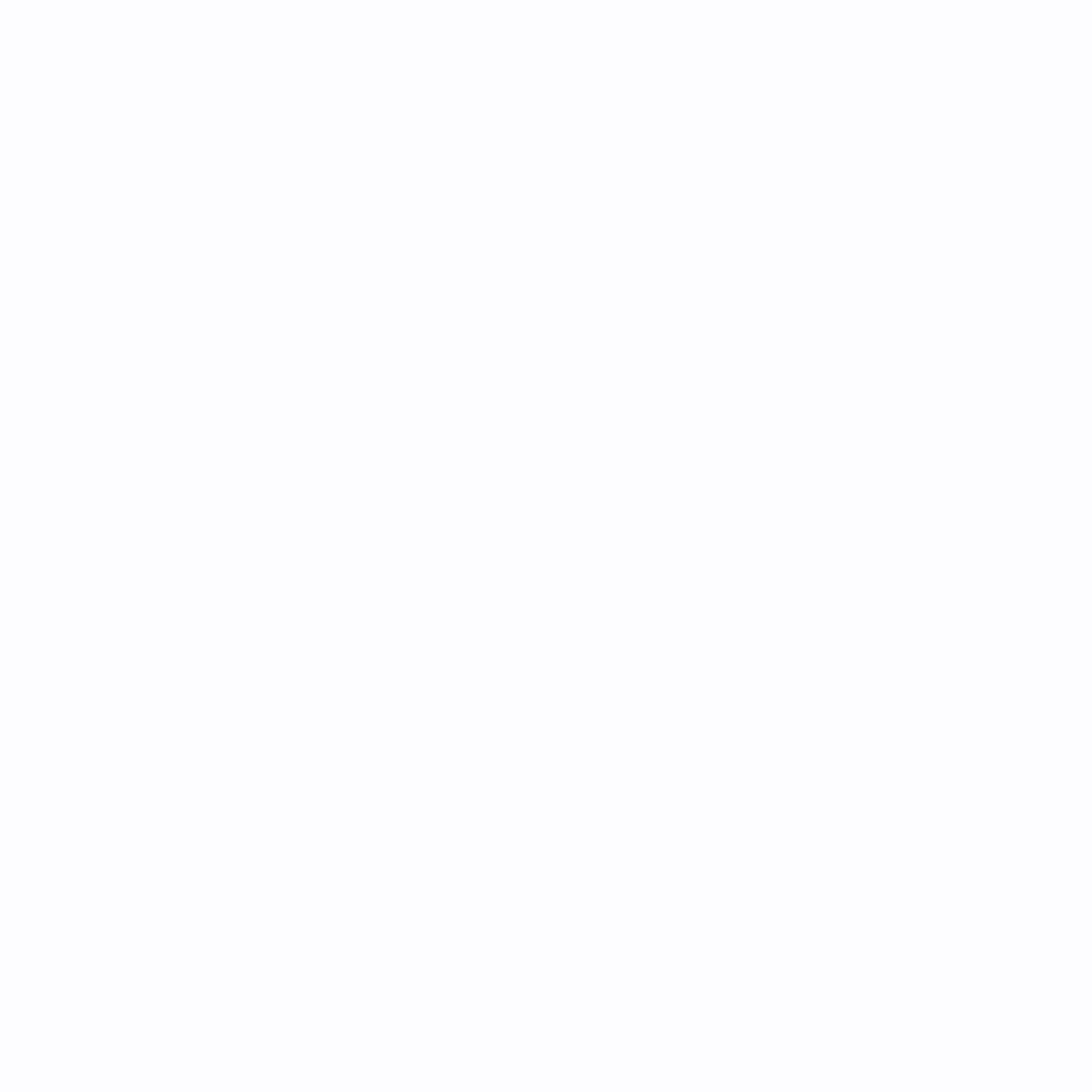}};

        \node[inner sep=0pt, draw=black] (orig_pic_2) at (0, -5.5)
            {\includegraphics[width=0.15\textwidth]{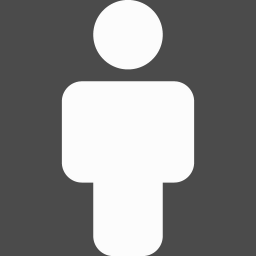}};
        \node[inner sep=0pt, draw=black] (attribution_2_1) at (3, -5.5)
            {\includegraphics[width=0.15\textwidth]{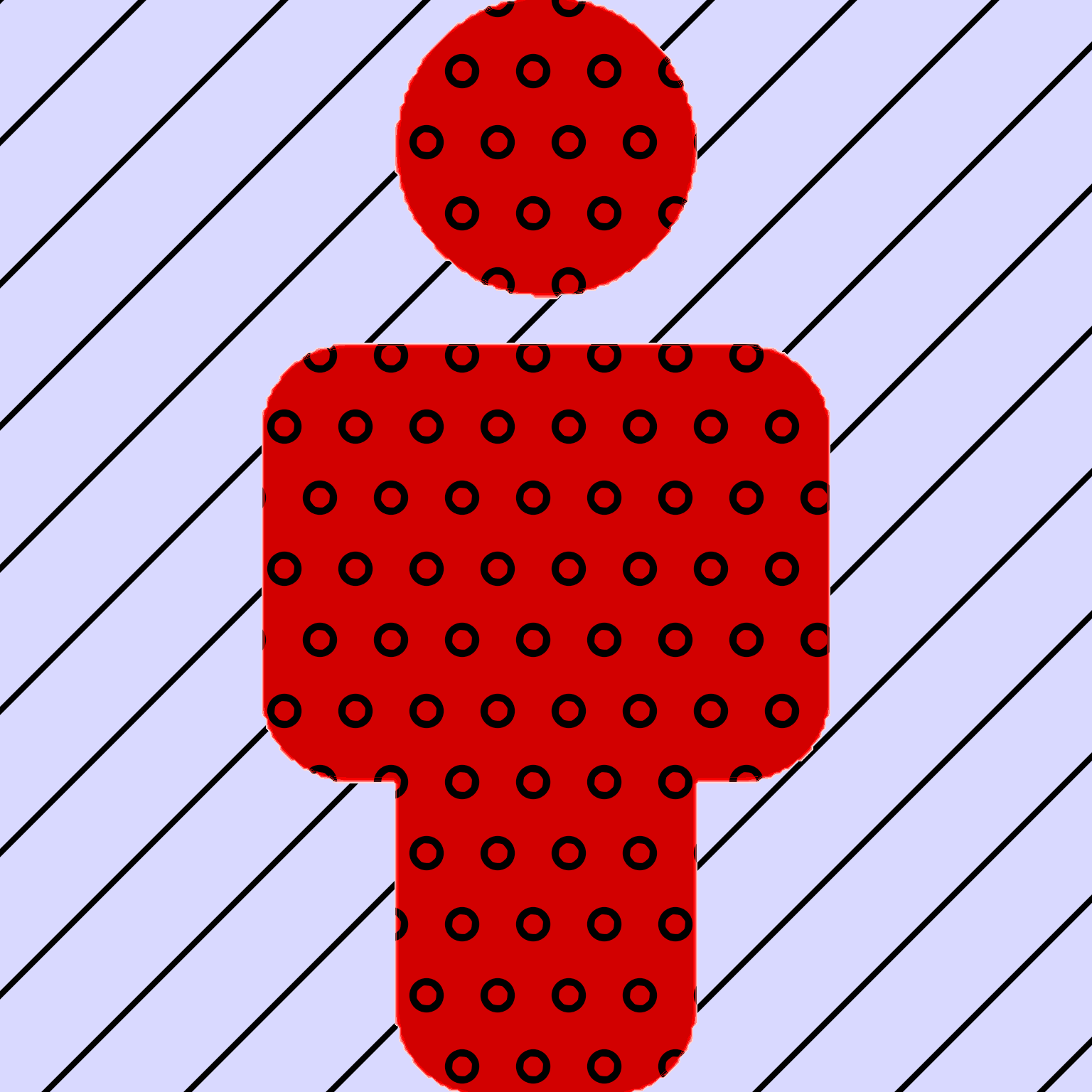}};
        \node[inner sep=0pt, draw=black] (attribution_2_2) at (5.75, -5.5)
            {\includegraphics[width=0.15\textwidth]{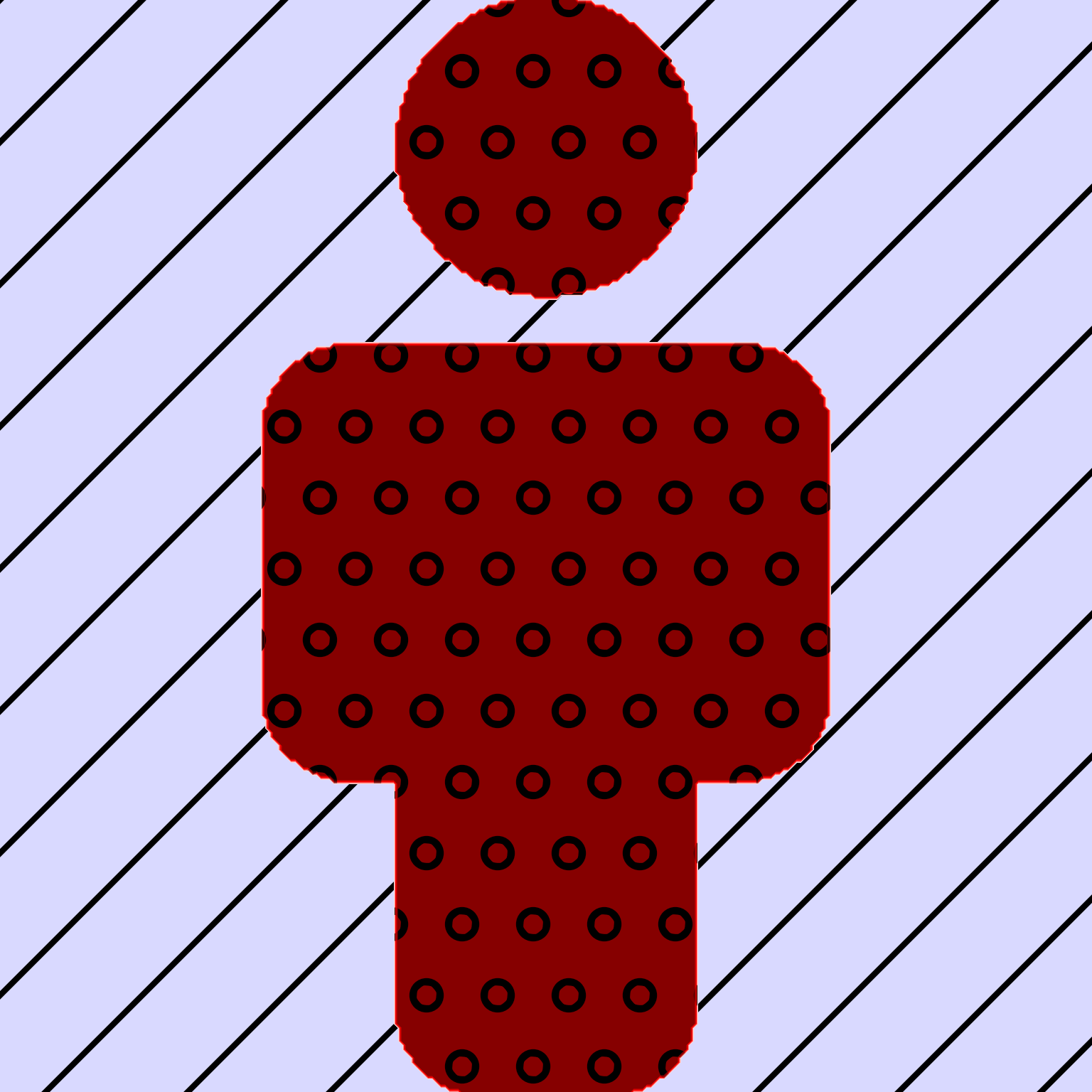}};
        \node[inner sep=0pt, draw=black] (attribution_2_3) at (8.5, -5.5)
            {\includegraphics[width=0.15\textwidth]{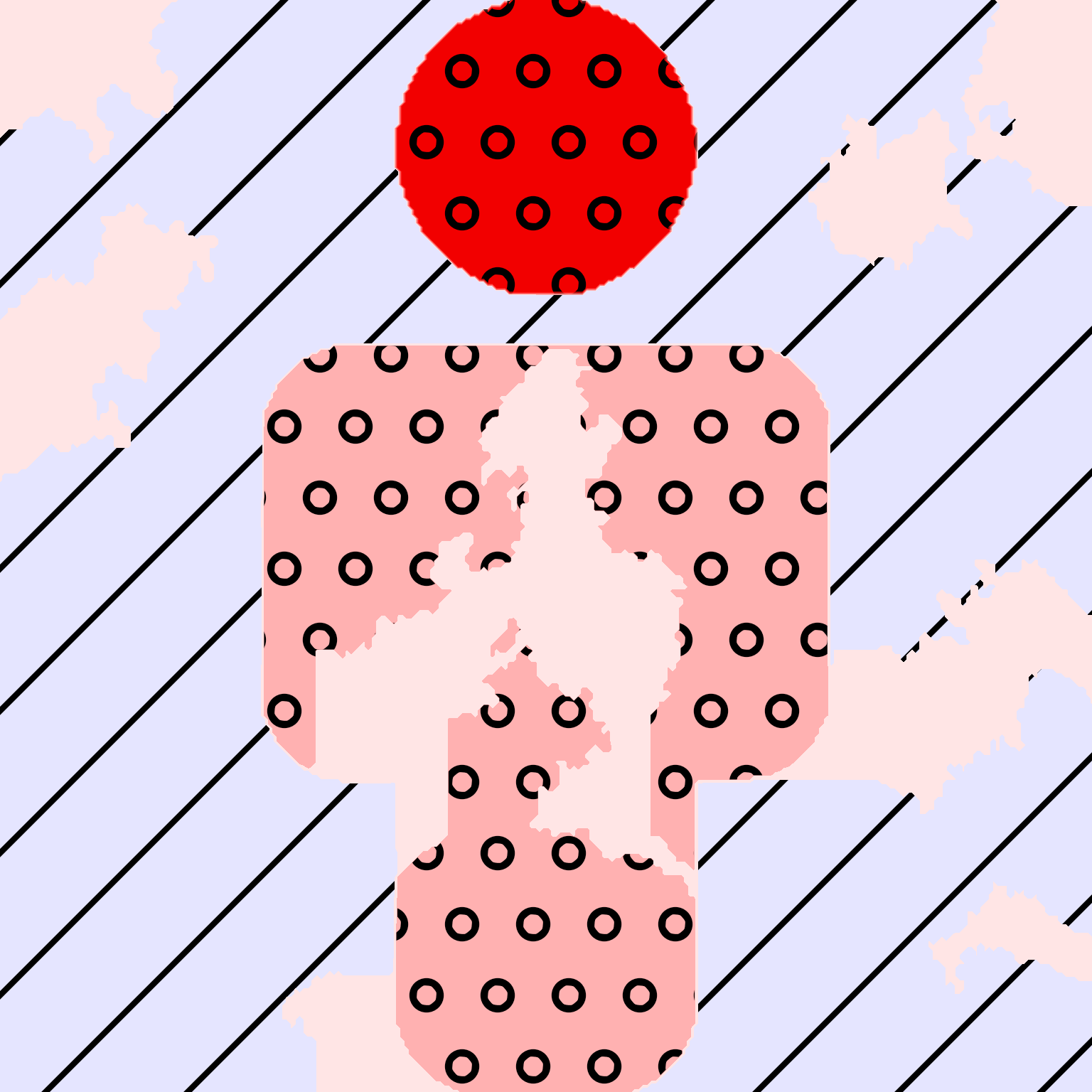}};
        \node[inner sep=0pt, draw=black] (attribution_2_4) at (11.25, -5.5)
            {\includegraphics[width=0.15\textwidth]{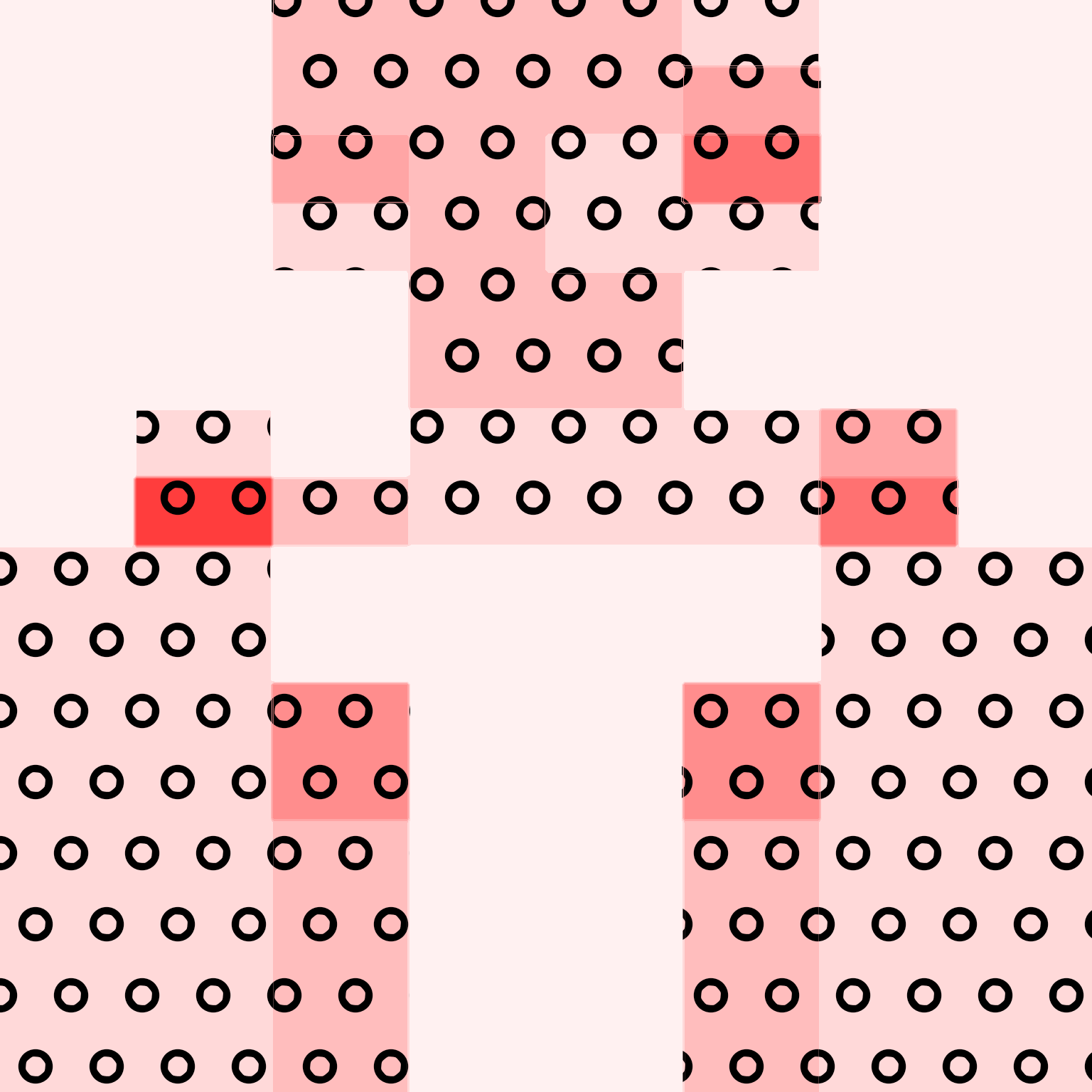}};

        \node[inner sep=0pt, draw=black] (orig_pic_2) at (0, -8.25)
            {\includegraphics[width=0.15\textwidth]{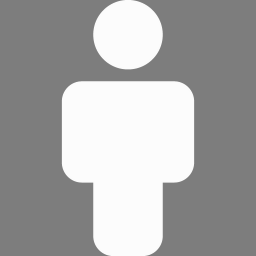}};
        \node[inner sep=0pt, draw=black] (attribution_2_1) at (3, -8.25)
            {\includegraphics[width=0.15\textwidth]{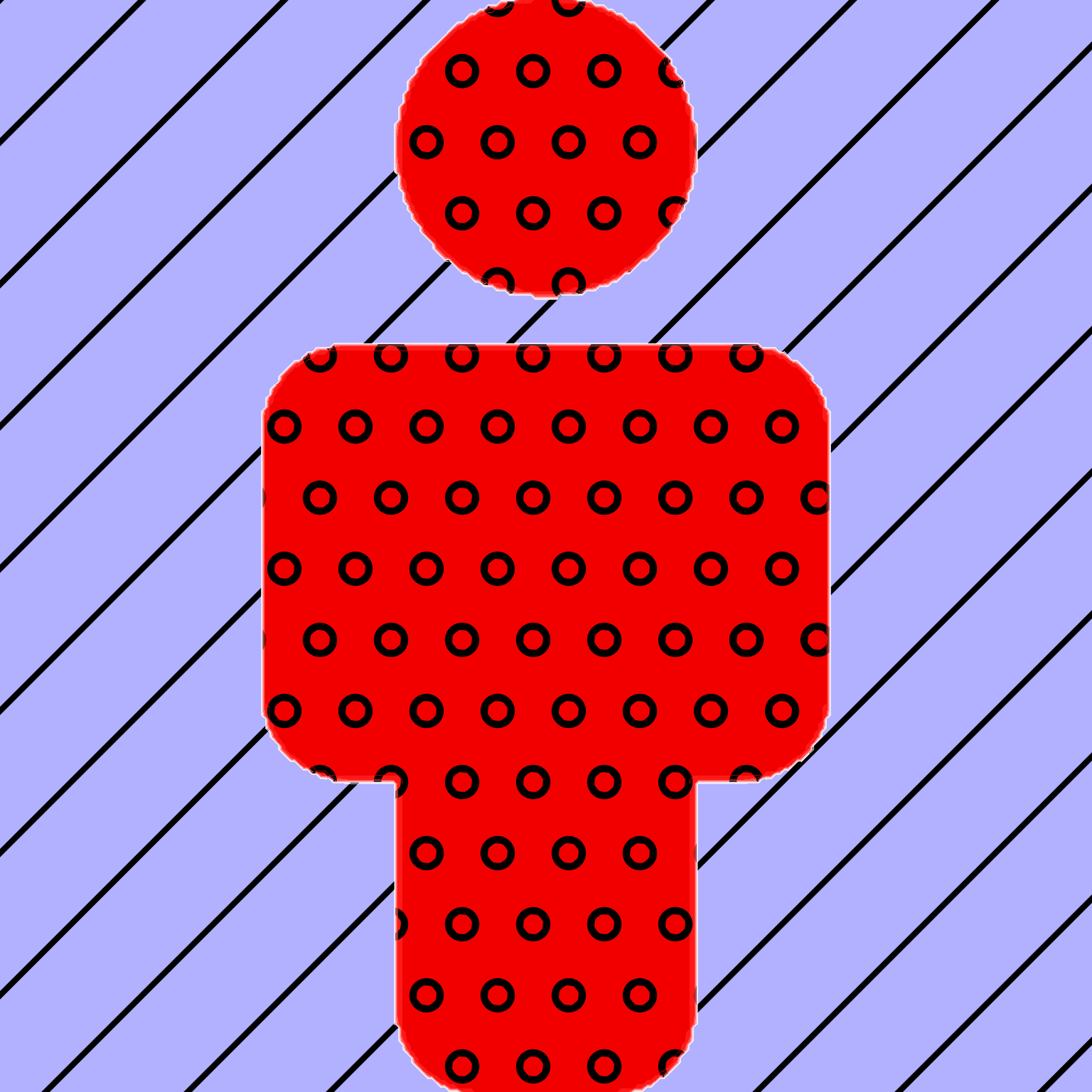}};
        \node[inner sep=0pt, draw=black] (attribution_2_2) at (5.75, -8.25)
            {\includegraphics[width=0.15\textwidth]{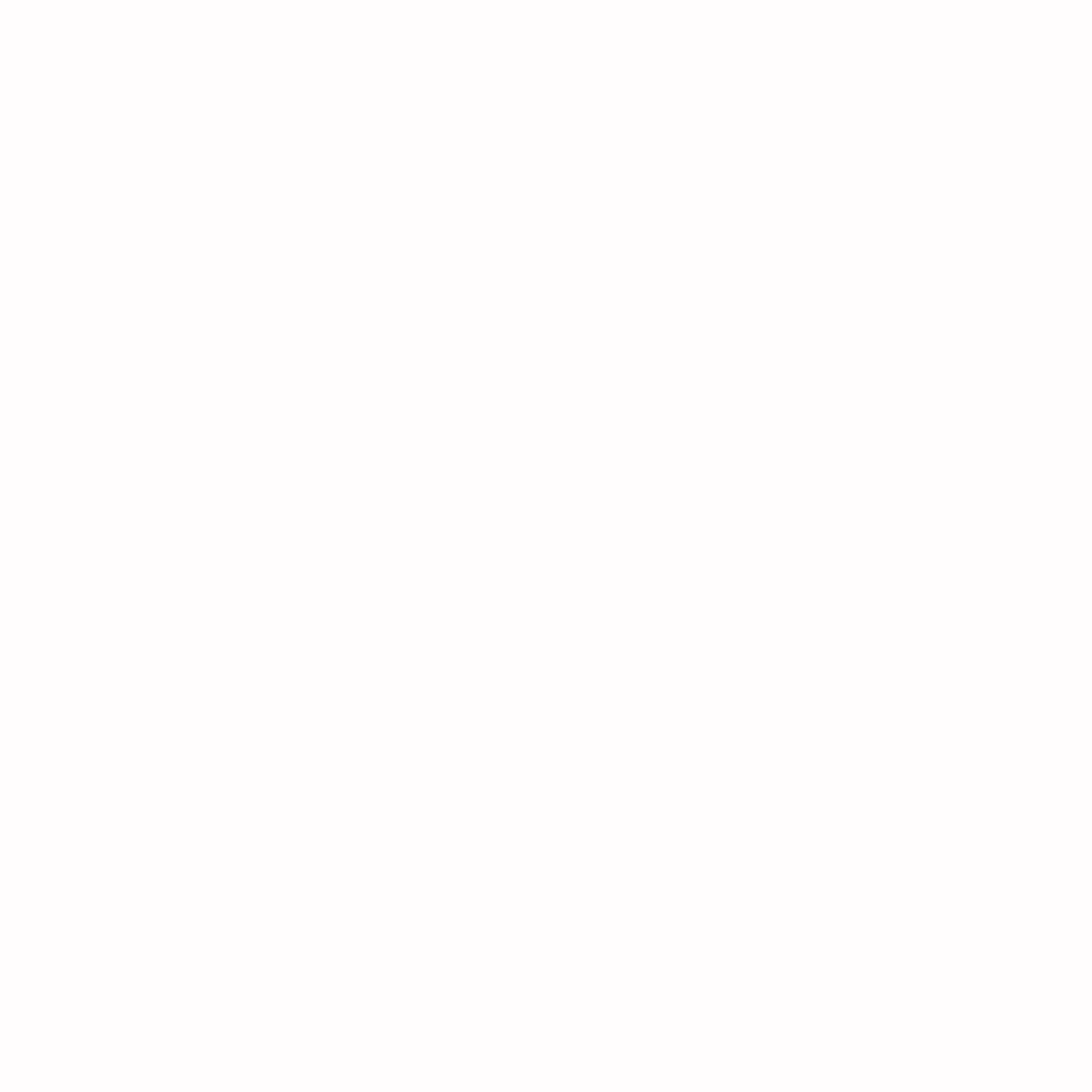}};
        \node[inner sep=0pt, draw=black] (attribution_2_3) at (8.5, -8.25)
            {\includegraphics[width=0.15\textwidth]{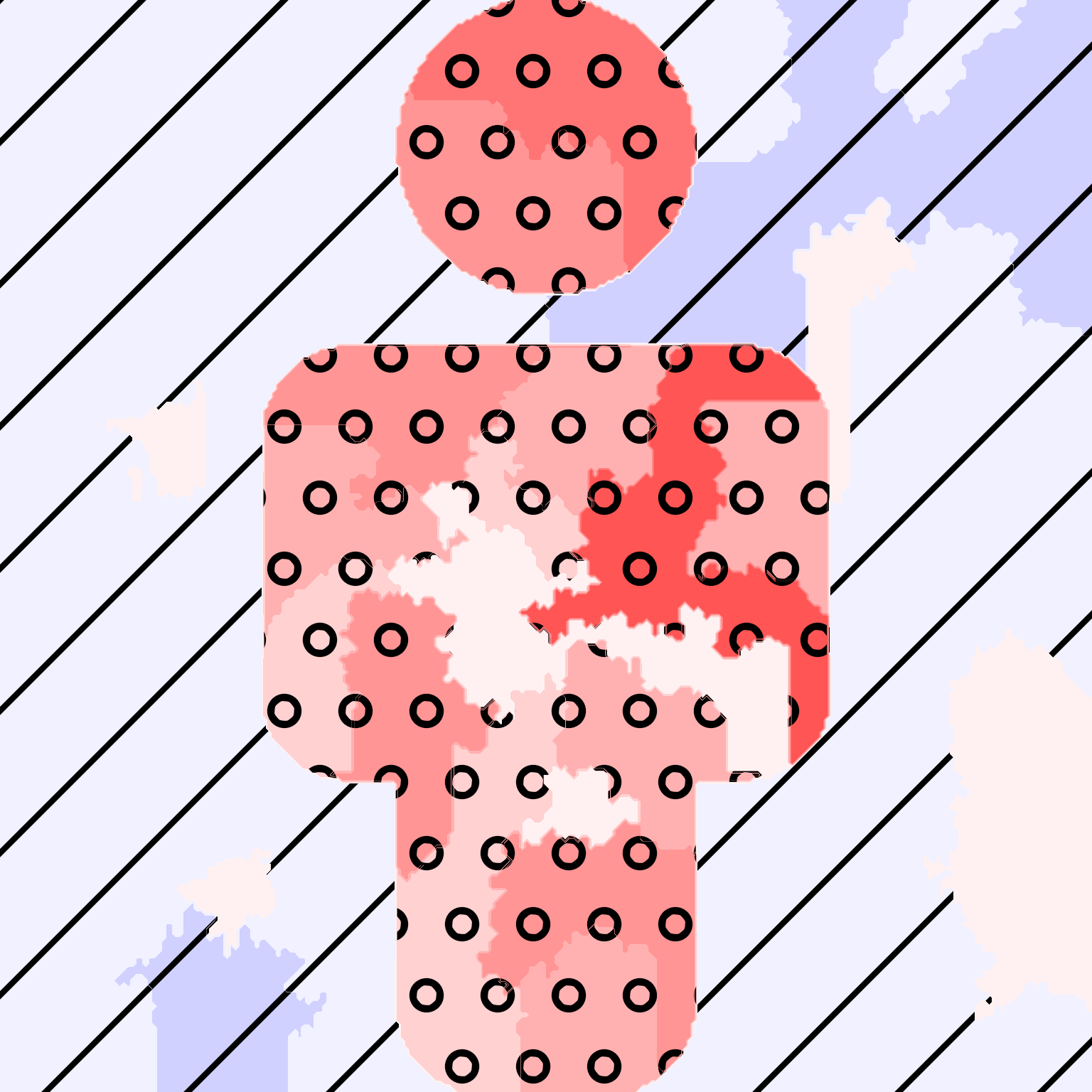}};
        \node[inner sep=0pt, draw=black] (attribution_2_4) at (11.25, -8.25)
            {\includegraphics[width=0.15\textwidth]{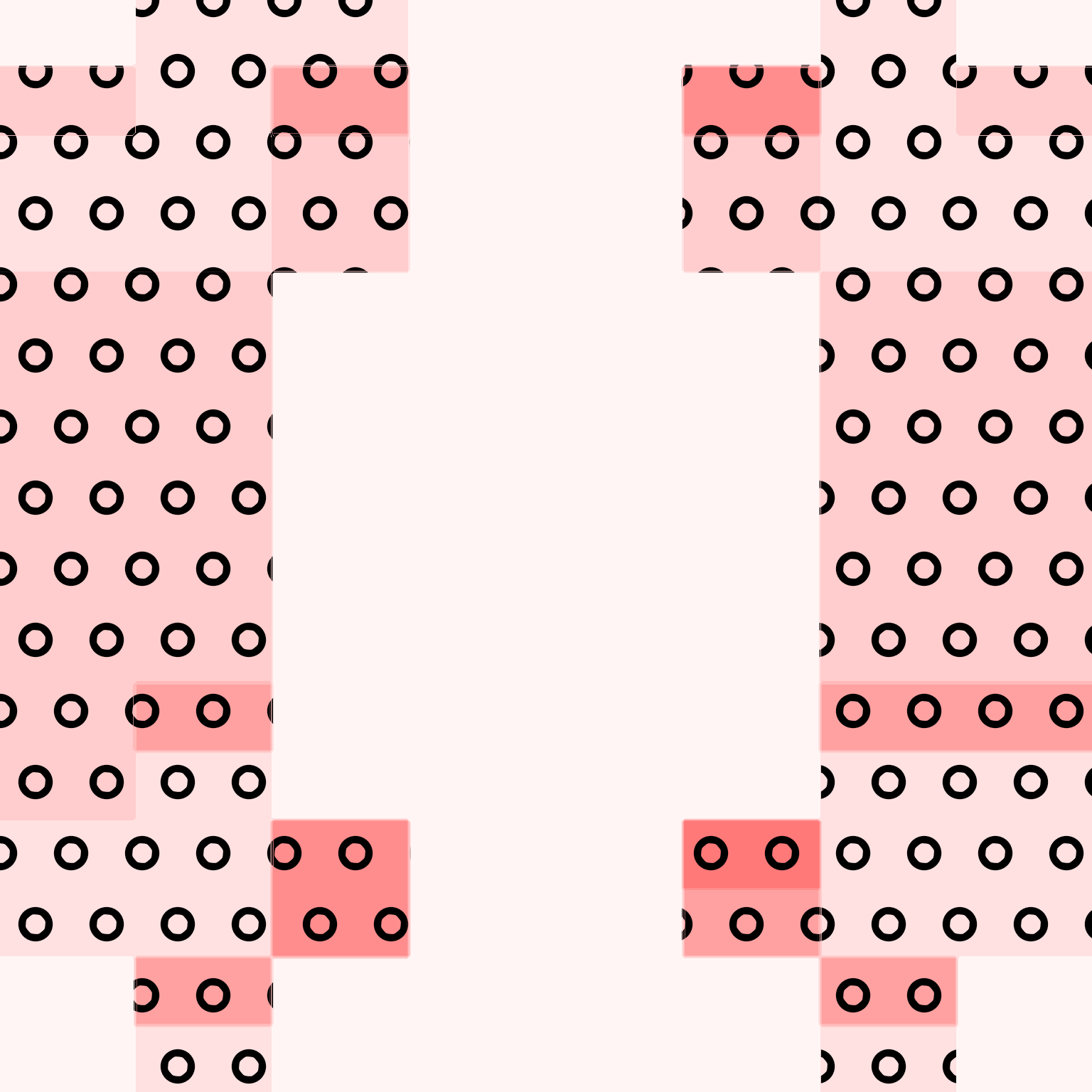}};
        
        \node[inner sep=0pt, draw=black] (orig_pic_2) at (0, -11)
            {\includegraphics[width=0.15\textwidth]{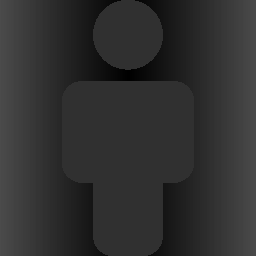}};
        \node[inner sep=0pt, draw=black] (attribution_2_1) at (3, -11)
            {\includegraphics[width=0.15\textwidth]{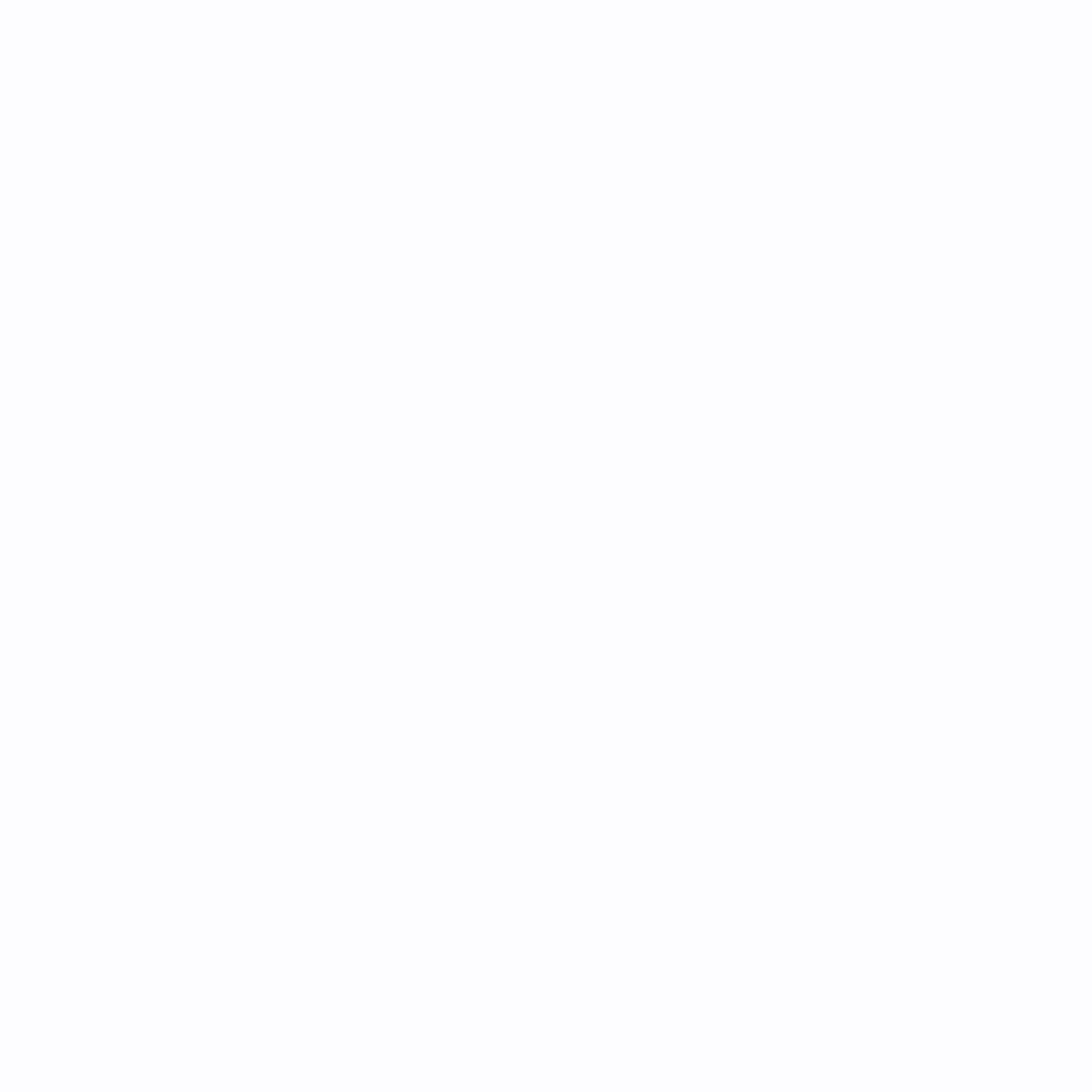}};
        \node[inner sep=0pt, draw=black] (attribution_2_2) at (5.75, -11)
            {\includegraphics[width=0.15\textwidth]{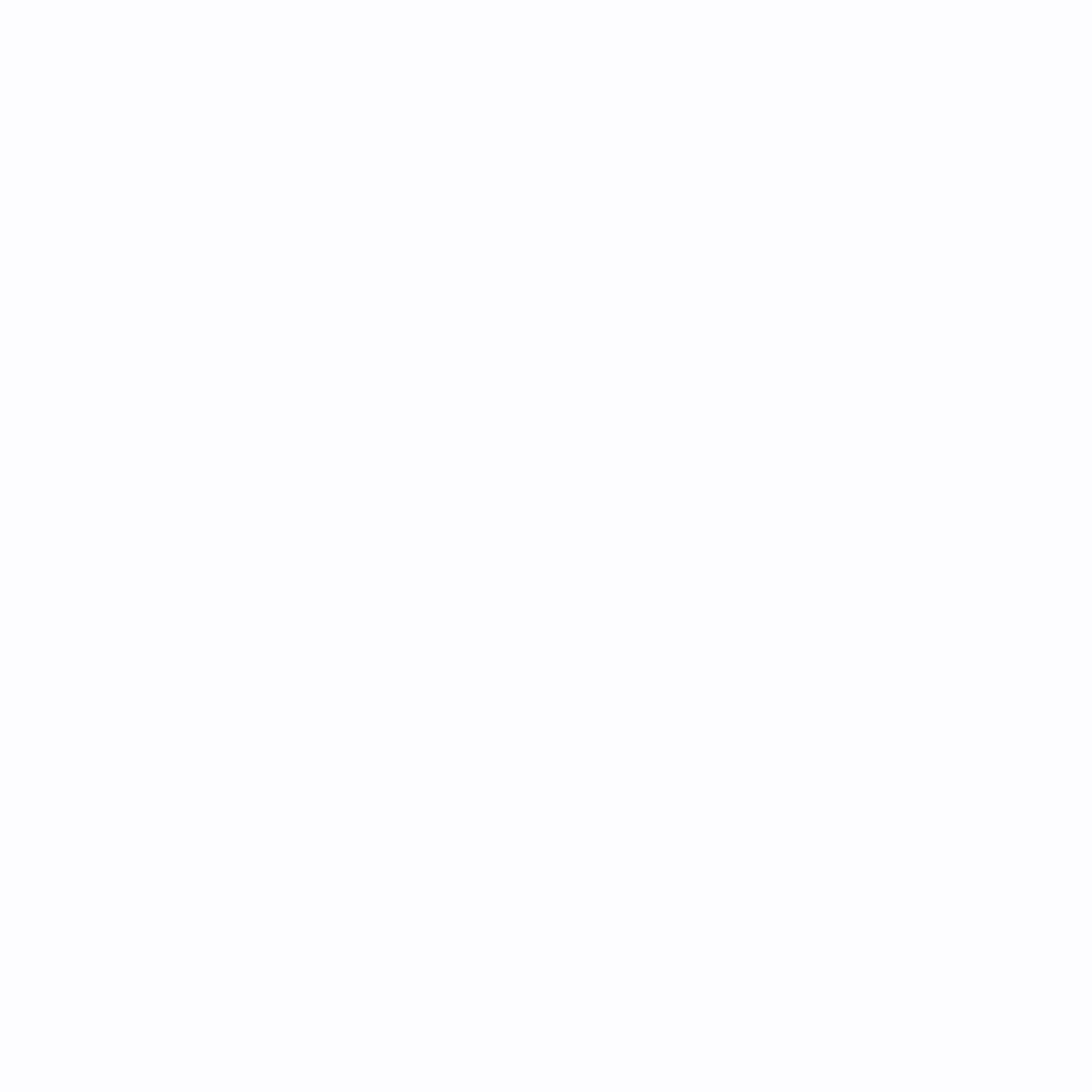}};
        \node[inner sep=0pt, draw=black] (attribution_2_3) at (8.5, -11)
            {\includegraphics[width=0.15\textwidth]{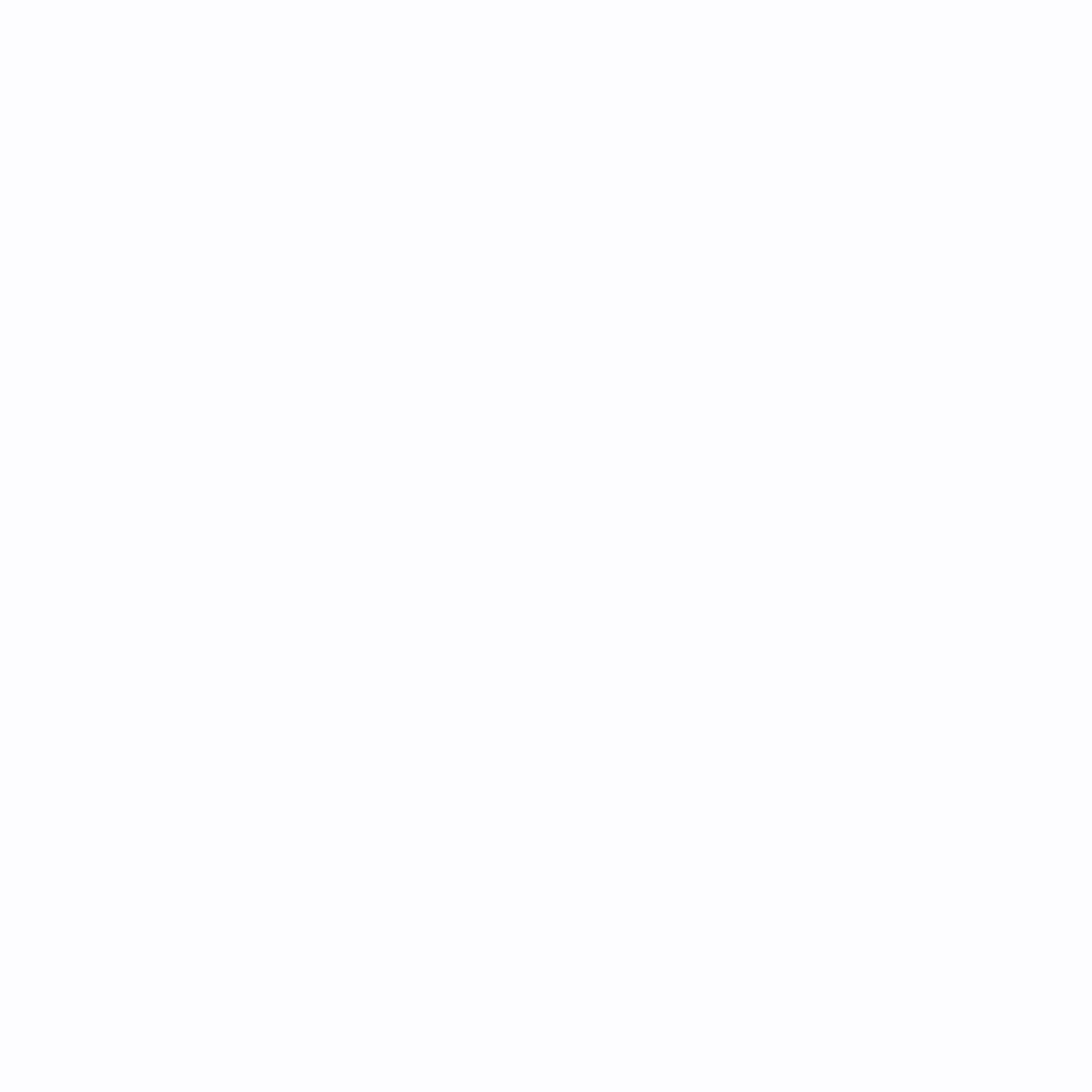}};
        \node[inner sep=0pt, draw=black] (attribution_2_4) at (11.25, -11)
            {\includegraphics[width=0.15\textwidth]{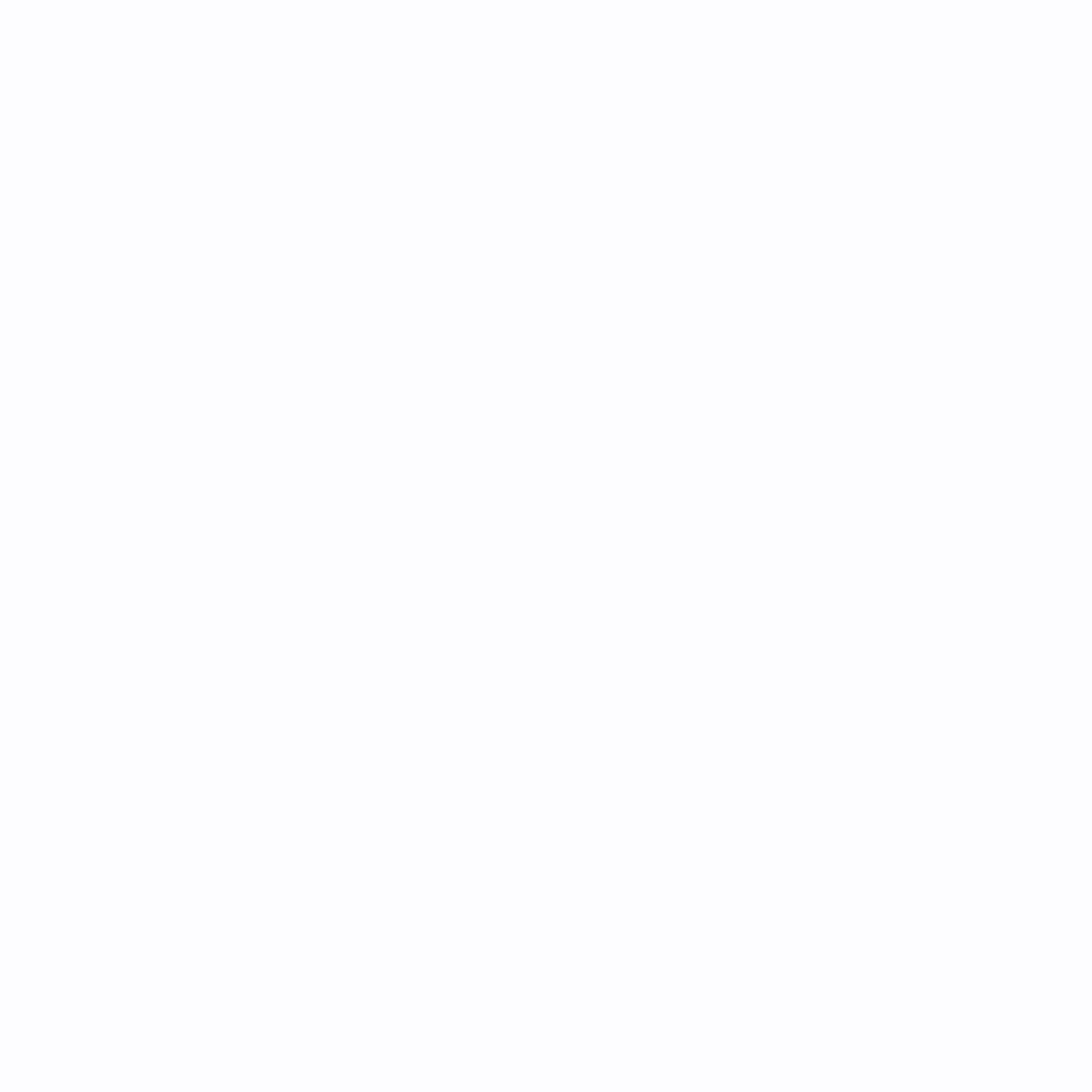}};

        \node[inner sep=0pt, draw=black] (orig_pic_2) at (0, -13.75)
            {\includegraphics[width=0.15\textwidth]{figures/user-icon-zero-attr-1.png}};
        \node[inner sep=0pt, draw=black] (attribution_2_1) at (3, -13.75)
            {\includegraphics[width=0.15\textwidth]{figures/vanilla_gradients_qd_user_icon_53.png}};
        \node[inner sep=0pt, draw=black] (attribution_2_2) at (5.75, -13.75)
            {\includegraphics[width=0.15\textwidth]{figures/self_lime_qd_user_icon_53.png}};
        \node[inner sep=0pt, draw=black] (attribution_2_3) at (8.5, -13.75)
            {\includegraphics[width=0.15\textwidth]{figures/auto_lime_qd_user_icon_53.png}};
        \node[inner sep=0pt, draw=black] (attribution_2_4) at (11.25, -13.75)
            {\includegraphics[width=0.15\textwidth]{figures/shap_qd_user_icon_accepted_53.png}};

        \node (text_1) at (0, 1.5) {Original};
        \node (text_2) at (3, 1.5) {Gradient Methods};
        \node (text_3) at (5.75, 1.5) {LIME manual};
        \node (text_4) at (8.5, 1.5) {LIME auto};
        \node (text_5) at (11.25, 1.5) {SHAP};

        \draw (1.5, 1.1) -- (1.5, -14.9);
    \end{tikzpicture}

    \caption{Additional examples of pictures and their attributions. From top to bottom the labels 
    were: Accepted, Rejected, Accepted, Accepted, Rejected, Rejected.}
    \label{fig:experiment_results}
\end{figure}

\newpage
All proofs of the results in the main text can be found in the
following sections. For clarity we will repeat the statements.

\section{Proof of Theorem~\ref{thm:impossibility_result}}\label{sec:proofs}

\ImposResult*

\begin{proof}
    We will split this proof into three parts. First, we consider the
    one-dimensional case, $\mathcal{X} = \mathbb{R}$. Then, we will show how to
    deal with $\mathcal{X}= \mathbb{R}^{d}$.  Finally, we will discuss the
    result in its most general form, meaning $\mathcal{X} \subseteq
    \mathbb{R}^{d}$.
	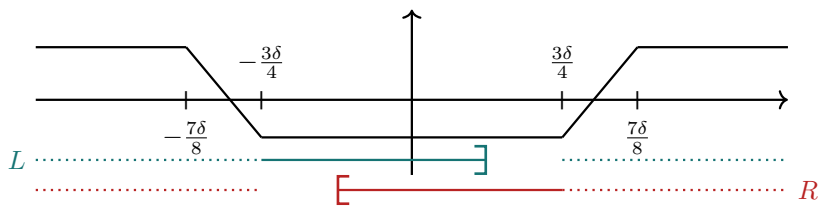
\begin{figure}[b]
		\centering
		\begin{tikzpicture}[line width=0.3mm]
            \draw[->] (-5, 0) -- (5, 0) node[right] {};
            \draw[->] (0, -1) -- (0, 1.2) node[above] {};
            
            \draw[-{Bracket[width=4mm,line width=1pt,length=1.5mm]},
                color={rgb:red,34;green, 178;blue, 178}] 
                (-2, -0.8) node[left] {} -- (1, -0.8);
            \draw[dotted,
                color={rgb:red,34;green, 178;blue, 178}] 
                (-5, -0.8) node[left] {$L$} -- (-2, -0.8);
            \draw[dotted,
                color={rgb:red,34;green, 178;blue, 178}] 
                (2, -0.8) node[left] {} -- (5, -0.8);

            \draw[{Bracket[width=4mm,line width=1pt,length=1.5mm]}-,
                color={rgb:red,178;green, 34;blue, 34}] 
                (-1, -1.2) -- (2, -1.2) node[right] {};
            \draw[dotted,
                color={rgb:red,178;green, 34;blue, 34}] 
                (2, -1.2) -- (5, -1.2) node[right] {$R$};
            \draw[dotted,
                color={rgb:red,178;green, 34;blue, 34}] 
                (-5, -1.2) -- (-2, -1.2) node[right] {};
            
            \node at (-3, 0) {|};
            \node at (-3, -0.5) {$-\frac{7 \delta}{8}$};
            
            \node at (-2, 0) {|};
            \node at (-2, 0.5) {$-\frac{3 \delta}{4}$};
            
            \node at (2, 0) {|};
            \node at (2, 0.5) {$\frac{3 \delta}{4}$};
            
            \node at (3, 0) {|};
            \node at (3, -0.5) {$\frac{7 \delta}{8}$};
            
            \draw[domain=-5:-3, smooth, variable=\x, black] plot ({\x}, 0.7);
            \draw[domain=-3:-2, smooth, variable=\x, black] plot ({\x}, {-1.2 * \x - 2.9});
            \draw[domain=-2:2, smooth, variable=\x, black] plot ({\x}, -0.5);
            \draw[domain=2:3, smooth, variable=\x, black] plot ({\x}, {1.2 * \x -2.9});
            \draw[domain=3:5, smooth, variable=\x, black] plot ({\x}, 0.7);
		\end{tikzpicture}
		\caption{The constructed function $f(x)$. The dotted lines indicate that those values could
        be part of $L$ or $R$, but do not have to be part of either set necessarily. }
		\label{fig:constructed_f_plot}
	\end{figure}
    Consider the case when $\mathcal{X} = \mathbb{R}$ and $C(x) = \mathcal{X}$.
    By assumption, there are two points $z_1, z_2 \in\mathbb{R} $ such that
    $\widetilde{u}(z_1, z_2)\ge \tau$.  We can construct a continuous function
    $f$ explicitly (see Figure~\ref{fig:constructed_f_plot}):
	\begin{align*}
        f(x) =
        \begin{cases}
            z_1 & |x| < \frac{3}{4}\delta, \\
            \frac{8(z_2 - z_1)}{\delta}|x| + (7z_1 - 6z_2)
                &  \frac{3\delta}{4} \le |x| \le \frac{7\delta}{8}, \\
            z_2 & |x| > \frac{7\delta}{8}.
        \end{cases}
	\end{align*}
    We will apply Theorem~\ref{thm:2a} to show that no attribution
    method $\phi_f$ can be both recourse sensitive and continuous on
    this function $f$. To this end, we first note that, for $x \in
    [-\frac{3\delta}{4}, \frac{\delta}{8}]$ and $y = x- \delta$ we
    have that $u_f(x, y) =\widetilde{u}(f(x), f(y)) =
    \widetilde{u}(z_1, z_2) \ge \tau$. Which means that the
    attribution $\varphi$ is allowed to point to the left on
    $[-\frac{3\delta}{4}, \frac{\delta}{8}]$. By a similar argument,
    we find that $\varphi_f$ is allowed to point to the right on
    $[-\frac{\delta}{8}, \frac{3\delta}{4}]$.
	Furthermore, we also see that $\varphi_f$ is not allowed to point towards
	the left on $[\frac{\delta}{4}, \frac{3\delta}{4}]$, because $f$ does not change on 
	$[x- \delta, x]$  if $x \in [\frac{\delta}{4}, \frac{3\delta}{4}]$. 
	Analogously, it can be shown that $\varphi_f$ cannot point towards the right 
	on $[-\frac{3\delta}{4}, -\frac{\delta}{4}]$. With regard to 
	Theorem~\ref{thm:2a}, this means that $[\frac{\delta}{4}, \frac{3\delta}{4}]
	\subseteq \widetilde{R}$ and $[-\frac{3\delta}{4}, -\frac{\delta}{4}]
	\subseteq \widetilde{L}$ for any decomposition $\widetilde{L}, \widetilde{R}$ with 
	$\widetilde{L} \cup \widetilde{R} = L \cup R$. Now, it is not possible to separate
	$[-\frac{3\delta}{4}, -\frac{\delta}{4}]$ from the interval 
	$[-\frac{3\delta}{4}, \frac{\delta}{8}]$, which means that we would need 
	$[-\frac{3\delta}{4}, \frac{\delta}{8}] \subseteq \widetilde{L}$. Similarly, we would need
	$[-\frac{\delta}{8}, \frac{3\delta}{4}] \subseteq \widetilde{R}$. It follows that 
	$\widetilde{L}$ and $\widetilde{R}$ are not disjoint and in particular can never 
	be separated. We conclude that for this continuous $f$ no continuous recourse 
	sensitive $\varphi_f$ can exist. Note that this argument implies that 
    no  $\varphi_f$ could exist on the interval $[-\delta, \delta]$. So, failing 
    to provide recourse or be robust is a local issue. 
	
	We will now generalize the above argument to the setting where $\mathcal{X}=\mathbb{R}^{d}$ and 
    the constraint is given by 
    $C(x)=\{y \in \mathbb{R}^{d} \mid \| x- y\|_{0} \le k  \} $ or 
    $C(x) = \{y \in \mathbb{R}^{d}  \mid y = x + \alpha z, \alpha \ge 0, z \in D\}$.
    Actually, these two versions of constraints can be dealt with simultaneously, by rotating 
    the input space in the latter case in such a way that one of the vectors $z \in D$ lies
    alongside one axis. 
    The argument for the one dimensional result can now be 
    embedded in these cases. Namely, for $x=0$, find the component
	that is allowed to change by the constraints. Call this the $i$'th component.
	We can define a similar function as the function above, 
	\begin{align*}
	f(x) =
	\begin{cases}
        z_1 & |x_i| < \frac{3}{4}\delta,  \\
        \frac{8(z_2 - z_1)}{\delta}|x_i| + (7z_1 - 6z_2)
            &  \frac{3\delta}{4} \le |x_i| \le \frac{7\delta}{8},\\
        z_2 & |x_i| > \frac{7\delta}{8}.
	\end{cases}
	\end{align*}
	This function is again continuous and only changes in the $i$'th coordinate. Repeating the
	argument of the one-dimensional case, we see that an attribution is allowed to be negative
	in the $i$ 'th component on 
	$L^{i} = \mathbb{R}^{i-1} \times 
	[-\frac{3\delta}{4}, \frac{\delta}{8}] \times \mathbb{R}^{d - i-1}$
	and negative on 
	$R^{i} = \mathbb{R}^{i-1} \times
	[-\frac{\delta}{8}, \frac{3\delta}{4}] \times \mathbb{R}^{d - i-1}$. Just as before, we
	also see that $\varphi^i_f(x)$ is necessarily negative on the set
	$\mathbb{R}^{i-1} \times 
	[-\frac{3\delta}{4}, -\frac{\delta}{4}] \times \mathbb{R}^{d - i-1}$, but this set
	cannot be separated from $L^{i}$, which ensures that $\varphi_f(x)_i$ has to be negative 
	on the whole of $L^{i}$. Alternatively, $\varphi_f(x)_i$ has to be positive on 
	$\mathbb{R}^{i-1} \times 
	[\frac{\delta}{4}, \frac{3\delta}{4}] \times \mathbb{R}^{d - i-1}$. Hence, also on the
	whole of $R^{i}$ by the inability of separating $R^{i}$ from this set. However, as $L^{i}$ and
    $R^{i}$ are not disjoint, this is a contradiction. Thus, no
    continuous attribution function
    can exist for $f$ in higher dimensions.

    Finally, we need to handle the multidimensional case that
    $\mathcal{X} \subseteq \mathbb{R}^{d}$. By assumption we have a line
    segment $\ell  \subseteq \mathcal{X}$ with the property that $\ell
    \subseteq C(x)$ for all $x \in \ell$. We can apply a suitable
    transformation to the input space, so that we can fall back on our
    previous argument. This transformation is to first translate the
    line segment such that its middle point becomes the origin. Next, we
    apply a rotation such that the line segment lies along side the
    $i$'th axis. Call this translation and rotation $M$ and $\rho$,
    respectively. The desired function now becomes
    \begin{align*}
         g(x) = f\circ \rho \circ M(x)
    ,\end{align*}
    where $f$ is the function of the precious case. The function $g$ does not allow any
    continuous recourse sensitive attribution function, as $f$ did not allow this on the 
    line segment $[-\delta, \delta]$ in the $i$ 't component and 
    $g(\ell) = f\circ \rho \circ M(\ell) \supseteq f([-\delta,\delta])$. Now, if a continuous and
    recourse sensitive attribution function $\varphi_g$ would exist for $g$, we could construct 
    one for $f$ as well. This is done by setting 
    \begin{align*}
        \varphi_f(x) = \varphi_g \circ M^{-1} \circ \rho^{-1}(x)
    .\end{align*}
    The inverses exist and as translations and rotations do not change distances, this will 
    be a continuous recourse sensitive attribution function for $f$, which was not possible. So, 
    no continuous recourse sensitive attribution method can exist for $g$ on 
    $\mathcal{X}\subseteq \mathbb{R}^{d}$.
\end{proof}

\section{Proofs of Section~\ref{sec:suff_cond_main}}\label{sec:suff_conditions}
\SuffClass*

\begin{proof}
    The function $P_U$ is well defined by the fact that $U$ is closed and convex.
    The set $U$ is closed as it is the pre-image of a closed set
    under a continuous function and this ensures that the projection
    exists. Convexity of $U$ guarantees uniqueness of the projection. 
    Additionally, it is known that the projection is a continuous function 
    if the projection exists and is unique.
    It follows that $\phi_f$ is also continuous.
    This leaves us to check that the map is recourse sensitive. If $x$ is
    such that $f(x) \ge  0$, then $\varphi_f(x) =0$, which is a valid
    attribution, since $u_f(x,x) = f(x) \ge 0$. So, assume $f(x)< 0$ and
    take $\alpha=1$ in the definition of recourse sensitivity. Then
    either $\|P_U(x) - x\| > \delta$, in which case $T(x) = \emptyset$
    and recourse sensitivity holds trivially, or $\|P_U(x) - x\| \leq
    \delta$ so that $P_U(x) \in T(x)$ because $P_U(x) \in U$ by
    definition, so $\varphi_f$ is again recourse sensitive. We conclude
    that $\varphi_f$ is both continuous and recourse sensitive. 
\end{proof}

As stated in the main text, we will need some additional tools from the
field of multi-valued analysis to prove the general result. First, we
will need a definition of continuity for set-valued expressions. 

\begin{defn}[Hemi-continuity]\label{def:hemi}
    For topological spaces $\mathcal{X}$ and $\mathcal{Y}$, a set-valued function 
    $U: \mathcal{X} \to 2^{\mathcal{Y}}$ is called \emph{upper hemi-%
    continuous (UHC)} at $x_0 \in \mathcal{X}$ if, for any open $B \subseteq \mathcal{Y}$ 
    with $U(x_0) \subseteq B$, there exists an open neighbourhood $A$ of $x_0$ such that
    for all $x \in A$, $U(x)$ is a subset of $B$. 

    A set-valued function $U: \mathcal{X} \to 2^{\mathcal{Y}}$ is called \emph{lower hemi-%
    continuous (LHC)} at $x_0 \in \mathcal{X}$, if for any 
    open set $B \subseteq \mathcal{Y}$ intersecting $U(x_0)$ there exists 
    an open neighbourhood $A$ of $x_0$ such that
    $U(x)$ intersects $B$ for all $x \in A$

    If $U$ is UHC and LHC at $x_0$, then $U$ is called \emph{hemi-continuous at $x_0$}. If $U$ is
    hemi-continuous at every $x_0 \in \mathcal{X}$, then $U$ is called \emph{hemi-continuous}.
\end{defn}

We will also need the following two Lemmas. The first relates UHC and LHC to normal 
continuity, when $U$ is single-valued. The second tells us when the graph
of $U$  is a closed set.

\begin{lemma}\label{lem:single_set}
    If $U$ is UHC or LHC at $x_0 \in \mathcal{X}$ and single-valued in some neighbourhood
    $\mathcal{N}$ around $x_0$ , then the function $f: \mathcal{N} \to \mathcal{Y}$ such
    that $U(x) = \{f(x)\}$ is a continuous function at $x_0$.
\end{lemma}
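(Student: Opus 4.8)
The plan is to unwind the definitions of UHC and LHC directly; the two cases run in parallel. In both I would fix an arbitrary open set $B \subseteq \mathcal{Y}$ with $f(x_0) \in B$ and aim to produce an open neighbourhood $A'$ of $x_0$, contained in $\mathcal{N}$, with $f(A') \subseteq B$, which is exactly continuity of $f$ at $x_0$.

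For the UHC case, I would note that $U(x_0) = \{f(x_0)\} \subseteq B$, so upper hemi-continuity supplies an open neighbourhood $A$ of $x_0$ with $U(x) \subseteq B$ for every $x \in A$. Setting $A' = A \cap \mathcal{N}$ (still an open neighbourhood of $x_0$, since $x_0 \in \mathcal{N}$), each $x \in A'$ has $U(x) = \{f(x)\}$, whence $\{f(x)\} \subseteq B$, i.e.\ $f(x) \in B$; thus $f(A') \subseteq B$.

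For the LHC case, I would instead note that $B$ intersects $U(x_0) = \{f(x_0)\}$, so lower hemi-continuity supplies an open neighbourhood $A$ of $x_0$ with $U(x) \cap B \neq \varnothing$ for every $x \in A$. Again put $A' = A \cap \mathcal{N}$; for $x \in A'$ we have $\{f(x)\} \cap B \neq \varnothing$, which forces $f(x) \in B$, so $f(A') \subseteq B$. In either case $B$ was an arbitrary open neighbourhood of $f(x_0)$, so $f$ is continuous at $x_0$.

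I expect no real obstacle here: the argument is a direct translation of the hemi-continuity definitions. The single point requiring a moment's care is intersecting the neighbourhood delivered by hemi-continuity with $\mathcal{N}$, so that $U(x)$ is genuinely the singleton $\{f(x)\}$ and one may legitimately pass from a statement about the set $U(x)$ (being contained in $B$, or meeting $B$) to the corresponding statement about the point $f(x)$. The same argument applies verbatim at any other point of $\mathcal{N}$ where $U$ is hemi-continuous, but the lemma as stated only asserts continuity at $x_0$.
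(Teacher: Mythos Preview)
Your proof is correct and follows essentially the same reasoning as the paper's, which also unwinds the hemi-continuity definitions directly; the only cosmetic difference is that the paper phrases continuity sequentially (showing $f(x_n) \to f(x_0)$ for any $x_n \to x_0$) rather than via open neighbourhoods. Your explicit intersection with $\mathcal{N}$ is in fact a small improvement, as the paper tacitly writes $U(x_n) = \{f(x_n)\}$ without noting that this requires $x_n \in \mathcal{N}$.
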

\begin{proof}
    Take some sequence $\{x_n\}_{n=1}^{\infty}$ that converges to $x_0$. Recall that convergence
    is equivalent with the following. For any open neighbourhood $B$ of $x_0$, 
    there exists an $N \in \mathbb{N}$ such  that $n \ge N$ implies that $x_n \in B$. 

    Start by assuming that $U$ is UHC and single-valued. Take any open neighbourhood $B$ of 
    $f(x_0)$. By $U$ being UHC, we can find an open neighbourhood $A$ of $x_0$ such that 
    $U(y)\subseteq B$ for all $y \in A$. Using the above characterisation of convergence, we can 
    find an $N \in \mathbb{N}$ such that $x_n \in A$, whenever $n\ge N$. This also means that
    B$\{f(x_n)\} =U(x_n) \subseteq B$. In particular, $f(x_n) \in B$. As $B$ was arbitrary, 
    it follows that $\displaystyle\lim_{n\to \infty}f(x_n) = f(x_0)$ and that $f$ is continuous at $x_0$.

    Next, we assume that $U$ is LHC and single-valued. Again, take any any open set $B$ such that 
    $U(x_0) \cap B \neq \varnothing$. By the fact that $U(x_0)$ is single-valued, this actually
    means that $\{f(x_0)\} =U(x_0) \subseteq B$. By an analogous argument 
    we again find that $f$ is continuous at $x_0$. 
\end{proof}

\begin{lemma}\label{lem:closed_graph_property}
    If $U$ is UHC and $U(x)$ is a closed set for all $x \in \mathcal{X}$, then the set
    \begin{align*}
        \mathrm{Gr}(U) = \{(x, y) \in \mathcal{X} \times \mathcal{Y}  \mid  y \in U(x)\} 
    \end{align*}
    is closed.
\end{lemma}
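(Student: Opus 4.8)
The plan is to prove the closed-graph property by a standard sequential argument, exploiting the fact that $\mathcal{X}$ and $\mathcal{Y}$ are metric (subsets of Euclidean spaces), so that closedness of $\mathrm{Gr}(U)$ can be checked on convergent sequences. Suppose $(x_n, y_n) \in \mathrm{Gr}(U)$ with $(x_n, y_n) \to (x_0, y_0)$; I want to show $y_0 \in U(x_0)$. Assume for contradiction that $y_0 \notin U(x_0)$. Since $U(x_0)$ is closed, its complement is open, so there is an open ball $B_1$ around $y_0$ disjoint from $U(x_0)$. Now the set $B = \mathcal{Y} \setminus \cl(B_1')$ for a slightly smaller ball $B_1' \ni y_0$ is an open set containing $U(x_0)$ (since $U(x_0)$ is disjoint from the closed ball $\cl(B_1')$), and it stays a positive distance away from $y_0$.

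Next I would apply upper hemi-continuity: because $U(x_0) \subseteq B$ with $B$ open, UHC at $x_0$ gives an open neighbourhood $A$ of $x_0$ such that $U(x) \subseteq B$ for all $x \in A$. Since $x_n \to x_0$, there is $N$ with $x_n \in A$ for all $n \ge N$, hence $y_n \in U(x_n) \subseteq B$ for all such $n$. But $y_n \to y_0$, and $y_0 \notin B$ — in fact $y_0$ lies in the open ball $B_1'$, which is disjoint from $B$ — so for $n$ large enough $y_n \in B_1'$ and therefore $y_n \notin B$, a contradiction. Therefore $y_0 \in U(x_0)$, so $\mathrm{Gr}(U)$ is closed.

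The main subtlety, and the only place requiring care, is the construction of the open set $B$: UHC is phrased for an open set $B$ \emph{containing} $U(x_0)$, and I need such a $B$ that simultaneously excludes a whole neighbourhood of $y_0$ (not just $y_0$ itself), so that the eventual membership $y_n \in B$ can be contradicted by $y_n \to y_0$. Taking $B$ to be the complement of a small closed ball around $y_0$ achieves exactly this, using closedness of $U(x_0)$ to guarantee $U(x_0) \subseteq B$ once the ball is small enough. Everything else is routine: translating convergence into the eventual-containment statements and chaining the two neighbourhoods.
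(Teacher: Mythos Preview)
Your argument is correct. The paper does not actually prove this lemma; it simply cites Proposition~1.4.8 of Aubin and Frankowska, \emph{Set-Valued Analysis}. Your sequential proof is the standard one and is exactly the kind of argument that reference contains. The one point worth flagging is that you rely on $\mathcal{X}$ and $\mathcal{Y}$ being metric (to test closedness via sequences and to shrink the ball so that $\cl(B_1') \subseteq B_1$); the paper's Definition~\ref{def:hemi} is stated for general topological spaces, where a direct ``complement is open'' argument plus a regularity/Hausdorff-type separation would be needed instead. But since every use of the lemma in the paper has $\mathcal{X},\mathcal{Y}\subseteq\mathbb{R}^d$, your metric assumption is harmless here.
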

\begin{proof}
    See Proposition~$1.4.8$ in \citet{aubin2009set}.
\end{proof}

\begin{theorem}[Berge's Maximum Theorem]\label{thm:Berge}
    Let $\mathcal{X}, \mathcal{Y} \subseteq \mathbb{R}^{d}$, assume that:
    \begin{enumerate}
        \item The function $v: \mathcal{X}\times \mathcal{Y} \to \mathbb{R}$ is a continuous function;
        \item The set-valued function $U: \mathcal{X}\to 2^{\mathcal{Y}}$ is hemi-continuous, 
            never empty, and assumes compact sets.
    \end{enumerate}
    Then, the parametrized optimization problem $v^{*}(x) \coloneqq \inf_{y \in U(x)} v(x,y)$ 
    is continuous  and the set-valued solution function 
    $U^{*}(x) = \{y \in U(x)  \mid v^{*}(x) = v(x,y)\}$ is UHC and compact-valued. 
\end{theorem}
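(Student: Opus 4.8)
The plan is to verify the four assertions in turn — $U^{*}$ nonempty-valued, $U^{*}$ compact-valued, $v^{*}$ continuous, and $U^{*}$ upper hemi-continuous — exploiting throughout that $\mathcal{X},\mathcal{Y}\subseteq\reals^{d}$ are metric, so I may argue with sequences. Nonemptiness of $U^{*}(x)$ is just the Weierstrass extreme value theorem: $v(x,\cdot)$ is continuous on the nonempty compact set $U(x)$ and hence attains its infimum. Compactness of $U^{*}(x)$ then follows because $U^{*}(x)=U(x)\cap\{y\in\mathcal{Y}\mid v(x,y)=v^{*}(x)\}$ is the intersection of the compact set $U(x)$ with a level set of the continuous function $v(x,\cdot)$, hence closed and bounded.

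Before treating $v^{*}$ and the hemi-continuity of $U^{*}$, I would isolate two consequences of the hypotheses on $U$ that get reused. (i) \emph{Local relative compactness}: fix $x_{0}$; since $U(x_{0})$ is compact it is bounded, so the open $1$-neighbourhood $B=\{y\mid d(y,U(x_{0}))<1\}\supseteq U(x_{0})$ has compact closure $\overline{B}$, and upper hemi-continuity of $U$ furnishes a neighbourhood $A_{0}$ of $x_{0}$ with $U(x)\subseteq B$ for all $x\in A_{0}$. (ii) \emph{Closed graph}: by Lemma~\ref{lem:closed_graph_property}, $\mathrm{Gr}(U)$ is closed. Together these give the \emph{extraction property} I will lean on: whenever $x_{n}\to x_{0}$ and $y_{n}\in U(x_{n})$, the tail of $(y_{n})$ lies in the compact set $\overline{B}$, so some subsequence $y_{n_{k}}\to y^{*}$, and closedness of $\mathrm{Gr}(U)$ forces $y^{*}\in U(x_{0})$.

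For continuity of $v^{*}$ I would prove upper and lower semicontinuity separately. \emph{Upper semicontinuity at $x_{0}$}: given $\varepsilon>0$, pick $y_{0}\in U(x_{0})$ with $v(x_{0},y_{0})=v^{*}(x_{0})$; by continuity of $v$ there are $r>0$ and a neighbourhood of $x_{0}$ on which $v(x,y)<v^{*}(x_{0})+\varepsilon$ whenever $y\in B(y_{0},r)$, and since $B(y_{0},r)$ meets $U(x_{0})$, lower hemi-continuity of $U$ yields a neighbourhood of $x_{0}$ on which $U(x)\cap B(y_{0},r)\neq\varnothing$; any such $y$ gives $v^{*}(x)\le v(x,y)<v^{*}(x_{0})+\varepsilon$. \emph{Lower semicontinuity at $x_{0}$}: if instead $x_{n}\to x_{0}$ with $v^{*}(x_{n})\to c<v^{*}(x_{0})$, choose $y_{n}\in U^{*}(x_{n})$ so $v(x_{n},y_{n})=v^{*}(x_{n})$, extract a subsequence $y_{n_{k}}\to y^{*}\in U(x_{0})$ by the extraction property, and use continuity of $v$ to get $v(x_{0},y^{*})=c$, contradicting $v^{*}(x_{0})\le v(x_{0},y^{*})$. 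Finally, \emph{$U^{*}$ is UHC}: if not, there are an open $B'\supseteq U^{*}(x_{0})$ and $x_{n}\to x_{0}$ with $y_{n}\in U^{*}(x_{n})\setminus B'$; extracting $y_{n_{k}}\to y^{*}\in U(x_{0})$ as above, $y^{*}\notin B'$ because $\mathcal{Y}\setminus B'$ is closed, yet continuity of $v$ \emph{and} of the just-established $v^{*}$ gives $v(x_{0},y^{*})=\lim_{k}v(x_{n_{k}},y_{n_{k}})=\lim_{k}v^{*}(x_{n_{k}})=v^{*}(x_{0})$, so $y^{*}\in U^{*}(x_{0})\subseteq B'$ — a contradiction. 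Compact-valuedness of $U^{*}$ was already shown, so all conclusions follow.

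The step I expect to be the crux is the extraction property of the second paragraph: upper hemi-continuity of $U$ gives no global bound on $\bigcup_{x}U(x)$, so the reduction to a fixed compact set must be localised around $x_{0}$, and the subsequential limit is returned to $U(x_{0})$ only via the closed-graph Lemma~\ref{lem:closed_graph_property} — which is where compact-valuedness of $U$ is really used. A minor but easy-to-muddle bookkeeping point is which half of hemi-continuity does what: lower hemi-continuity of $U$ drives upper semicontinuity of the value $v^{*}$ (a nearby feasible point inherits a near-optimal value), whereas upper hemi-continuity of $U$, through local relative compactness plus the closed graph, drives both the lower semicontinuity of $v^{*}$ and the upper hemi-continuity of $U^{*}$. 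Convexity plays no role anywhere.
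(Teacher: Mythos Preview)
The paper does not prove this classical result itself; it simply cites Chapter~6 of \citet{berge1997topological}. Your proposal, by contrast, supplies a complete self-contained sequential argument, and it is the standard metric-space proof of Berge's theorem: Weierstrass for nonemptiness and compactness of $U^{*}(x)$, lower hemi-continuity of $U$ for upper semicontinuity of $v^{*}$, and the ``extraction'' device (local boundedness from UHC plus closed graph from Lemma~\ref{lem:closed_graph_property}) for both lower semicontinuity of $v^{*}$ and upper hemi-continuity of $U^{*}$. One small caveat worth tightening: you assert that the $1$-neighbourhood $B$ of $U(x_{0})$ has compact closure, but if $\mathcal{Y}$ is not closed in $\reals^{d}$ the closure \emph{within} $\mathcal{Y}$ need not be compact, and the subsequential limit $y^{*}$ is then not a priori in $\mathcal{Y}$, so Lemma~\ref{lem:closed_graph_property} does not directly apply. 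The fix is immediate and in the same spirit: UHC with the open sets $B_{\varepsilon}=\{y\mid d(y,U(x_{0}))<\varepsilon\}$ gives $d(y_{n},U(x_{0}))\to 0$, so one may choose $z_{n}\in U(x_{0})$ with $\|y_{n}-z_{n}\|\to 0$, extract a limit of $(z_{n})$ inside the compact set $U(x_{0})$, and conclude $y_{n_{k}}\to y^{*}\in U(x_{0})$ without ever leaving $\mathcal{Y}$. With that adjustment your argument is correct and strictly more informative than the paper's citation.
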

\begin{proof}
    See Chapter $6$ in \citet{berge1997topological}.
\end{proof}

Now, Theorem~\ref{thm:general_sufficient} follows almost immediately from Theorem~\ref{thm:Berge}.
The only issue is that the set $U(x)$ needs to be compact to apply Theorem~\ref{thm:Berge}. 
However, we do not want to impose this. Luckily, there exists a relaxation of Berge's
Maximum Theorem, where we do not need compact-valued sets. This will 
require an additional property of the optimization problem, but this will be 
satisfied by the Euclidean norm.  
\begin{theorem}[Berge's Maximum Theorem for Non-Compact Image Sets]\label{thm:Berge_non_compact}
    Let $\mathcal{X}, \mathcal{Y} \subseteq \mathbb{R}^{d}$, assume that:
    \begin{enumerate}
        \item The function $v: \mathcal{X}\times \mathcal{Y}\to \mathbb{R}$ 
            is continuous and that for every compact 
            $K \subseteq \mathcal{X}$ the set 
            \begin{align*}
                D_v(\lambda; K) = \{(x, y) \in K \times \mathcal{Y}  
                \mid  y \in U(x), v(x,y) \le \lambda\} 
            \end{align*}
            is compact for all $\lambda \in \mathbb{R}$;
        \item The set-valued $U: \mathcal{X}\to 2^{\mathcal{Y}}$ is LHC and never empty.
    \end{enumerate}
    Then, the parametrized optimization problem 
    $v^{*}(x) = \inf_{y \in U(x)} v(x,y)$ is continuous and the solution set-valued 
    function $U^{*}(x) = \{y \in U(x)  \mid v^{*}(x) = v(x, y)\}$
    is UHC and compact-valued.
\end{theorem}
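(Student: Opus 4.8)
The plan is to avoid the compact-valuedness of $U$ that the classical Berge Theorem~\ref{thm:Berge} requires, using instead the inf-compactness of the sets $D_v(\lambda;K)$ as a substitute, and to argue throughout with convergent sequences in $\mathbb{R}^d$. The observation that keeps the parameter sets under control is that for any convergent sequence $x_n \to x_0$ the set $K := \{x_0\}\cup\{x_n : n \in \mathbb{N}\}$ is a compact subset of $\mathcal{X}$, so the hypothesis on $D_v$ applies to it. I would begin by fixing an arbitrary $x \in \mathcal{X}$ and taking $K = \{x\}$: then $D_v(\lambda;\{x\}) = \{x\}\times\{y \in U(x) \mid v(x,y)\le\lambda\}$ is compact for every $\lambda$. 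From this, $v^*(x) > -\infty$ (a sequence $y_n \in U(x)$ with $v(x,y_n)\to-\infty$ would, after extracting a limit point in the compact set $D_v(-1;\{x\})$, contradict continuity of $v$); the infimum is attained (a minimizing sequence is eventually contained in $D_v(v^*(x)+1;\{x\})$ and has a limit point there, which lies in $U(x)$ because $D_v(\cdot;\cdot)$ is closed); hence $U^*(x) \neq \varnothing$; and $\{x\}\times U^*(x) = D_v(v^*(x);\{x\})$ shows $U^*(x)$ is compact.

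Next I would establish continuity of $v^*$ by treating the two semicontinuities separately. Upper semicontinuity uses only lower hemi-continuity of $U$: for $x_n \to x_0$, take a minimizer $y_0 \in U^*(x_0)$; the sequential characterization of LHC (see \citet{aubin2009set}) produces $y_n \in U(x_n)$ with $y_n \to y_0$, so $v^*(x_n) \le v(x_n,y_n) \to v(x_0,y_0) = v^*(x_0)$ and thus $\limsup_n v^*(x_n) \le v^*(x_0)$. Lower semicontinuity is where the $D_v$-hypothesis enters: if $v^*$ fails to be lower semicontinuous at $x_0$, pick $x_n \to x_0$ with $v^*(x_n) \to L < v^*(x_0)$ (passing to a subsequence) and minimizers $y_n \in U^*(x_n)$; for large $n$ one has $v(x_n,y_n) = v^*(x_n) \le v^*(x_0)$, so with $K = \{x_0\}\cup\{x_n : n\in\mathbb{N}\}$ the pairs $(x_n,y_n)$ eventually lie in the compact set $D_v(v^*(x_0);K)$, and a limit point $(x_0,y^\star)$ of theirs satisfies $y^\star \in U(x_0)$ and $v(x_0,y^\star) = L$, contradicting $v^*(x_0) \le v(x_0,y^\star)$. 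Hence $v^*$ is continuous.

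For upper hemi-continuity of $U^*$, now that $v^*$ is known continuous, I would take $x_n \to x_0$ and $y_n \in U^*(x_n)$. Then $v(x_n,y_n) = v^*(x_n) \to v^*(x_0)$, so $v(x_n,y_n) \le v^*(x_0) + 1 =: \lambda$ eventually, and with $K = \{x_0\}\cup\{x_n:n\in\mathbb{N}\}$ the pairs $(x_n,y_n)$ are eventually inside the compact set $D_v(\lambda;K)$; consequently $(y_n)$ lies eventually in a compact set and each of its limit points $y^\star$ satisfies $y^\star \in U(x_0)$ and $v(x_0,y^\star) = \lim_k v^*(x_{n_k}) = v^*(x_0)$, i.e.\ $y^\star \in U^*(x_0)$. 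By the sequential characterization of upper hemi-continuity for compact-valued maps (see \citet{aubin2009set}) this gives UHC of $U^*$ at $x_0$, and $U^*(x_0)$ is compact by the first step, so $U^*$ is UHC and compact-valued.

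The main obstacle is precisely the lack of compact values for $U$: wherever the standard proof would pull a convergent subsequence straight out of $U(x_n)$, one must route the extraction through a set $D_v(\lambda;K)$, which forces one first to cap the objective at some level $\lambda$ and to confine the parameter to a compact set $K$ — handled uniformly by the fact that a convergent sequence together with its limit is compact. The remainder is bookkeeping: showing $v^*$ is real-valued and attained before speaking of its semicontinuity, and invoking the correct sequential descriptions of lower and upper hemi-continuity. An alternative would be to deduce the statement from Theorem~\ref{thm:Berge} by replacing $U$, on a compact neighbourhood of $x_0$, with the truncated map $x \mapsto \{y \in U(x) \mid v(x,y) \le v^*(x_0)+1\}$, but verifying that this truncation is still lower hemi-continuous is delicate, so I would prefer the direct argument above.
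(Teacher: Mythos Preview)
The paper does not actually supply a proof of this theorem: it is stated as a known relaxation of Berge's theorem and used as a black box in the proof of Theorem~\ref{thm:general_sufficient}, without proof or citation. So there is no ``paper's own proof'' to compare against.

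That said, your direct argument is correct and well structured. The key ideas --- that $\{x_0\}\cup\{x_n\}$ is compact so the inf-compactness hypothesis applies along any convergent sequence, that upper semicontinuity of $v^*$ comes from LHC of $U$ alone, and that lower semicontinuity and UHC of $U^*$ come from trapping minimizers in a suitable $D_v(\lambda;K)$ --- are exactly the right substitutes for the compact-valuedness that the classical Berge theorem needs. One small point of presentation: in the lower-semicontinuity step you write ``for large $n$ one has $v(x_n,y_n)\le v^*(x_0)$'', which is fine because $v^*(x_n)\to L<v^*(x_0)$; it may be cleaner to use the level $\lambda=\tfrac{1}{2}(L+v^*(x_0))$ so the strict inequality is visibly available. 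Your closing remark about the alternative route via truncation of $U$ is apt: checking that the truncated correspondence stays LHC is indeed the subtle part, and your direct sequential proof avoids it entirely.
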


At this point, we have all the tools required to prove Theorem~\ref{thm:general_sufficient}. 
Let us repeat the statement. 

\SuffGen*

\begin{proof}
    We will split the proof into two parts. First, we will consider the case where
    the projection onto the sets $U(x)$ is actually unique for all $x \in \mathcal{X}$.
    Afterwards, we will 
    discuss the case where the projection is not unique for every point. 

    We want to apply Theorem~\ref{thm:Berge_non_compact}, where $v(x,y)$ is given 
    by $v(x,y) = \|y - x\|$ and $\mathcal{X}=\mathcal{Y}\subseteq \mathbb{R}^{d}$.
    The set-valued $U$ is given by all feasible points that achieve sufficient 
    utility,
    \begin{align*}
        U(x) = \{y \in \mathcal{X} \mid u_f(x,y) \ge \tau\} \cap C(x) 
    .\end{align*}
    The parametrized optimization problem will be given by 
    $v^{*}(x) = \inf_{y \in U(x)}\|y - x\|$. By assumption, this infimum is attained, because
    $U(x)$ is closed, and it is unique. 
    It rests to check that the sets 
    $D_v(\lambda;K)$ are compact for all compact $K$ and $\lambda \in \mathbb{R}$.

    Let us decompose $D_v(\lambda;K)$ by setting
    \begin{align*}
        D_v(\lambda;K) 
        &= \{(x, y) \in K \times \mathcal{Y}  \mid 
            y \in U(x), \|x -y \| \le \lambda
        \} \\
        &= \{(x, y) \in K \times \mathcal{Y}  \mid 
            y \in U(x)
        \} \cap \{
            (x, y) \in K \times \mathcal{X} \mid  \|x -y \| \le \lambda
        \}
    .\end{align*}
    The first of these sets can be further decomposed by intersecting
    $K \times \mathcal{Y}$ and $\text{Gr}(U)$, for $\text{Gr}(U)$ as
    defined in Lemma~\ref{lem:closed_graph_property}. The set $K\times \mathcal{Y}$ is closed, because it is the product
    of two closed sets, and the set $\text{Gr}(U)$ is closed by 
    Lemma~\ref{lem:closed_graph_property}, 
    so their intersection must be closed as well.
    Similarly, it can be seen that the set $\{(x, y) \in K \times
    \mathcal{Y}  \mid \|x -y\|\le \lambda\} $ is closed, by writing it as the 
    intersection between $K \times \mathcal{Y}$ and $\{(x,y) \in \mathcal{X}
    \times \mathcal{Y} \mid \|x-y\|\le \lambda\}$. The latter set is seen to be closed
    as it is the inverse image of closed set under a continuous function.
    Furthermore, the set $\{(x, y) \in K \times
    \mathcal{Y}  \mid \|x -y\|\le \lambda\} $ 
    is a bounded set as it can be seen as the set $K$
    with a strip around it of size $\lambda$. It follows that
    $D_v(\lambda;K)$ closed and bounded, hence compact. As $\lambda$ and
    $K$ were arbitrary we see that $v(x,y) = \|x -y\|$ has the desired
    property. 
    
    As noted in the Theorem statement, the attribution function will be given by
    \begin{align*}
        \varphi_f(x) = \argmin_{y \in U(x)}\|x - y\| - x = P_{U(x)}(x) - x
    .\end{align*}
    We can now apply Theorem~\ref{thm:Berge_non_compact} and see that the solution
    $P_{U(x)}(x)$ is UHC and compact-valued. Furthermore, the projection does exist and is 
    unique.
    Invoking 
    Lemma~\ref{lem:single_set} then tells us that $P_{U(x)}(x)$ is a continuous function. 
    We see that $\varphi_f(x)$ is continuous and recourse sensitive by design. 

    Now, we will drop the assumption that every point $x \in \mathcal{X}$ has a 
    unique projection onto $U(x)$. Consider the subset $X \subseteq \mathcal{X}$ 
    for which each point does have projection onto $U(x)$. Now, we
    can repeat the proof shown above using Berge's Maximum theorem with the sets
    $\mathcal{X}= X$ and $\mathcal{Y}=\mathcal{X}$, as the sets $U(x)$ will not be 
    subsets of $X$ in general. This will result in a continuous projection from $X$
    onto the sets $U(x)$ for all points in $X$ and the function
    \begin{align*}
        \varphi_f(x)\coloneqq\argmin_{y \in U(x)} \|x -y \|= P_{U(x)} -x
    \end{align*}
    will be well defined, continuous and a recourse sensitive attribution function 
    on the restricted set $X$.
\end{proof}

\section{Proofs of Section~\ref{sec:suff_and_nec_conditions_in_one_dimension}}
\oneDimRes*

\begin{proof}
    \emph{If}: Assume that $\widetilde{L} \subseteq L, \widetilde{R} \subseteq R$ and 
    $\widetilde{O}\subseteq O$ 
	exist with properties $(1) - (3)$. We will construct $\varphi_f$ 
	explicitly. To this end, we define the distance to a set $A$ as 
	\begin{align}\label{eq:set_distance}
        d(x, A) = \inf_{y \in A} |x - y|
	.\end{align}
	It is known that  $d(x, A) $ is continuous for any set, see for example Chapter~$2.5$ 
    in \citet{mendelson1990introduction}. 
	By separatedness of $\widetilde{L}$ and $\widetilde{R}$ we can find open neighborhoods
	$U_1, V_1 \subseteq \mathbb{R}$ of $\widetilde{L} $ and $\widetilde{R}$ respectively, 
	such that $U_1 \cap V_1 = \varnothing$. Furthermore, by property $(3)$ we can also find other
    open neighbourhoods of $U_2, V_2 \subseteq \mathbb{R}$ of $\tL$ and  $\tR$
    such that $\widetilde{O}\cap U_2 =\varnothing$ and 
    $\widetilde{O} \cap V_2 = \varnothing$. The sets $U= U_1 \cap U_2$ and  $V = V_1 \cap V_2$
    are still open neighbourhoods of $\tL$ and  $\tR$ and they 
    are disjoint from each other and  $\widetilde{O}$. Now define
	\begin{align*}
        \varphi_f^{-}(x) &= \frac{d(x, \mathbb{R}  \setminus U)}
            {1 + d(x, \mathbb{R} \setminus U)}, \\
        \varphi_f^{+}(x) &= \frac{d(x, \mathbb{R}  \setminus V)}
            {1 + d(x, \mathbb{R} \setminus V)}
	.\end{align*}
	Using these functions, we can construct $\varphi$ by setting
	\begin{align*}
        \varphi_f(x) = \varphi_f^{+}(x) - \varphi_f^{-}(x)
	.\end{align*}
	We have to check that $\varphi_f$ is negative on $L \setminus (R \cup O)$, positive on 
	$R \setminus (L \cup O)$, $0$ on 
    $O  \setminus (R \cup L)$, and non-zero on $(L \cup R) \setminus O$, 
    and . First, we will show that $\varphi_f$ 
	is negative on $\widetilde{L}$, positive on $\widetilde{R}$, and $0$ on $\tO$ actually. 
	Let $x \in \widetilde{L}$, then it 
	is not part of $\mathbb{R} \setminus U$. Furthermore, the set 
	$\mathbb{R} \setminus U$ is closed and for closed sets $A$ the set distance function 
	has the property that $d(x, A) = 0$ if and only if $x \in A$. This shows that
	$d(x, \mathbb{R} \setminus U) > 0$. By separatedness of $\widetilde{L}$
	and $\widetilde{R}$ we also know that  $x$ cannot be an element of $V$, because $U$ 
	and $V$ are necessarily disjoint. It follows that $d(x, \mathbb{R} \setminus V) = 0$. 
	From this we conclude that $\varphi_f(x) < 0$ for 
	$x \in \widetilde{L}$. Analogously, it can be shown that $\varphi_f(x) > 0$ for 
	$x \in \widetilde{R}$. If $x \in \tO$, then $x \not\in U \cup V$, which implies that
    $d(x, \mathbb{R}\setminus U) = d(x, \mathbb{R}\setminus V) = 0$. Precisely what is needed
	
	Remark that $L  \setminus ( R \cup O) \subseteq \widetilde{L}$ and 
	$R \setminus (L \cup O)\subseteq \widetilde{R}$, 
	which tells us that $\varphi_f$ is negative on $L \setminus (R \cup O)$
    and positive on $R \setminus (L \cup O)$.
    Next, if $x \in L\cup R  \setminus O$ it must be in either $\tL$ or $\tR$, as 
    $L\cup R \cup O =\tL \cup \tR \cup \tO$ and $x$ cannot be in $\tO$
	From this we see that  $\varphi_f$ is non-zero on $L \cup R$. We conclude that 
	this constructed $\varphi_f$ is  a continuous recourse sensitive
    attribution function. 

    \emph{Only if}: We assume that we have a continuous recourse sensitive 
	attribution function $\varphi_f$ for $f$. Using this $\varphi_f$ we can 
    construct the required decomposition explicitly. 
    Define
	\begin{align*}
        \widetilde{L} &= \{x \in \mathcal{X}  \mid \varphi(x) < 0, x \in L\}, \\
        \widetilde{R} &= \{x \in \mathcal{X}  \mid \varphi(x) > 0, x \in R\}, \\
        \widetilde{O} &= \{x \in \mathcal{X}  \mid \varphi(x) = 0, x \in O \} 
	.\end{align*}
	We see that $\widetilde{L} \cup \widetilde{R}\cup \tO 
    = \{\varphi_f \in \mathbb{R}\}  \cap (L \cup R \cup O) = L\cup R  \cup O$. We know 
    that $\varphi_f$ is continuous. This means that $\varphi_f(x) \le 0$ on the closure
    of $\tL$. It follows that $\cl(\tL) \cap \tR =\varnothing$, as $\varphi_f(x)$ 
    is strictly positive on $\tR$. Analoguesly , it can be argued that 
    $\tL \cap \cl(\tR) = \varnothing$. Finally, $\varphi_f(x) = 0$ for all $x \in \tO$. So, 
    again $\varphi(x) = 0$ on the closure of $\tO$. This guarantees that
    $\cl(\tO) \cap \tL = \cl(\tO) \cap \tR = \varnothing$, which verifies property $3$
\end{proof}

\SuffSets*

\begin{proof}
	\emph{If:} Suppose there exists a partition $\{\tK_1,\tK_2\}$ such
	that $\tL$ and $\tR$ are separated. Then existence of $\phi_f$ follows
	from Theorem~\ref{thm:2a}: it is immediate that $\tL \subseteq L$ and
	$\tR \subseteq R$; and $\tL \union \tR = L \union R$ can be verified
	as follows: for any interval $L_i$ that is contained both in
	$\Lintervals$ and in $\Rintervals$, we have $i \in \tK$, so the
	interval is contained either in $\tL$ or in $\tR$. Any interval $L_i
	\in \Lintervals$ that is not in $\Rintervals$ is either contained in
	$\tL$ or there exists $R_j \in \Rintervals$ such that $L_i \subset
	R_j$. In the latter case $R_j$ must be contained in $\tR$, because
	there cannot exist any $i^* \in \I$ such that $R_j \subseteq L_{i^*}$.
	If there were such an $i^*$, then we would have $L_i \subset R_j
	\subseteq L_{i^*}$, which would contradict the fact that all intervals
	in $\Lintervals$ are separated. By an analogous argument, any interval
	$R_j \in \Rintervals$ that is not in $\Lintervals$ is either contained
	in $\tR$ or there exists $L_i \in \Lintervals$ that is contained in
	$\tL$.
	
	\emph{Only if:} Suppose that $\tL$ and $\tR$ satisfy the conditions of
	Theorem~\ref{thm:2a}. Then we will show that they must be of the form
	\eqref{eqn:2b} for some partition $\{\tK_1,\tK_2\}$. To this end, we
	first observe that each interval $L_i \in \Lintervals$ must either be
	fully included in $\tL$ or not included at all. Otherwise, the fact
	that $\tL \union \tR = L \union R$ would imply that part of the
	interval was included in $\tL$ and the other part in $\tR$, but then
	$\tL$ and $\tR$ would not be separated. Similarly, each interval $R_j
	\in \Rintervals$ must either be fully included in $\tR$ or not
	included at all.
	
	We can further restrict the intervals $L_i \in \Lintervals$ that can
	possibly be included in $\tL$: if there exists some $R_j \in
	\Rintervals$ such that $L_i \subset R_j$, then $R_j \setminus L \neq
	\emptyset$ (otherwise $L_i$ would not be a maximal interval), so $R_j$
	must be included in $\tR$ to ensure that $\tL \union \tR = L \union
	R$. But then $L_i$ cannot be included in $\tL$, because otherwise
	$\tL$ and $\tR$ would not be separated. Similarly, no $R_j \in
	\Rintervals$ for which there exists some $L_i \in \Lintervals$ such
	that $R_j \subset L_i$, can be included in $\tR$. This restricts
	attention to the intervals indexed by $\tI$, $\tJ$ and~$\tK$.
	
	We proceed to show that all intervals indexed by $\tI$ and $\tJ$ must
	be included in $\tL$ and $\tR$, respectively. By symmetry, it is
	sufficient to show this for intervals $L_i$ with $i \in \tI$. For
	these, we have that $L_i \setminus R \neq \emptyset$ (otherwise $R$
	would contain an interval containing $L_i$), so that $L_i$ must be
	included in $\tL$ because $\tL \union \tR = L \union R$.
	
	Finally, each interval indexed by $\tK$ must be included either in
	$\tL$ or in $\tR$, but not in both, if we are to end up with separated
	sets $\tL$ and $\tR$ that satisfy $\tL \union \tR = L \union R$.
	Consequently, the intervals indexed by $\tK$ should be partitioned
	among $\tL$ and $\tR$, as specified by the theorem.
\end{proof}

\HigherDim*

\begin{proof}
    Before we start proving both implications, we make the following observation.
    That is, the attribution $\varphi_f$ is only allowed to be non-zero in 
    the $i$'th component on the sets
    $\tL^{i}$ and $\tR^{i}$. Indeed, recourse sensitivity of $\varphi_f$ tells us that 
	$\varphi_f(x) = \gamma(y - x)$ for some $\gamma> 0$ and $\|x -y \|\le \delta$, but 
	most importantly $y$ has to be of the form $y = x \pm \alpha e_i$ by the 
	constraining set $C(x)$. The attribution is seen to be
	$\varphi_f(x) =\pm\gamma \alpha e_i$ and $\varphi_f(x)$ is only allowed to be non-zero
	in the $i$ 'th component. By continuity of $\varphi_f$ the above argument also 
    extends to the closures of $\tL^{i}$ and $\tR^{i}$.

    \emph{If}: Just as in the one-dimensional case, we are able to construct 
	a recourse sensitive function explicitly, using the set distance function $d(x, A)$ 
	defined in \eqref{eq:set_distance}. For each $\tL^{i}$ and $\tR^{i}$ find an open neighborhood
	$\tL^{i} \subseteq U^{i}$ and $\tR^{i}\subset V^{i}$ that is disjoint from all the other 
	neighborhoods and $\tO$. This  is possible because of the pairwise separatedness. 
    To see this, take one
	$\tL^{i}$ and enumerate all the other $\tL^{j}$ and $\tR^{j}$ from $k=1$ to $k=2d-1$ 
	and denote them by $\widetilde{W}_k$.
	By the pairwise separatedness we can find open neighborhoods 
	$U_k^{i}$ for $\tL^{i}$ and $V_k$ for $\widetilde{W}_k$ that are disjoint. We can also find 
    an open neighbourhood $U_{2d}^{i}$ of $\tL^i$ that is disjoint of $\tO$, Then, 
	take $U^{i}= \cap_{k=1}^{2d} U^{i}_k$. This is still an open set, as it is the finite 
	intersection of open sets, and $\tL^{i} \subseteq U^{i}$, because $\tL^{i}$ 
    is a subset of each of the 
	$U^{i}_k$. The set $U^{i}$ is also smaller than any of its components in the intersection, 
	meaning that $U^{i}$ is disjoint of all the other open neighborhoods. Repeat this
	procedure for every $\tL^{i}$ and $\tR^{i}$ to get our required op neighborhoods.
	
	We are now ready to define $\varphi_f$. For each component set
	\begin{align*}
        \varphi_f^{-}(x)_i &= \frac{d(x, \mathbb{R}^{d}\setminus U^{i})}
            {1 + d(x, \mathbb{R}^{d}\setminus U^{i})}, \\
        \varphi_f^{+}(x)_i &= \frac{d(x, \mathbb{R}^{d}\setminus V^{i})}
            {1 + d(x, \mathbb{R}^{d}\setminus V^{i})},\\
        \varphi_f(x)_i &= \varphi_f^{+}(x)_i - \varphi_f^{-}(x)_i
	.\end{align*}
	The attribution $\varphi_f$ now becomes
    \begingroup
        \renewcommand*{\arraystretch}{1.5}
        \begin{align*}
            \varphi_f(x) =
                \begin{bmatrix} 
                    \varphi_f(x)_1 \\
                    \varphi_f(x)_2 \\
                    \vdots \\
                    \varphi_f(x)_d
                \end{bmatrix} 
        .\end{align*}
    \endgroup
	All the components of $\varphi_f$ consist of continuous functions. So, $\varphi_f$ is itself 
	continuous. Next, note that if $x \in \tL^{i}$ or $x \in \tR^{i}$ for some $i$, it 
	is also contained in $U^{i}$ or $V^{i}$ respectively. Because all $U^{i}$
	and $V^{i}$ are mutually disjoint, we see that only $d(x, \mathbb{R}^{d} \setminus U^{i})$
	or $d(x, \mathbb{R}^{d} \setminus V^{i})$ is non-zero. This ensure that only the $i$ 'th
	component is non-zero, which is required be the remark at the start of this proof.
    Finally, if $x \in \tL^{i}$, then $ x \not\in \tR^{i}$ 
	and $\varphi_f(x)_i <0$,  because $x \in U^{i}$ and $\mathbb{R}^{d} \setminus U^{i}$ is closed. 
	Alternatively, if $x \in \tR^{i}$, then $x \not\in  \tL^{i} $ and $\varphi_f(x)_i >0$, as is
	required. 
	
	For notational sake denote $L = \bigcup_{j=1}^{d}L^{j}$and $ R =
        \bigcup_{j=1}^{d}R^{j}$.
    To conclude, we note that 
	\begin{align*}
        L^{i} \setminus \left( \bigcup_{\substack{j=1 \\j\neq i}}^{d}L^{j}
        \cup R  \cup O\right) &\subseteq \tL^{i}, \\
        R^{i} \setminus \left( L \cup
        \bigcup_{\substack{j=1 \\ j\neq i}} ^{d}R^{j} \cup O \right) &\subseteq \tR^{i}
	.\end{align*}
    and
    \begin{align*}
        O \setminus \left(  L \cup R\right)  \subseteq \tO
    .\end{align*}
	Combining this with the argument
	above, we see that $\varphi_f$ points in the correct directions on these sets. 
	Furthermore, $\varphi_f$ is also never zero on  
    $(L \cup R)  \setminus O$. By a similar reason as in 
    the one dimensional case we see that $x \in (L\cup R)  \setminus O$, implies that
    $x \in \tL^{i}$ or $\tR^{i}$ for some $i=1, \ldots, d$. This implies $\varphi_f(x) \neq 0$. 
    Finally, if $x \in \tO$, then $ x \not\in  U^{i} \cup V^{i} $ for all $i$. This immediately
    gives that $\varphi_f(x)  = 0 $, which shows that $\varphi_f$ is $0$ on 
    $O \setminus (L \cup R)$. 
    All together, we conclude that $\varphi_f$ is 
	a continuous recourse sensitive attribution function for $f$. 
	
    \emph{Only if}: Assuming that  $\varphi_f$ is a recourse sensitive and continuous
	attribution function for $f$, define for all $i=1, \ldots, d$ the sets
	\begin{align*}
        \widetilde{L}^{i} &= \{ x \in \mathcal{X}  \mid \varphi_f^i(x) < 0,
            x \in L^{i}\}, \\
        \widetilde{R}^{i} &= \{ x \in \mathcal{X}  \mid \varphi_f^i(x) > 0,
            x \in R^{i}\}, \\
            \tO &= \{ x \in \mathcal{X}  \mid  \varphi_f(x) = 0, x \in O\} 
	.\end{align*}
    These sets form the required partition, because
	\begin{align*}
        \tO \cup \bigcup_{i=1}^{d} \tL^{i} \cup \tR^{i}  
        = \bigcup_{i=1} ^{d} \{\varphi_f^i \in  \mathbb{R}\} \cap (L^{i} \cup R^{i} \cup O) 
        = O \cup \bigcup_{i=1}^{d}L^{i} \cup R^{i}
	,\end{align*}
    We can now verify properties $(2)$ and $(3)$ by using the continuity of 
    $\varphi_f$. Note that $\varphi_f^{i}(x) < 0$, implies that only the $i $'th component
    can be non-zero and that $x \in \tL^{i}$, by the remark at the start of the proof. 
    By continuity of $\varphi_f$  it follows that $\varphi^{i}_f(x) \le 0$ and 
    $\varphi^{j}_f(x) =0$ for all 
    $x \in \cl(\tL^{i})$. On all the other $\tL^{j}$ or $\tR^{j}$ it must be that
     $\varphi^{j}_f(x)$ is strictly non-zero, or positive for $\varphi^{i}_f(x) $ and 
     $\tR^{i}$. We see that $\cl(\tL^{i})$ is disjoint from all other $\tL^{j}$ or $\tR^{j}$. 
     This argument holds for all $i$ and we can proof it analogously for $\tR^i$. This verifies
     property $(2)$.

     Finally, $\varphi_f(x) = 0$ for all $x \in \tO$. So, again $\varphi_f(x) = 0$ on $\cl(\tO)$. 
     The function $\varphi_f$ will be non-zero on each of the sets $\tL^{i}$ and $\tR^{i}$. 
     Thus, $\cl(\tO) \cap \tL^{i}= \cl(\tO) \cap \tR^{i} = \varnothing$ for all $i =1,\ldots, d$. 
     This verifies property $(3)$. 
\end{proof}

\section{Additional Details for Section~\ref{sec:discussion}}\label{sec:additional_solutions}

In Section~\ref{sec:discussion}, it is mentioned that recourse can be provided when the model
is very simply, for example when using a linear classifier. This is also noted by 
\citet{ustun2019actionable}. In this section we will expand on this statement. 
We will also give an example of a classifier $f$ that is non-linear, but does allow a
linear representation $f(x) = \beta^{\top}g(x)$ using higher order or more abstract
features. In this example, the features $g(x)$ 
are still interpretable and providing a continuous recourse sensitive attribution function 
in terms of the features $g(x)$ is possible. 

\subsection{Linear Classifiers Admit Recourse}\label{app:linearmodels}
Consider the binary classification task using $f(x) = \beta^{\top} x$ for some vector $\beta$. 
Recall that the utility function is given by $u_f( x, y) = f(y) \ge 0$. A point is classified 
as the negative class if $f(x) < 0$ and as the preferred class if $f(x)
\ge 0$. In light of 
Theorem~\ref{thm:classification_sufficient} we see that $U$ is given by
\begin{align*}
    U = \{ x \in \mathbb{R}^{d}  \mid \beta^{\top}x \ge 0\} 
,\end{align*}
which is a convex and closed set. Using Theorem~\ref{thm:classification_sufficient} we
conclude that a recourse sensitive and robust attribution function exists.

\subsection{Attribution for Abstract Features}\label{app:higher_order_attribution}
Consider the non-linear classifier $f(x) = \|x\|^2 - 1$, which classifies if a point
is inside the circle or outside the circle. To show that there are no continuous and 
recourse sensitive functions for this classifier we consider the following two cases:
\begin{enumerate}
    \item $\delta > 0$ and 
        $C(x) = \{y \in \mathbb{R}^2  \mid \| x -y \|_0 \le 1\}$;
    \item $1\le \delta < 2$ and $C(x) =\mathbb{R}^2$.
\end{enumerate}

First, we will show the 
single feature case, because it follows from Theorem~\ref{thm:higher_dim_2}. The second
case requires special arguments and will follow afterwards.

\subsubsection{$\delta>0$ and 
$C(x) = \{y \in \mathbb{R}^2  \mid \|x - y\|_0 \le 1\}$}

To apply Theorem~\ref{thm:higher_dim_2} we first find all $4$ sets $L^{1}, R^{1}, L^{2}$ 
and $R^{2}$. If we know $L^{1}$, then we can find all the other sets as well by the symmetry of $f$. 
The set $L^{1}$ consists of all points such that you cross the decision boundary when you 
subtract $[\delta, 0]^{\top}$ from the input point. This is the strip to right of
the circle with width $\delta$ and
all the points within the circle that also do not lie in the translated circle 
$D_1(0)  + [\delta , 0]^{\top}$, where 
$D_1(0) = \{y \in \mathbb{R}^{2}  \mid \|y\| < \delta\}$.
In set notation 
\begin{align*}
    L^{1} &=
    \left\{  
    \begin{bmatrix}     
        \cos(\theta) \\
        \sin(\theta)
    \end{bmatrix}  + 
    \begin{bmatrix} 
        \alpha \\
        0
    \end{bmatrix}  \mid \theta \in \left( -\frac{\pi}{2}, \frac{\pi}{2} \right), 
    \alpha \in (0, \delta) 
    \right\} 
    \cup
    \left( 
    D_1(0)  \setminus \left( D_1(0) + 
    \begin{bmatrix} 
        \delta \\ 
        0
    \end{bmatrix}   
        \right)
    \right) \\
    &=: L^{1}_{\text{out}} \cup L^{1}_{\text{in}}
.\end{align*}
Note that $L^{1}_{\text{out}}$ and $L^{1}_{\text{in}}$ are two disjoint connected components. 
The set $R^{1}$ can be given in a similar form, with the $\alpha$ replaced by  $-\alpha$ and
the vector $[\delta, 0]^{\top}$ with $[-\delta, 0]^{\top}$. The sets $L^{2}$ and $R^{2}$ can be obtained
by rotating the sets $L^{1}$ and $R^{1}$ with $\frac{\pi}{2}$. 

Take any $\alpha \in \left( 0, \delta \right) $ and consider the point $x=[1 + \alpha, 0]^{\top}$. This point
is only an element of $L^{1}$ and not of any of the other sets. As $x$ is contained 
in $L^{1}_{\text{out}}$ and $L^{1}_{\text{out}}$ is connected, we know that it must be that
$L^{1}_{\text{out}} \subseteq \tL^{1}$ for any decomposition. Similarly, we see that 
$L^{2}_{\text{out}} \subseteq \tL^{2}$. However, $L^{1}_{\text{out}}$ and $L^{2}_{\text{out}}$ are
not disjoint, because $\sqrt{\alpha}[\nicefrac{1}{\sqrt{2}}, \nicefrac{1}{\sqrt{2}}]$ is an element 
of both sets for $\alpha \in (1, \min\left( 2,   2\delta  \right)  )$. It follows that
$\tL^{1}$ and $\tL^{2}$ cannot be separated and Theorem~\ref{thm:higher_dim_2} tells us that 
no continuous single feature attribution function can exist. 

\subsubsection{$1\le \delta < 2$ and $C(x) =\mathbb{R}^2$} Note that in all cases that 
follow an attribution can be given 
for the region outside of the circle by 
\begin{align*}
    \varphi_f(x) = 
    \begin{bmatrix} 
        -x_1 f(x) \\ 
        -x_2 f(x)
    \end{bmatrix} 
.\end{align*}
So, we only have to focus on the region inside the circle. If $0 < \delta < 1 $, then 
we cannot cross the decision boundary for any $x \in D_{1- \delta}(0)$. In that region
any value for the attribution is allowed and extending the above $\varphi_f$ to the
whole plane gives us a valid continuous recourse sensitive attribution function. 

When $\delta > 2$, we can cross the decision boundary for any $x\in D_1(0)$ by moving 
in any direction with length $\delta$. So, inside the circle we could set 
$\varphi_f(x)$ to any direction. A full continuous recourse sensitive attribution 
function would be given by
\begin{align*}
    \varphi_f(x) = 
    \begin{cases}
        \begin{bmatrix} 
            -x_1 f(x)\\
            -x_2 f(x)
        \end{bmatrix}  & \|x\|\ge 1 \\
        \begin{bmatrix} 
            f(x) \\
            0
        \end{bmatrix} & \|x\| < 1
    \end{cases}
.\end{align*}
Now, we can discuss the case when $1\le \delta < 2$. Again, the attribution 
outside of the circle does not pose a problem. Inside the circle we can identify 
two regions. Within $D_{\delta -1 }(0)$ we can move in any direction of length 
$\delta$ to cross the decision boundary. Indeed, take an $x \in D_{\delta -1}(0)$
and note that the worst direction to cross the decision boundary is $-x$. We can scale
this vector with $\frac{\delta}{\|x\|}$ to get a vector of length $\delta$. 
Using $\|x\| <  \delta -1$, we see that moving in that direction crosses the decision boundary, as
\begin{align*}
    \| x - \frac{\delta}{\|x\|}x \|
        = \left|1 - \frac{\delta}{\|x\|}\right| \|x\| 
        = | \delta - \|x\|| > 1
.\end{align*}
In the strip with $\delta -1\le \|x\| < 1 $, the set of feasible direction is more complicated. 
However, the important observation is that $-x$ is not contained in it. So, for any 
attribution $\varphi_f$ it cannot be that $\varphi_f(x) = - \alpha x$ for any $\alpha >0$. 
To conclude that $\varphi_f$ has a zero we will use the following Lemma,
which can be seen as a generalization of the intermediate value theorem. 

\begin{lemma}[Poincar\'e-Bohl]
    \label{lem:poincare-bohl}
    Assume that $U$ is an open bounded neighborhood of $\mathbb{R}^{d}$, with $0 \in U$, 
    and that $f \colon \cl(U) \to \mathbb{R}^{d}$ is a continuous function such that
    \begin{align*}
        f(x) \not\in  \{\alpha x \colon \alpha > 0\}, &
            \text{ for every } x \in \cl(U) \setminus U
    .\end{align*}
    Then, there is an $x_0 \in \cl(U)$ such that $f(x_0) =0$.
\end{lemma}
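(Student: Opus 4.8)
The plan is to reduce the statement to the Brouwer fixed point theorem (or equivalently to the basic homotopy invariance of the Brouwer degree), since the non-collinearity hypothesis is exactly the condition that makes $f$ homotopic to the identity on the boundary without introducing zeros. First I would argue by contradiction: suppose $f(x) \neq 0$ for all $x \in \cl(U)$. Then $f$ never vanishes on the compact set $\cl(U)$, so $\|f(x)\|$ attains a positive minimum, and the normalized map $x \mapsto f(x)/\|f(x)\|$ is a well-defined continuous map $\cl(U) \to S^{d-1}$. The goal is to derive a contradiction from the boundary behaviour.

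The key step is to build an explicit homotopy on the boundary $\partial U = \cl(U)\setminus U$ between $f$ and the identity. Define $H(x,t) = (1-t)f(x) + t x$ for $x \in \partial U$ and $t \in [0,1]$. I claim $H(x,t) \neq 0$ for all such $x,t$: at $t=0$ this is the assumption $f(x)\neq 0$ (true since $0 \in U$ so $0 \notin \partial U$, and we are assuming $f$ is nonvanishing); at $t=1$ it is $x \neq 0$, which holds because $0 \in U$; and for $t \in (0,1)$, if $(1-t)f(x) + tx = 0$ then $f(x) = -\frac{t}{1-t}x$, which puts $f(x)$ on the ray $\{\alpha x : \alpha < 0\}$ — but wait, we need to rule out $f(x) \in \{\alpha x : \alpha > 0\}$, not $\alpha<0$. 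So the honest homotopy to use is $H(x,t) = (1-t)f(x) - tx$, which at $t=1$ gives $-x \neq 0$, and for which $H(x,t)=0$ forces $f(x) = \frac{t}{1-t}x \in \{\alpha x : \alpha > 0\}$, contradicting the hypothesis. Thus $H$ is a zero-free homotopy on $\partial U$ between $f|_{\partial U}$ and the map $x \mapsto -x$.

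From here the plan is to invoke degree theory: a continuous nonvanishing map $g \colon \cl(U) \to \mathbb{R}^d$ has $\deg(g, U, 0)$ equal to $\deg(g|_{\partial U})$ up to the standard normalization, and this degree is a homotopy invariant of the boundary restriction among zero-free maps; moreover a map that extends continuously and zero-free over all of $\cl(U)$ must have degree $0$. But $x \mapsto -x$ has degree $(-1)^d \neq 0$, and by the homotopy $H$ it has the same degree as $f|_{\partial U}$, which is $0$ because $f$ extends zero-free over $\cl(U)$ — contradiction. Equivalently, without naming "degree": the zero-free extension $f/\|f\|\colon \cl(U) \to S^{d-1}$ would exhibit $f|_{\partial U}/\|f|_{\partial U}\|$ as nullhomotopic in $S^{d-1}$, while the homotopy $H$ shows it is homotopic to the antipodal-type map $x/\|x\| \mapsto -x/\|x\|$, which is not nullhomotopic (it has nonzero degree / is essential). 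That contradiction forces the existence of $x_0 \in \cl(U)$ with $f(x_0) = 0$.

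The main obstacle is purely expository: deciding how much degree-theory machinery to assume versus derive. The cleanest route that avoids invoking algebraic topology is to phrase everything through Brouwer's theorem directly — e.g.\ reduce to a ball by noting $U$ is homeomorphic to an open ball (or just prove the ball case, which suffices for the application since one can rescale), then observe that if $f$ had no zero, the map $x \mapsto -f(x)/\|f(x)\|$ composed with the radial retraction would contradict non-retractibility of the ball onto its boundary, using the collinearity hypothesis precisely to check that this retraction fixes $\partial U$ pointwise up to the right sign. I expect the write-up to cite a standard reference (e.g.\ \citet{mendelson1990introduction} or a topology text) for the homotopy/degree fact rather than reprove it, keeping the argument to the contradiction setup and the verification that $H$ is zero-free on the boundary.
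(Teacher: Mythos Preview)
Your degree-theory argument is correct: the homotopy $H(x,t)=(1-t)f(x)-tx$ is zero-free on $\partial U$ exactly by the collinearity hypothesis, and homotopy invariance of the Brouwer degree then gives $\deg(f,U,0)=\deg(-\mathrm{id},U,0)=(-1)^d\neq 0$, contradicting the assumption that $f$ is nonvanishing on $\cl(U)$. This is the standard proof of Poincar\'e--Bohl.

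By comparison, the paper does not give a proof at all: it simply cites Theorem~2 of \citet{fonda2016generalizing}. So there is nothing to compare at the level of technique; you have supplied precisely the argument that the cited reference would contain. One small caution about your final paragraph: the claim that an arbitrary bounded open $U\ni 0$ is homeomorphic to an open ball is false (take $U$ disconnected, or an annulus-like region containing $0$), so that alternative reduction does not work in general. Your parenthetical ``or just prove the ball case, which suffices for the application'' is the right fallback --- in the paper the lemma is applied only with $U=D_1(0)$ --- but your main degree-theoretic route already handles general $U$ without any such reduction, so I would simply drop the homeomorphism remark.
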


\begin{proof}
    See Theorem~$2$ in \citet{fonda2016generalizing}. 
\end{proof}

Applying Lemma~\ref{lem:poincare-bohl} to the function $-\varphi_f(x)$ immediately gives that 
there is some  $x \in D_1\left( 0 \right) $ such that $-\varphi_f(x) = 0 \iff \varphi_f(x) =  0$, 
which is not allowed if $\varphi$ by recourse sensitivity. 

However, if we write this function as a linear function of a feature map
consisting of linear and quadratic terms. The feature map $g$ and coefficients 
are given by
\begin{align*}
    g(x)  = 
    \begin{bmatrix} 
        x_1 \\
        x_2 \\
        x_1^2 \\
        x_2^2 \\
        1
    \end{bmatrix},
    \quad \quad
    \beta = 
    \begin{bmatrix} 
        0 \\
        0 \\ 
        1 \\
        1 \\
        -1
    \end{bmatrix}
.\end{align*}
The function $f$ is then represented by 
\begin{align*}
    f(x) = \beta^{\top}g(x)
,\end{align*}
and we could provide recourse by communicating 
\begin{align*}
    \varphi_f(x) =  
    \begin{bmatrix} 
        -x_1 f(x) \\
        -x_2 f(x) \\
        -f(x) \\
        -f(x) \\
        0
    \end{bmatrix} 
.\end{align*}
This attribution will only be $0$ on the decision boundary, which is allowed, and in almost 
all other cases the first two components will point towards the decision boundary. The first 
two components are only zero when $x=0$. In that case the first two components do 
not point towards the decision boundary, but the final two components do provide information 
on which (higher-level) action has to be taken to change the class. 
Namely, it tells the user to increase the norm, in whatever way possible. 

The above argument shows that, if it is possible to write the function $f$ 
as some linear function $f(x) = \beta^{\top}g(x)$, it will be possible to 
provide recourse in terms of the higher level features 
of $g(x)$.

\end{document}